\def\arxivfmt{1} 

\if\arxivfmt{1}
\documentclass[12pt]{article}
\else
\documentclass{article}
\fi

\usepackage{microtype}
\usepackage{graphicx}
\usepackage{adjustbox}
\usepackage{booktabs} 

\usepackage{url}
\usepackage{mathrsfs}
\usepackage{amsmath, latexsym}
\usepackage{amscd}
\usepackage{amsbsy}
\usepackage{amssymb}
\usepackage{amsthm}
\usepackage{todonotes}
\usepackage{comment} 
\usepackage{hyperref}
\usepackage[capitalize]{cleveref}
\usepackage{nicefrac}
\usepackage[sort,numbers]{natbib}
\usepackage{graphicx}
\usepackage{tikz}
\usetikzlibrary{positioning,arrows}
\usepackage{amssymb,amsmath,amsthm}

\theoremstyle{plain}
\newtheorem{theorem}{Theorem}

\newtheorem{lemma}{Lemma}
\newtheorem{proposition}{Proposition}
\theoremstyle{definition}
\newtheorem{assumption}{Assumption}
\newtheorem{definition}{Definition}
\newtheorem{remark}[theorem]{Remark}

\usepackage{empheq}
\usepackage{shortcuts}
\usetikzlibrary{fit}

\usepackage{hyperref}

\usepackage{subcaption} 

\usepackage[Algorithm,ruled]{algorithm}
\usepackage{algorithmic}

\Crefname{assumption}{Assumption}{Assumptions}

\newcommand{\norm}[1]{\left\lVert#1\right\rVert}

\newcommand{\primalfeasible}{F(w)}

\newcommand{\defeq}{\vcentcolon=}
\newcommand{\Wset}{\mathcal{B}}
\newcommand{\Wsetctrlv}{\tilde{\mathcal{B}}}

\newcommand{\ie}{\emph{i.e.}}
\newcommand{\eg}{\emph{e.g.}}

\newcommand{\uset}{\mathcal{U}}
\newcommand{\wstat}{w}
\newcommand{\wmarg}{\tilde{w}}
\newcommand{\Wpi}{\beta}
\newcommand{\Wpimarg}{g}
\newcommand{\staterewards}{\Phi}
\newcommand{\abnd}{l}
\newcommand{\bbnd}{m}
\newcommand{\Pstat}{p^{\infty}}
\newcommand{\Pstatcompactb}{\stackrel{\infty_b}{p}}
\newcommand{\st}{\text{s.t. }}
\newcommand{\polval}{R}
\newcommand{\ts}{\textstyle}

\newcommand{\Ss}{ \mathcal S  }
\newcommand{\Uu}{ \mathcal U  }
\newcommand{\nS}{\vert \mathcal S \vert }

\newcommand{\nA}{\vert \mathcal A \vert }

\usepackage{wrapfig}



\ifcase\arxivfmt 0
\usepackage{icml2020}
\or 1
\usepackage{fullpage}
\usepackage{authblk}
\else
\fi

\if\arxivfmt 1
\usepackage{fullpage}
\usepackage{authblk}
\fi

\ifcase\arxivfmt 0 
\icmltitlerunning{Confounding-Robust Policy Evaluation in RL}
\else 

\fi

\begin{document}

\ifcase\arxivfmt 0 
\twocolumn[
\icmltitle{Confounding-Robust Policy Evaluation \\in Infinite-Horizon Reinforcement Learning}
\icmlsetsymbol{equal}{*}

\begin{icmlauthorlist}
	\icmlauthor{Aeiau Zzzz}{equal,to}
	\icmlauthor{Bauiu C.~Yyyy}{equal,to,goo}
	\icmlauthor{Cieua Vvvvv}{goo}
	\icmlauthor{Iaesut Saoeu}{ed}
	\icmlauthor{Fiuea Rrrr}{to}
	\icmlauthor{Tateu H.~Yasehe}{ed,to,goo}
	\icmlauthor{Aaoeu Iasoh}{goo}
	\icmlauthor{Buiui Eueu}{ed}
	\icmlauthor{Aeuia Zzzz}{ed}
	\icmlauthor{Bieea C.~Yyyy}{to,goo}
	\icmlauthor{Teoau Xxxx}{ed}
	\icmlauthor{Eee Pppp}{ed}
\end{icmlauthorlist}

\icmlaffiliation{to}{Department of Computation, University of Torontoland, Torontoland, Canada}
\icmlaffiliation{goo}{Googol ShallowMind, New London, Michigan, USA}
\icmlaffiliation{ed}{School of Computation, University of Edenborrow, Edenborrow, United Kingdom}

\icmlcorrespondingauthor{Cieua Vvvvv}{c.vvvvv@googol.com}
\icmlcorrespondingauthor{Eee Pppp}{ep@eden.co.uk}

\icmlkeywords{Machine Learning, ICML}

\vskip 0.3in
]



\printAffiliationsAndNotice{\icmlEqualContribution} 
\or
\title{Confounding-Robust Policy Evaluation \\in Infinite-Horizon Reinforcement Learning}
\author[1,2]{Nathan Kallus\thanks{kallus@cornell.edu}}
\author[1,2]{Angela Zhou\thanks{az434@cornell.edu}}
\affil[1]{School of Operations Research and Information Engineering, Cornell University}
\affil[2]{Cornell Tech, Cornell University}
\renewcommand\Authands{ and }
\date{}
\maketitle\vspace{-\baselineskip}
\large
\else
\title{Confounding-Robust Policy Evaluation \\in Infinite-Horizon Reinforcement Learning}
\author[1,2]{Nathan Kallus\thanks{kallus@cornell.edu}}
\author[1,2]{Angela Zhou\thanks{az434@cornell.edu}}
\affil[1]{School of Operations Research and Information Engineering, Cornell University}
\affil[2]{Cornell Tech, Cornell University}
\renewcommand\Authands{ and }
\date{}
\maketitle\vspace{-\baselineskip}
\large
\maketitle
\fi

\begin{abstract}
	\ifcase\arxivfmt0
\or 1
\large
\else 
	\fi
	\normalsize
Off-policy evaluation of sequential decision policies from observational data is necessary in applications of batch reinforcement learning such as education and healthcare. In such settings, however, unobserved variables confound observed actions, rendering exact evaluation of new policies impossible, \ie, unidentifiable. We develop a robust approach that estimates sharp bounds on the (unidentifiable) value of a given policy in an infinite-horizon problem given data from another policy with unobserved confounding, subject to a sensitivity model. We consider stationary or baseline unobserved confounding
and compute bounds by optimizing over
the set of all stationary state-occupancy ratios that agree with a new partially identified estimating equation and the sensitivity model.
We prove convergence to the sharp bounds as we collect more confounded data. Although checking set membership is a linear program, the support function is given by a difficult nonconvex optimization problem. We 
develop approximations based on nonconvex projected gradient descent and demonstrate the resulting bounds empirically.
\end{abstract}

\section{Introduction}
\vspace{-5pt}
Evaluation of sequential decision-making policies under uncertainty is a fundamental problem for learning sequential decision policies from observational data, as is necessarily the case in application areas such as education and healthcare \citet{jl16,pb2016,pss2001}. However, with a few exceptions, the literature on off-policy evaluation in reinforcement learning (RL) assumes (implicitly or otherwise) the \textit{absence} of unobserved confounders, auxiliary state information that affects both the policy that generated the original data as well as transitions to the next state. Precisely in the same important domains where observational off-policy evaluation 
is \textit{necessary} due to the cost of or ethical constraints on experimentation, such as in healthcare \citep{raghu2017deep,prasad2017reinforcement} or operations, it is \textit{also} generally the case that \textit{unobserved confounders} are present. 
This contributes to fundamental challenges for advancing reinforcement learning in observational settings \citep{gottesman2019guidelines}. 

In this work, we study partial identification in RL off-policy evaluation under unobserved confounding, focusing specifically on the \textit{infinite-horizon} setting. Recognizing that policy value cannot actually be point-identified from confounded observational data, we propose instead to compute the sharpest bounds on policy value that can be supported by the data and any assumptions on confounding. This can then support credible conclusions about policy value from the data and can ensure safety in downstream policy learning.

Recent advancements \citep{hallak2017consistent,gelada2019off,liu2018breaking,kallus2019efficiently} improve variance reduction of \emph{unconfounded} off-policy evaluation by estimating density ratios on the \textit{stationary} occupancy distribution. But this assumes unconfounded data. Other advances \citep{kallus2018confounding} tackle partial identification of policy values from confounded data but in the logged bandit setting (single decision point) rather than the RL setting (many or infinite decision points). Our work can be framed as appropriately combining these perspectives.

Our contributions are as follows: we establish a partially identified estimating equation that allows for the estimation of sharp bounds. We provide tractable approximations of the resulting difficult non-convex program based on non-convex first order methods. We then demonstrate the approach on a gridworld task with unobserved confounding.

\vspace{-5pt}
\section{Problem setup} 
\vspace{-5pt}
We assume data is generated from an infinite-horizon MDP with an augmented state space: $\Ss$ is the space of the \textit{observed} portion of the state and $\Uu$ is the space of the \textit{unobserved} (confounding) portion of the state. 
We assume the standard decision protocol for MDPs on the full-information state space $\Ss \times \Uu$: at each decision epoch, the system occupies state $s_t,u_t$, the decision-maker receives a reward $\staterewards(s_t)$ for being in state $s_t$ and chooses an action, $a_t$, from allowable actions.
Then the system transitions to the next state on $\Ss \times \mathcal U$, with the (unknown) transition probability $p(s',u' \mid s,u,a)$. 
The full-information MDP is represented by the tuple $M= (\mathcal S \times \mathcal U, \mathcal A, P, \Phi)$. We let $\mathcal{H}_t = \{ (s_0, u_0, a_0), \dots, (s_t, u_t, a_t) \}$ denote the (inaccessible) full-information history up to time $t$. A policy $\pi(a \mid s,u)$ is an assignment to the probability of taking action $a$ in state $(s,u)$. 
For any policy,
	the underlying dynamics are Markovian under full observation of states and transitions:
	$s_t \indep \mathcal{H}_{t-2} \mid (s_{t-1}, u_{t-1}), a_{t-1} $.

In the off-policy evaluation setting, we consider the case where the observational data are generated under an unknown \textit{behavior policy} $\pi_b$, while we are interested in evaluating the (known) \emph{evaluation policy} $\pi_e$, which only depends on the observed state, $\pi_e(a\mid s,u)=\pi_e(a\mid s)$. Both policies are assumed stationary (time invariant). The observational dataset does not have full information and comprises solely of  observed states and actions, that is, $(s_0, a_0), \dots, (s_t, a_t)$.\footnote{Our model differs from typical POMDPs \citep{kaelbling1998planning}, since rewards are a function of observed state, as we clarify in the related work, \Cref{sec-related-work}.}
Thus, since the action also depends on the unobserved state $u_t$, we have that transition to next states are \emph{confounded} by $u_t$.

Notationally, we reserve $s,u$ (respectively, $s', u'$) for the random variables representing state (respectively, next state) and we refer to realized observed state values (respectively, next observed state values) using $j$ (respectively, $k$). 
We assume that $\Ss$ is a discrete state space, while $\Uu$ may be general.

We next discuss regularity conditions on the MDP structure
which ensure ergodicity and that the limiting state-action occupancy frequencies exist. We assume that the Markov chain induced by $\pi_e$ and any $\pi_b$ is a positive Harris chain, so the stationary distribution exists.
\begin{assumption}[Ergodic MDP]\label{asn-recurrent}
The MDP $M$ is ergodic: 
the Markov chains induced by $\pi_b$ and $\pi_e$ is 
positive Harris recurrent.
\end{assumption}

	In this work, we focus on the infinite-horizon setting. 
	Let $p_{\pi}^{(t)}(s)$ be the distribution of state $s_t$ when executing policy $\pi$, starting from initial state $s_0$ drawn from an initial distribution over states. 
    Then the average \textit{state-action-next-state} visitation distribution exists, and under \Cref{asn-recurrent} the (long-run average) \emph{value} of a stationary policy $\pi_e$ is given by an expectation with respect to the marginalized state visitation distribution: 
\begin{equation*}\ts  p_{\pi_e}^{\infty}(s,u,a,s',u') = \lim_{T\to \infty} \frac1T{\sum_{t=0}^T  p_{\pi_e}^{(t)}(s,u,a,s',u')  },
\qquad \ts \polval_{e} = \E_{s \sim \Pstat_{\pi_e}}[ \staterewards(s)].
\end{equation*}
We similarly define the marginalized total-, unobserved-, and observed-state occupancy distributions as $p_{\pi}^{\infty}(s,u)$, $p_{\pi}^{\infty}(u)$, and $p_{\pi}^{\infty}(s)$, given by appropriately marginalizing the above.
Notice we assumed that the reward only depends on the \emph{observed} state\footnote{This does not preclude, however, dependence on action: if we are given observed-state-action reward function $\staterewards'(s,a)$, we may simply define $\staterewards(s)=\sum_a\pi_e(a\mid s)\staterewards'(s,a)$, since $\pi_e(a\mid s)$ is assumed known. Then $\polval_{e}$ gives $\pi_e$'s value with respect to the given observed-state-action reward function.}.

Notationally, $\E$ denotes taking expectations over the joint stationary occupancy distribution of the behavior policy, where self-evident. We denote $\Pstat_e, \Pstat_b$ for visitation distributions induced under $\pi_e, \pi_b$. Since at times it is useful to distinguish between expectation over the marginalized occupancy distribution $\Pstat_b(s,a,s')$, and total expectation over full-information transitions $\Pstat_b(s,u,a,s',u')$, we include additional subscripts on the expectation whenever this is clarifying.

If we were able to actually run the MDP using the policy $\pi_e$, which is only a function of $s$, the dynamics would be Markovian with the marginalized transition probabilities:
		$$ \ts p(k\mid j,a)  \defeq \sum_{u',u''}  p(k,u'' \mid j,u', a) \Pstat_{e} (u'\mid j) $$
Note that $p(k\mid j,a)$ is \emph{not} identifiable from the observational data collected under $\pi_b$. We analogously define (partially) marginalized transition probabilities $ p(k \mid j, u', a) $.
\vspace{-5pt}
\section{Off-policy evaluation under unobserved confounding}\label{sec-ope-unobs-conf} 
\vspace{-5pt}
In the following, we first discuss a \textit{population} viewpoint, computing expectations with respect to the true marginalized stationary occupancy distribution $\Pstat_b(s,a,s')$ (identifiable from the data). We discuss finite-sample considerations in \cref{sec-cons-sharpness}.

Given sample trajectories generated from $\pi_b$,
the goal of off-policy evaluation is to estimate $\polval_e$, the value of a known (observed-state-dependent) evaluation policy $\pi_e(a \mid s)$. The full-information stationary density ratio is $\wstat(s,u)$. If $\wstat(s,u)$ \textit{were} known, we could use it to estimate policy value under $\pi_e$ using samples from $\pi_b$ by a simple density ratio argument:
$$ \ts \wstat(s,u)  =
 \frac{p_{e}^{(\infty)} (s,u)  }{ p_{b}^{(\infty)} (s,u)    }, \qquad  \polval_{e} = \E
 [\wstat(s,u)  \staterewards(s)]$$
From observational data, we are only able to estimate the \emph{marginalized} behavior policy, $\pi_b(a \mid s) = \E[ \pi_b(a \mid s,u) \mid s], $ which is insufficient for identifying the policy value or the true marginalized transition probabilities.

\subsection{Model restrictions on unobserved confounding } 
\vspace{-5pt}
To make progress, we introduce restrictions on the underlying dynamics of the unobserved confounder, $u$, under which we will conduct our evaluation. In particular, we will seek to compute the range of all values of $R_e$ that match our data, encapsulated in $\Pstat_b(s,a,s')$, and the following structural assumptions.

\begin{assumption}[Memoryless Unobserved Confounding]\label{asn-no-time-dependent-confounding}
\begin{equation*} \textstyle 
\frac{p_{e}^{(\infty)} (s,u)  }{ p_{b}^{(\infty)} (s,u)}
=\frac{p_{e}^{(\infty)} (s,u')  }{ p_{b}^{(\infty)} (s,u')}
\qquad \forall s\in\Ss,u, u' \in \uset
\end{equation*}
\end{assumption}
\begin{lemma}\label{lemma-suff-asn-no-time-dependent}
	\Cref{asn-no-time-dependent-confounding} holds if the MDP transitions satisfy $$\ts
	~p(j,u' \mid k,u, a) = ~p(j,\tilde u' \mid k,u, a), \; \forall u',\tilde u' \in \uset.$$
\end{lemma}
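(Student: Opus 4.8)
The plan is to show that the hypothesized one-step condition forces the stationary occupancy to factor into an observed-state marginal times a fixed unobserved-state marginal that is \emph{the same} under $\pi_b$ and $\pi_e$; once that factorization is in hand, \Cref{asn-no-time-dependent-confounding} is immediate, since the shared unobserved factor cancels in the density ratio. First I would unpack the hypothesis: holding the current observed state, current unobserved state, and action fixed while varying only the realized next unobserved value shows that the joint transition kernel is invariant to the next unobserved state, which is equivalent to a factorization
\[
p(s',u'\mid s,u,a)=\mu(u')\,p(s'\mid s,u,a),\qquad p(s'\mid s,u,a)\defeq\int_{\uset}p(s',u'\mid s,u,a)\,du',
\]
for a fixed law $\mu$ on $\uset$ that depends on neither the current state--action pair, the next observed state, nor the policy, because $\mu$ is a property of the transition kernel alone. (Read literally, ``constant in $u'$'' pins $\mu$ to the reference/uniform law; only its policy-independence is used below.)

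Next I would invoke \Cref{asn-recurrent}: for each stationary policy $\pi\in\{\pi_b,\pi_e\}$ the induced chain is positive Harris recurrent, so a unique stationary occupancy $p_\pi^{(\infty)}(s,u)$ exists and satisfies the balance equation
\[
p_\pi^{(\infty)}(s',u')=\sum_{s}\sum_{a}\int_{\uset}p_\pi^{(\infty)}(s,u)\,\pi(a\mid s,u)\,p(s',u'\mid s,u,a)\,du .
\]
Substituting the factorization pulls $\mu(u')$ outside the sum and integral, giving $p_\pi^{(\infty)}(s',u')=\mu(u')\,h_\pi(s')$ where $h_\pi(s')=\sum_{s,a}\int_{\uset}p_\pi^{(\infty)}(s,u)\,\pi(a\mid s,u)\,p(s'\mid s,u,a)\,du$. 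Integrating both sides over $u'$ and using $\int_{\uset}\mu(u')\,du'=1$ identifies $h_\pi(s')=p_\pi^{(\infty)}(s')$, so the stationary occupancy factors as
\[
p_\pi^{(\infty)}(s,u)=\mu(u)\,p_\pi^{(\infty)}(s)\qquad\text{for }\pi\in\{\pi_b,\pi_e\},
\]
with the \emph{same} $\mu$ in both cases.

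Finally I would form the density ratio and read off the claim: writing $p_e^{(\infty)}\equiv p_{\pi_e}^{(\infty)}$ and $p_b^{(\infty)}\equiv p_{\pi_b}^{(\infty)}$ as in \Cref{asn-no-time-dependent-confounding}, the shared factor cancels,
\[
\wstat(s,u)=\frac{p_e^{(\infty)}(s,u)}{p_b^{(\infty)}(s,u)}=\frac{\mu(u)\,p_e^{(\infty)}(s)}{\mu(u)\,p_b^{(\infty)}(s)}=\frac{p_e^{(\infty)}(s)}{p_b^{(\infty)}(s)},
\]
which is manifestly independent of $u$ and hence equal for all $u,u'$, i.e.\ exactly \Cref{asn-no-time-dependent-confounding}. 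The main obstacle I anticipate is not the algebra but making the factorization rigorous when $\uset$ is a general (continuous) space: one must justify interchanging the $u'$-integration with the balance recursion, confirm that ``invariant to $u'$'' is the intended measure-theoretic reading (it forces $\mu$ to be uniform with respect to the reference measure, which tacitly requires that measure to be finite and compatible with the assumed existence of densities), and verify that $\mu$ genuinely carries no policy dependence. Everything else reduces to the uniqueness of the stationary law supplied by \Cref{asn-recurrent} together with the cancellation above.
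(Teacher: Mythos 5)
Your proof is correct, but it takes a genuinely different route from the paper's. The paper never factorizes the kernel or touches the balance equation: it writes $w(j,u)$ as the limit of the ratio of Ces\`aro-averaged occupancies, expands each occupancy one step backward as $\sum_{a,k,u'} p(j,u\mid k,u',a)\,p^{t-1}_\pi(k,u',a)$, and then uses the hypothesis $p(j,u\mid k,u',a)=p(j,u''\mid k,u',a)$ to replace $u$ by $u''$ simultaneously in numerator and denominator, concluding $w(j,u)=w(j,u'')$ in two lines. You instead extract the factorization $p(s',u'\mid s,u,a)=\mu(u')\,p(s'\mid s,u,a)$ with a policy-independent $\mu$, push it through the stationarity equation (with uniqueness supplied by \Cref{asn-recurrent}) to obtain the product form $p_\pi^{(\infty)}(s,u)=\mu(u)\,p_\pi^{(\infty)}(s)$ under both $\pi_b$ and $\pi_e$, and cancel. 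Your route proves strictly more --- it shows the unobserved coordinate is exogenous in steady state with the \emph{same} marginal under both policies, which is a nice structural explanation of why \Cref{asn-no-time-dependent-confounding} holds --- but it pays with the normalization of $\mu$: as you yourself flag, a density constant in $u'$ is normalizable only when the reference measure on $\uset$ is finite, so your factorization step quietly restricts the ``general $\Uu$'' the paper permits. The paper's substitution argument sidesteps this entirely, since it only ever compares kernel values at two $u$-arguments inside a ratio and never needs $\mu$ to exist as a law; it also needs no interchange-of-limits care beyond the Ces\`aro limits already guaranteed by \Cref{asn-recurrent}. Your remaining caveats (Tonelli for the balance recursion, $\mu(u)>0$ a.e.\ for the cancellation) are harmless by nonnegativity and by $\mu$ being the normalized reference measure, so none of this is a gap --- just a trade of extra structural insight for slightly less generality.
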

\ifcase\arxivfmt 0
\begin{figure}[t!]
	\centering
	\begin{tikzpicture}[%
	>=latex',node distance=2cm, minimum height=0.75cm, minimum width=0.75cm,
	state/.style={draw, shape=circle, draw=black, fill=green!2, line width=0.5pt},
	u/.style={draw, shape=circle, draw=black, fill=purple!2, line width=0.5pt},
	action/.style={draw, shape=rectangle, draw=red, fill=red!2, line width=0.5pt},
	reward/.style={draw, shape=rectangle, draw=blue, fill=blue!2, line width=0.5pt},
	scale = 0.8, transform shape
	]
	\node[state] (S0) at (0,0) {$s_0$};
	\node[action,right of=S0] (A0) {$a_0$};
	\node[u,below  of=A0] (u0) {$u_0$};
	\node[state,right of=A0] (S1) {$s_1$};
	\node[action,right of=S1] (A1) {$a_1$};
	\node[u,below  of=A1] (u1) {$u_1$};
	\node[state,right of=A1] (S2) {$s_2$};
	\draw[->] (S0) -- (A0);
	\draw[->] (u0) -- (A0);
	\draw[->] (u0) -- (S1);
	\draw[->] (S0) edge[bend left=30] (S1);
	\draw[->] (A0) edge[bend left=30] (S1);
	\draw[->] (S1) -- (A1);
	\draw[->] (u1) -- (A1);
	\draw[->] (u1) -- (S2);
	\draw[->] (S1) edge[bend left=30] (S2);
	\draw[->] (A1) edge[bend left=30] (S2);
	\end{tikzpicture}\\[0.5em]
	\begin{tikzpicture}[%
	>=latex',node distance=2cm, minimum height=0.75cm, minimum width=0.75cm,
	state/.style={draw, shape=circle, draw=black, fill=green!2, line width=0.5pt},
	u/.style={draw, shape=rectangle, rounded corners=.2cm, draw=black, fill=purple!2, line width=0.5pt},
	action/.style={draw, shape=rectangle, draw=red, fill=red!2, line width=0.5pt},
	reward/.style={draw, shape=rectangle, draw=blue, fill=blue!2, line width=0.5pt},
	scale = 0.8, transform shape
	]
	\node[state] (S0) at (0,0) {$s_0$};
	\node[action,right of=S0] (A0) {$a_0$};
	\node[u,below  of=S1] (u0) {$u_0=u_1=u_2=\cdots$};
	\node[state,right of=A0] (S1) {$s_1$};
	\node[action,right of=S1] (A1) {$a_1$};
	\node[state,right of=A1] (S2) {$s_2$};
	\draw[->] (S0) -- (A0);
	\draw[->] (u0) -- (A0);
	\draw[->] (u0) -- (S1);
	\draw[->] (S0) edge[bend left=30] (S1);
	\draw[->] (A0) edge[bend left=30] (S1);
	\draw[->] (S1) -- (A1);
	\draw[->] (u0) -- (A1);
	\draw[->] (u0) -- (S2);
	\draw[->] (S1) edge[bend left=30] (S2);
	\draw[->] (A1) edge[bend left=30] (S2);
	\end{tikzpicture}
	\caption{Two causal models satisfying \Cref{asn-no-time-dependent-confounding}.}\label{fig-dag-rep} 
\end{figure}
\or 1
\begin{figure}[t!]
	\centering
	\begin{tikzpicture}[%
	>=latex',node distance=2cm, minimum height=0.75cm, minimum width=0.75cm,
	state/.style={draw, shape=circle, draw=black, fill=green!2, line width=0.5pt},
	u/.style={draw, shape=circle, draw=black, fill=purple!2, line width=0.5pt},
	action/.style={draw, shape=rectangle, draw=red, fill=red!2, line width=0.5pt},
	reward/.style={draw, shape=rectangle, draw=blue, fill=blue!2, line width=0.5pt},
	scale = 0.8, transform shape
	]
	\node[state] (S0) at (0,0) {$s_0$};
	\node[action,right of=S0] (A0) {$a_0$};
	\node[u,below  of=A0] (u0) {$u_0$};
	\node[state,right of=A0] (S1) {$s_1$};
	\node[action,right of=S1] (A1) {$a_1$};
	\node[u,below  of=A1] (u1) {$u_1$};
	\node[state,right of=A1] (S2) {$s_2$};
	\draw[->] (S0) -- (A0);
	\draw[->] (u0) -- (A0);
	\draw[->] (u0) -- (S1);
	\draw[->] (S0) edge[bend left=30] (S1);
	\draw[->] (A0) edge[bend left=30] (S1);
	\draw[->] (S1) -- (A1);
	\draw[->] (u1) -- (A1);
	\draw[->] (u1) -- (S2);
	\draw[->] (S1) edge[bend left=30] (S2);
	\draw[->] (A1) edge[bend left=30] (S2);
	\end{tikzpicture}\\[0.5em]
	\begin{tikzpicture}[%
	>=latex',node distance=2cm, minimum height=0.75cm, minimum width=0.75cm,
	state/.style={draw, shape=circle, draw=black, fill=green!2, line width=0.5pt},
	u/.style={draw, shape=rectangle, rounded corners=.2cm, draw=black, fill=purple!2, line width=0.5pt},
	action/.style={draw, shape=rectangle, draw=red, fill=red!2, line width=0.5pt},
	reward/.style={draw, shape=rectangle, draw=blue, fill=blue!2, line width=0.5pt},
	scale = 0.8, transform shape
	]
	\node[state] (S0) at (0,0) {$s_0$};
	\node[action,right of=S0] (A0) {$a_0$};
	\node[u,below  of=S1] (u0) {$u_0=u_1=u_2=\cdots$};
	\node[state,right of=A0] (S1) {$s_1$};
	\node[action,right of=S1] (A1) {$a_1$};
	\node[state,right of=A1] (S2) {$s_2$};
	\draw[->] (S0) -- (A0);
	\draw[->] (u0) -- (A0);
	\draw[->] (u0) -- (S1);
	\draw[->] (S0) edge[bend left=30] (S1);
	\draw[->] (A0) edge[bend left=30] (S1);
	\draw[->] (S1) -- (A1);
	\draw[->] (u0) -- (A1);
	\draw[->] (u0) -- (S2);
	\draw[->] (S1) edge[bend left=30] (S2);
	\draw[->] (A1) edge[bend left=30] (S2);
	\end{tikzpicture}
	\caption{Two causal models satisfying \Cref{asn-no-time-dependent-confounding}.}\label{fig-dag-rep} 
\end{figure}
\else
\begin{figure}[t!]
	\begin{tikzpicture}[%
	>=latex',node distance=2cm, minimum height=0.75cm, minimum width=0.75cm,
	state/.style={draw, shape=circle, draw=black, fill=green!2, line width=0.5pt},
	u/.style={draw, shape=circle, draw=black, fill=purple!2, line width=0.5pt},
	action/.style={draw, shape=rectangle, draw=red, fill=red!2, line width=0.5pt},
	reward/.style={draw, shape=rectangle, draw=blue, fill=blue!2, line width=0.5pt},
	scale = 0.75, transform shape
	]
	\node[state] (S0) at (0,0) {$s_0$};
	\node[action,right of=S0] (A0) {$a_0$};
	\node[u,below  of=A0] (u0) {$u_0$};
	\node[state,right of=A0] (S1) {$s_1$};
	\node[action,right of=S1] (A1) {$a_1$};
	\node[u,below  of=A1] (u1) {$u_1$};
	\node[state,right of=A1] (S2) {$s_2$};
	\draw[->] (S0) -- (A0);
	\draw[->] (u0) -- (A0);
	\draw[->] (u0) -- (S1);
	\draw[->] (S0) edge[bend left=30] (S1);
	\draw[->] (A0) edge[bend left=30] (S1);
	\draw[->] (S1) -- (A1);
	\draw[->] (u1) -- (A1);
	\draw[->] (u1) -- (S2);
	\draw[->] (S1) edge[bend left=30] (S2);
	\draw[->] (A1) edge[bend left=30] (S2);
	\end{tikzpicture}$\;\;\qquad\;$\begin{tikzpicture}[%
	>=latex',node distance=2cm, minimum height=0.75cm, minimum width=0.75cm,
	state/.style={draw, shape=circle, draw=black, fill=green!2, line width=0.5pt},
	u/.style={draw, shape=rectangle, rounded corners=.2cm, draw=black, fill=purple!2, line width=0.5pt},
	action/.style={draw, shape=rectangle, draw=red, fill=red!2, line width=0.5pt},
	reward/.style={draw, shape=rectangle, draw=blue, fill=blue!2, line width=0.5pt},
	scale = 0.75, transform shape
	]
	\node[state] (S0) at (0,0) {$s_0$};
	\node[action,right of=S0] (A0) {$a_0$};
	\node[u,below  of=S1] (u0) {$u_0=u_1=u_2=\cdots$};
	\node[state,right of=A0] (S1) {$s_1$};
	\node[action,right of=S1] (A1) {$a_1$};
	\node[state,right of=A1] (S2) {$s_2$};
	\draw[->] (S0) -- (A0);
	\draw[->] (u0) -- (A0);
	\draw[->] (u0) -- (S1);
	\draw[->] (S0) edge[bend left=30] (S1);
	\draw[->] (A0) edge[bend left=30] (S1);
	\draw[->] (S1) -- (A1);
	\draw[->] (u0) -- (A1);
	\draw[->] (u0) -- (S2);
	\draw[->] (S1) edge[bend left=30] (S2);
	\draw[->] (A1) edge[bend left=30] (S2);
	\end{tikzpicture}
	\caption{Two causal models satisfying \Cref{asn-no-time-dependent-confounding}.}\label{fig-dag-rep} 
\end{figure}
\fi

\Cref{asn-no-time-dependent-confounding} essentially requires no time-varying confounders, \ie, confounders that are influenced by past actions. This assumption may appear strong but it is necessary: if confounders could be time-varying and the dependence on them may be arbitrary, we may need to be ``exponentially conservative'' in accounting for them (or even ``infinitely conservative'' in the infinite-horizon case)\footnote{We discuss this in \cref{sec-apx-fh} of the appendix in considering a finite-horizon off-policy estimate.}. 

For example, \Cref{asn-no-time-dependent-confounding} holds if there is just some baseline unobserved confounder for each individual, or if 
the unobserved confounder is exogenously drawn at each timestep (which satisfies the sufficient condition of \cref{lemma-suff-asn-no-time-dependent}).
These examples are shown in \Cref{fig-dag-rep}. A combination of a baseline unobserved confounder as well as exogenous confounders at each timestep is also allowed. Such MDPs would satisfy \Cref{asn-no-time-dependent-confounding} for \emph{any} two policies. In healthcare, baseline confounders such as socio-economic status or risk/toxicity preferences, confound actions by affecting access to doctors and treatments, or by affecting choices between intensive or conservative treatments. For example, \cite{yang2018sensitivity} identifies psychosocial factors and underlying comorbidities as baseline (unobserved) confounders in assessing sequential HIV treatment. \Cref{lemma-suff-asn-no-time-dependent} also holds if a confounded behavior policy arises from agents optimizing upon exogenous realizations of private information, as in econometric discrete choice models. 

Under \Cref{asn-no-time-dependent-confounding}, we simply define $w(s)=w(s,u)$ as it does not depend on $u$.
Note that $w(s)$ remains unidentifiable even under \Cref{asn-no-time-dependent-confounding}.
\vspace{-5pt}
\subsection{Sensitivity model}\label{sec-ope-sensitivity}
\vspace{-5pt}
Next, we introduce a sensitivity model to control the level of assumed dependence of the behavior policy on the unobserved confounders. Sensitivity analysis allows a practitioner to assess how conclusions might change for reasonable ranges of $\Gamma$. Following \citep{kallus2018confounding,aronowlee12} we phrase this as lower and upper bounds on the (unknown) inverse behavior policy,
 $\abnd(a\mid s),\bbnd(a \mid s)$:\footnote{Our approach immediately generalizes to any linearly representable ambiguity set.}
\begin{equation}\label{eq:boundset}
\ts\Wpi( a \mid s, u) := \pi_{b}(a  \mid s, u )^{-1}, \qquad \abnd(a \mid s) \leq \Wpi(a \mid s, u) \leq \bbnd(a \mid s)\quad\forall a,s,u. 
\end{equation}
The set $\Wset$ consists of all functions $\Wpi(a \mid s, u)$ that satisfy \cref{eq:boundset}. This ambiguity set is motivated by a sensitivity model used in causal inference, which restricts how far propensities can vary pointwise from the nominal propensities \citep{tan} and which has also been used in the logged bandit setting \citep{kallus2018confounding}.
Given a sensitivity parameter $\Gamma\geq1$ that controls the amounts of allowed confounding, the marginal sensitivity model posits the following odds-ratio restriction: 
	\begin{equation}\label{oddsratio}
\ts \Gamma^{-1}\leq
\frac{ (1- \pi_b(a \mid s)  ) \pi_b(a \mid s,u) }{ \pi_b(a \mid s)   (1- \pi_b(a \mid s,u) )  } 
\leq \Gamma,
\quad \forall a,s,u.
\end{equation}
\cref{oddsratio} is equivalent to saying that \cref{eq:boundset} holds with 
	\ifcase\arxivfmt0
\begin{align*}l(a\mid s)&={\Gamma/(\pi_b(a \mid s))+1-\Gamma},\\m(a\mid s)&={1/(\Gamma\pi_b(a \mid s))+1-1/\Gamma}.\end{align*}
\or 1
\begin{align*}l(a\mid s)&={\Gamma/(\pi_b(a \mid s))+1-\Gamma},\\m(a\mid s)&={1/(\Gamma\pi_b(a \mid s))+1-1/\Gamma}.\end{align*}
\else 
\begin{align*}l(a\mid s)={\Gamma/(\pi_b(a \mid s))+1-\Gamma},\qquad m(a\mid s)={1/(\Gamma\pi_b(a \mid s))+1-1/\Gamma}.\end{align*}
\fi
Lastly, $\Wpi$ functions which are themselves valid inverse probability distributions must satisfy the next-state conditional constraints: 
\begin{equation}\label{eqn-conditional-ctrlvar} 
\ts \E_{s,u, a, s' \sim p_b^\infty}[ \frac{ \mathbb{I}[a=a']}{\pi_b(a' \mid s,u)} \mid s'=k ] =\Pstat_b(k\mid a') 
~~~ \forall k,a'
\end{equation}
We let $\Wsetctrlv$ denote the set of all functions $\Wpi(a \mid s, u)$ that satisfy \emph{both} \cref{eq:boundset,eqn-conditional-ctrlvar}. 

\vspace{-5pt}
\subsection{The partially identified set}\label{sec-pi-set}
\vspace{-5pt}
Given the above restrictions, we can define the set of partially identified evaluation policy values.
To do so, suppose we are given a target behavior policy $\pi_e$, the observed stationary distribution $p_b^\infty(s,a,s')$, and bounds $\abnd(a \mid s),\bbnd(a \mid s)$ on $\beta$. We are then concerned with what $w$ could be, given the degrees of freedom that remain. So, we define the following set for what values $w$ can take:\footnote{Note that, as defined, $\Theta$ is a set of functions of $(s,u)$ but because we enforce \cref{asn-no-time-dependent-confounding}, all members are constant with respect to $u$ for each $s$; we therefore often implicitly refer to it as a set of functions of $s$ alone.}
$$
\Theta=\braces{
	\frac{p_{e}^{(\infty)}(s,u)}{p_{b}^{(\infty)} (s,u)}
	~~:~~
	\parbox{10cm}{
		$M$ is an MDP,
		
		$M$ satisfies \cref{asn-recurrent,asn-no-time-dependent-confounding} with respect to $\pi_b$ and the given $\pi_e$,
		
		$\pi_b(a\mid s,u)$ is a stationary policy with $\Wpi\in\Wset$
		and $p_b^\infty(s,a,s')$ as given
		
	}
}
$$

We are then interested in determining the largest and smallest that $R_e$ can be. That is, we are interested in
\begin{equation}\label{eq:Re}
\underline R_e=\inf_{w\in\Theta}\E[w(s)\Phi(s)],\quad\overline R_e=\sup_{w\in\Theta}\E[w(s)\Phi(s)].
\end{equation}
Notice that this is equivalent to computing the \emph{support function} of $\Theta$ at $-\Phi$ and $\Phi$ with respect to the $L_2$ inner product defined by $p_b^\infty(s)$, $\ip{f}{g}=\E[f(s)g(s)]=\sum_jp_b^\infty(j)f(j)g(j)$.
The support function of a set $\mathcal S$ is $\psi(v)=\sup_{s\in\mathcal S}\ip v s$ \citep{rockafellar1970convex}.

\section{Characterizing the partially identified set}

In this section we derive a linear program to check membership in $\Theta$ for any given $w$.
\vspace{-5pt}
\paragraph{The partially identified estimating equation}
We begin by showing that $w$ is uniquely characterized by an estimating equation characterizing its stationarity, but where some parts of the equation are not actually known.

\begin{lemma}\label{prop-lumpable-infhorizon}
Suppose \Cref{asn-recurrent,asn-no-time-dependent-confounding} hold. Then $\wstat(s)=\frac{p_{e}^{(\infty)}(s,u)}{p_{b}^{(\infty)} (s,u)}$ $\forall s,u$  if and only if
\begin{align}
&\E
[ \pi_e(a \mid s) \wstat (s) \Wpi(a \mid s,u)  \mid s' = k ]= \wstat(k)~~\forall k,
 \label{eqn-esteqn-w}\\
 &\E[w(s)] = 1.\label{eqn-esteqn-w-1}
\end{align}
\end{lemma}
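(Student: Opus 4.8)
The plan is to reduce both directions of the ``if and only if'' to a single statement about the stationary distribution of one finite stochastic matrix on the observed states $\mathcal S$. Writing $p_b^\infty(s,u,a,k)$ for the behavior stationary joint over the current full state, the action, and the next observed state, I would introduce
$$P_{s,k} = \frac{1}{p_b^\infty(s)}\sum_{u,a}\pi_e(a\mid s)\,\beta(a\mid s,u)\,p_b^\infty(s,u,a,k).$$
First I would check that $P$ is row-stochastic: summing over $k$ collapses $p_b^\infty(s,u,a,k)$ to $p_b^\infty(s,u,a)=p_b^\infty(s,u)\pi_b(a\mid s,u)$, and since $\beta(a\mid s,u)\pi_b(a\mid s,u)=1$ and $\sum_a\pi_e(a\mid s)=1$, each row sums to $p_b^\infty(s)/p_b^\infty(s)=1$. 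Unpacking the conditional expectation in \cref{eqn-esteqn-w} as a sum against $p_b^\infty(s,u,a\mid s'=k)=p_b^\infty(s,u,a,k)/p_b^\infty(k)$ shows, purely algebraically, that \cref{eqn-esteqn-w} is equivalent to $\nu=\nu P$ for the (signed) row vector $\nu(s)=w(s)p_b^\infty(s)$; no assumption beyond the definition of $\beta$ is needed at this stage. Using $\beta\pi_b=1$ once more gives $P_{s,k}=\sum_{u,a}\pi_e(a\mid s)\,p_b^\infty(u\mid s)\,p(k\mid s,u,a)$, and the key consequence of \cref{asn-no-time-dependent-confounding} is that the density ratio being constant in $u$ forces $p_b^\infty(u\mid s)=p_e^\infty(u\mid s)$; hence $P$ is exactly the observed-state transition kernel induced by running $\pi_e$ in stationarity.

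For the forward direction I would take $w^\star(s)=p_e^\infty(s,u)/p_b^\infty(s,u)$, which by \cref{asn-no-time-dependent-confounding} is a function of $s$ alone and equals $p_e^\infty(s)/p_b^\infty(s)$. Starting from the full-state balance equation for $p_e^\infty$ under $\pi_e$, substituting $p_e^\infty(s,u)=w^\star(s)p_b^\infty(s,u)$, and marginalizing out the next unobserved state, the assumption lets me pull $w^\star(s)$ outside the sum over $u$; what remains is precisely $\nu_{w^\star}=\nu_{w^\star}P$ with $\nu_{w^\star}=p_e^\infty(\cdot)$, i.e.\ $w^\star$ solves \cref{eqn-esteqn-w}, while $\E[w^\star]=\sum_s p_e^\infty(s)=1$ yields \cref{eqn-esteqn-w-1}.

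The converse is where the real content lies, and I expect the uniqueness step to be the main obstacle. Given any $w$ solving \cref{eqn-esteqn-w}, the vector $\nu_w=w\,p_b^\infty$ is a left eigenvector of $P$ for eigenvalue $1$, but a priori only a signed one, so I cannot simply invoke uniqueness of stationary \emph{distributions}. Instead I would argue that \cref{asn-recurrent} makes the full $\pi_e$-chain positive Harris recurrent, whence $p_b^\infty(s)>0$ for every $s$ and, crucially, the finite kernel $P$ is irreducible; by Perron--Frobenius the eigenvalue $1$ is then simple, so its left eigenspace is one dimensional and spanned by $p_e^\infty(\cdot)$ from the forward direction. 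Therefore $\nu_w=c\,p_e^\infty$ for a scalar $c$, and the normalization \cref{eqn-esteqn-w-1}, $\sum_s\nu_w(s)=\E[w]=1=\sum_s p_e^\infty(s)$, forces $c=1$. Hence $w(s)=p_e^\infty(s)/p_b^\infty(s)$, which by \cref{asn-no-time-dependent-confounding} equals $p_e^\infty(s,u)/p_b^\infty(s,u)$ for every $u$.

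The one nontrivial point, and the step I would write out most carefully, is the irreducibility of the projected kernel $P$. I would obtain it from irreducibility of the full chain by a closed-set argument: if some proper $B\subsetneq\mathcal S$ were closed under $P$, then $\{(s,u):s\in B,\ p_e^\infty(s,u)>0\}$ would be closed under the full $\pi_e$-chain (using $p_b^\infty(u\mid s)=p_e^\infty(u\mid s)$ and stationarity to keep reached states in the positive-mass set), contradicting irreducibility on $\mathcal S\times\mathcal U$. The generality of $\mathcal U$ enters only through integration over $u$ and leaves $P$ a finite matrix, so no measure-theoretic complications arise in the spectral argument itself.
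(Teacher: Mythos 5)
Your proposal is correct, and it reaches the same structural insight as the paper --- under \cref{asn-no-time-dependent-confounding} the estimating equation \cref{eqn-esteqn-w} is exactly the stationarity (flow) equation for the observed-state kernel induced by $\pi_e$, and \cref{asn-recurrent} pins down the solution --- but it executes both halves differently and, in one respect, more rigorously. For the forward direction the paper invokes Theorem 1 of \citet{liu2018breaking} on the full state space $(s,u)$ and then marginalizes; you instead prove everything self-containedly by packaging the equation as $\nu=\nu P$ for an explicit row-stochastic matrix $P$ on $\Ss$. A genuine merit of your write-up is that you isolate the identity $p_b^{\infty}(u\mid s)=p_e^{\infty}(u\mid s)$ as \emph{the} consequence of \cref{asn-no-time-dependent-confounding} that makes $P$ coincide with the $\pi_e$-induced marginal kernel $\sum_a\pi_e(a\mid s)p(k\mid s,a)$; the paper uses exactly this equality implicitly (its definition of $p(k\mid j,a)$ averages over $p_e^{\infty}(u\mid j)$ while its proof marginalizes over $p_b^{\infty}(u\mid j)$) without ever stating it, so your derivation fills a gap a careful reader must otherwise reconstruct. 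For the converse, the paper argues that recurrence of the aggregated chain gives existence and uniqueness of the stationary \emph{distribution} on $\Ss$; as you correctly observe, a $w$ satisfying \cref{eqn-esteqn-w,eqn-esteqn-w-1} a priori yields only a \emph{signed} left eigenvector with total mass one, so uniqueness of stationary probability measures does not immediately apply. Your Perron--Frobenius step --- irreducibility of the finite matrix $P$ makes the eigenvalue $1$ simple, so the signed eigenspace is one-dimensional and the normalization $\E[w]=1$ fixes the scalar --- closes this hole cleanly, and your closed-set lifting argument for irreducibility of $P$ is sound given the standing (implicit in the paper as well) support condition $p_b^{\infty}(s)>0$ for all $s\in\Ss$; the only pedantic caveat is that positive Harris recurrence gives irreducibility only modulo null sets and full support on $\Ss$ only if $\Ss$ is taken to be the effective support, a convention the paper also relies on for $w$ to be well defined.
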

The forward implication of \Cref{prop-lumpable-infhorizon} follows from Theorem 1 of \citep{liu2018breaking} applied to the state variable $(s,u)$ after recognizing that $w(s,u)$ only depends on $s$ under \Cref{asn-no-time-dependent-confounding} and marginalizing out $u'$. The backward implications of \Cref{prop-lumpable-infhorizon} follows from the recurrence of the aggregated MDP obtained from the transform $(s,u)\mapsto s$ \citep{kemeny1960finite}. A complete proof appears in the appendix.

Fortunately, \cref{eqn-esteqn-w,eqn-esteqn-w-1} exactly characterize $w$. Unfortunately, \cref{eqn-esteqn-w} involves two unknowns: $\beta(a\mid s,u)$ and the distribution $p_b^\infty(s,u,a,s')$ with respect to which the expectation is taken. In that sense, the estimated equation is only partially identified. Nonetheless, this allows to make progress toward a tractable characterization of $\Theta$.
\vspace{-5pt}
\paragraph{Marginalization} 
We next show that when optimizing over $\Wpi \in \Wset$, 
the sensitivity model can be reparametrized with respect to \textit{marginal weights} $g_k(a\mid j)$ (in the following, $j,a,k$ are generic indices into $\Ss, \mathcal A , \Ss$, respectively):
\begin{align*}\textstyle\Wpimarg_k(a \mid j) 
&\textstyle \defeq \sum_{u'}  
\frac{	p^{(\infty)}_{b} ( j,u', a  \mid k  )}{\Pstat_b ( j, a  \mid k  )	}
\Wpi (a\mid j,u')= \left(	\sum_{u'}\pi_b (a \mid j,u') 	\frac{ \Pstat_b(u'\mid j ) p(k\mid j,u', a)}{p(k\mid j, a) }\right)^{-1}
\end{align*}
Note that the values of the $g_k(a\mid j)$ weights are \textit{not} equivalent to the confounded $\pi_b(a \mid s)^{-1}$:
the difference is exactly the \textit{variability} in the underlying full-information transition probabilities  $p(k\mid j,u', a)$.
We will show that $g_k(a\mid j) \in \tilde\Wset$ satisfies the following constraints, where \cref{eqn-conditional-moment} corresponds to \cref{eqn-conditional-ctrlvar}: 
\begin{align}
\textstyle \abnd(a \mid j) &\leq \Wpimarg_k(a\mid j) \leq \bbnd(a \mid j) , &&
\forall j,a,k \nonumber
\\
\textstyle	\Pstat_b (k \mid a) &= \sum_{j} \Pstat_b (j,  a,k) g_k(a\mid j),&&
\forall k,a\label{eqn-conditional-moment} 
\end{align}
Reparametrization with respect to $g_k(a\mid j)$ follows from an optimization argument, recognizing the symmetry of optimizing a function of unknown realizations of $u$ with respect to an unknown conditional visitation density. Crucially, reparametrization improves the scaling of the number of nonconvex bilinear variables from the number of samples or trajectories, $O(NT)$, to $O(\nS^2 \nA)$.

Unlike sensitivity models in causal inference, it is possible that the partial identification set is empty, $\Theta = \emptyset$,
even if its associated sensitivity model $\tilde\Wset$ is nonempty in the space of weights.
 In \cref{apx-obs-implications} of the appendix, we explain this further by studying the state-action-state polytope \cite{mannor2005empirical}, the set of all valid visitation distributions achievable by some policy. The next result summarizes that imposing constraints on the \textit{marginalized} joint distribution $\Pstat_b(s,a,s')$ is insufficient to ensure the full (unobserved) joint distribution corresponds to a
 valid MDP stationary distribution.

\begin{proposition}[]\label{prop-sensitivity-sharpness} 
	The implementable implications of the marginalized state-action and marginalized state-action-state polytopes are:
$
\ts	\Pstat_b (k \mid a) = \sum_{j} \Pstat_b (j,  a,k) g_k(a\mid j), \forall k \in \Ss, a \in \mathcal A
$.
\end{proposition}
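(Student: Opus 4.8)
The plan is to read the claim through the occupancy-measure (linear-programming) characterization of stationary distributions and then \emph{project} the augmented polytope onto the observable coordinates. By the characterization of \citet{mannor2005empirical}, a collection of numbers $q(s,u,a,s',u')$ is the stationary state-action-next-state distribution of some ergodic MDP on $\Ss\times\Uu$ under a stationary policy if and only if it is nonnegative, normalized, and flow-balanced (at each augmented state the total inflow equals the total outflow). I would write these constraints for the augmented chain, treating the unobserved coordinates $u,u'$ and the behavior policy $\pi_b(a\mid s,u)$ as free, while fixing the observable $\Pstat_b(s,a,s')=\sum_{u,u'}q(s,u,a,s',u')$ and letting the marginal weights $\Wpimarg_k(a\mid j)$ be the functionals of $q$ given by their definition. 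The proposition is then the statement that the image of this polytope in the $(\Pstat_b,\,g)$-coordinates is cut out, as far as equalities go, by \cref{eqn-conditional-moment} alone.

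First I would prove necessity, i.e.\ that any feasible $q$ forces \cref{eqn-conditional-moment}. Substituting the definition of $\Wpimarg_k(a\mid j)$ into $\sum_j \Pstat_b(j,a,k)\,\Wpimarg_k(a\mid j)$ and using $\Pstat_b(j,a,k)=\Pstat_b(j,a\mid k)\,\Pstat_b(k)$, the factor $\pi_b(a\mid j,u')$ appearing in the joint occupancy $\Pstat_b(j,u',a,k)=\Pstat_b(j,u')\pi_b(a\mid j,u')p(k\mid j,u',a)$ cancels against the $\beta=\pi_b^{-1}$ inside $g$. The sum telescopes to $\sum_{j,u'}\Pstat_b(j,u')\,p(k\mid j,u',a)$, the aggregate probability of landing in state $k$ after forcing action $a$ from the stationary distribution. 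This is exactly the data-identified inflow that \cref{eqn-conditional-ctrlvar} equates with $\Pstat_b(k\mid a)$, so \cref{eqn-conditional-moment} is the marginalized reparametrization of \cref{eqn-conditional-ctrlvar}. This direction is a direct computation from the definition of $\Wpimarg$ and the factorization of the joint occupancy.

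The harder and substantive direction is completeness: that \cref{eqn-conditional-moment} exhausts every polytope implication expressible in observable terms. Here I would marginalize the augmented flow-balance constraint over $u$ and observe that it collapses to the purely observed balance $\sum_{a,s'}\Pstat_b(s,a,s')=\sum_{\tilde s,a}\Pstat_b(\tilde s,a,s)$, which holds automatically for any stationary observed marginal and hence carries \emph{no} information about $g$; the marginal state-action polytope likewise contributes only such automatically-satisfied identities. Every remaining facet of the augmented polytope references the joint coordinates $(s,u)$ or $(s',u')$, and I would eliminate these by a Fourier–Motzkin / polytope-projection argument, the key lemma being that for any candidate $g$ satisfying \cref{eqn-conditional-moment} together with the observed marginal, the unobserved degrees of freedom (the $u$-conditional masses $\Pstat_b(u'\mid j)$ and the within-$u$ transition structure $p(k\mid j,u',a)$) remain free enough to be chosen consistently, so no further observable equality can be forced. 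I expect this projection step to be the main obstacle, since it is precisely the formal content of the paper's remark that marginal-level constraints are \emph{insufficient} to certify a valid full joint: sharpness must be established at the level of observable enforceability rather than full MDP realizability — indeed the latter can fail, which is exactly why $\Theta$ may be empty.
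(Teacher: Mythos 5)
Your necessity direction is correct and is essentially the paper's own computation: marginalizing the augmented flow-conservation constraint \cref{eqn-Q-flow-conservation} over $u'$, with $\beta=\pi_b^{-1}$ cancelling the policy factor inside the joint occupancy $\Pstat_b(j,u',a,k)=\Pstat_b(j,u')\pi_b(a\mid j,u')p(k\mid j,u',a)$, telescopes to the forced-action inflow and yields \cref{eqn-conditional-moment}; this is the ``marginalizing \cref{eqn-Q-flow-conservation}'' step of the paper's \cref{lemma-sharpness-Q}. Your observation that the marginalized state-flow constraint \cref{eqn-X-flow} collapses to the automatically satisfied identity $\Pstat_b(k)=\sum_{j,a}\Pstat_b(j,a,k)$ is also right, and matches the paper's conclusion that \cref{eqn-sharp-X-compatibility} is uninformative.

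The genuine gap is in your completeness step, exactly where you flag ``the main obstacle,'' and it is twofold. First, the tool you propose would fail as stated: Fourier--Motzkin elimination projects \emph{polytopes} along \emph{linear} coordinates, but $\Wpimarg_k(a\mid j)$ is a ratio functional of the occupancy measure $q$ (it contains $\beta=\pi_b^{-1}$, itself a ratio of $q$-marginals), so the image of the augmented feasible set in $(\Pstat_b,g)$-coordinates is not a linear projection of a polytope and FM does not apply. Second, your ``freeness'' key lemma is left unproven, and this is precisely where the paper substitutes something concrete for generic projection machinery: it derives the remaining marginalized constraint family --- the conformability constraint \cref{eqn-Q-conformability-transitions} marginalizes to $p(k\mid j,a)=\Pstat_b(a,k\mid j)\,g_k(a\mid j)$, i.e.\ \cref{eqn-sharp-Q-compatibility} --- and then proves \cref{lemma-non-rectangularity}, an explicit identity relating the \emph{identifiable} biased kernel $\tilde p(k\mid j,a)$ to the \emph{unidentifiable} true marginal kernel $p(k\mid j,a)$ through $g$. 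That identity shows the leftover marginalized constraints merely \emph{define} $p(\cdot\mid j,a)$ as a function of $g$ and the observed data rather than restricting $g$, so they cannot be enforced from observations; only \cref{eqn-conditional-moment} is implementable. This reweighting identity is the missing idea in your sketch: it replaces your unproven elimination lemma with an explicit parametrization of the unobserved degrees of freedom by $g$ itself, and it also fixes the proposition's scope correctly --- a claim about which \emph{marginalized} constraints are implementable in identified quantities, not a full projection-sharpness theorem (which, as you yourself note via the possible emptiness of $\Theta$, would be a stronger and more delicate statement; in the paper, compatibility with a valid stationary distribution is instead enforced later by the estimating equation in \cref{lemma-weight-reparametrization}).
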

 \Cref{prop-sensitivity-sharpness} justifies our restrictions on $\tilde\Wset$. However, it also implies that further imposing constraints on $\tilde\Wset$
\textit{cannot} ensure compatibility of 
$g_k(a\mid j)$ for the observed $\Pstat_b(s,a,s')$ (and therefore $\Theta$), where compatibility is the requirement that $\Pstat_b(s,a,s')$ is stationary for $g_k(a \mid j)$. 
\vspace{-5pt}
\paragraph{Feasibility Linear Program} 
We next show that $w\in\Theta$ can be expressed using the linear program $\primalfeasible$ that
minimizes the L1 norm of residuals of the estimating equation of \Cref{prop-lumpable-infhorizon}, for a given $w$, over the sensitivity model $\Wpimarg \in \tilde\Wset$\footnote{Linearity of $F(w)$ also holds if an instrument function is used to convert the conditional moment equality to an unconditional one, as in Eqn. 10 \citep{liu2018breaking}, and as we use in \Cref{section-opt-pgd,lem-matrix-inversion-body}.}:
\begin{equation}
\textstyle\primalfeasible\defeq
	\underset{g \in \tilde\Wset}{\min}\;
\sum_k 
\abs{  \sum_{j,a}
p^{(\infty)}_{b} ( j, a  \mid k  )\wstat(j) \pi_e (a \mid j) \Wpimarg_k(a\mid j)  -w(k) 
}.
\end{equation}

\begin{proposition}[Feasibility Linear Program]\label{lemma-weight-reparametrization}
	\begin{equation}\label{eqn:pi-inf-horz-reparametrized-pop}
\textstyle	w \in  \Theta \iff \primalfeasible \leq 0, \; \E[w(s)] = 1
	\end{equation}
\end{proposition}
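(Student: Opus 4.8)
The plan is to prove the stated equivalence by chaining together two facts already available: the exact characterization of $\wstat$ via the partially identified estimating equation (\Cref{prop-lumpable-infhorizon}) and the marginalization identity that rewrites that estimating equation in terms of the marginal weights $\Wpimarg_k(a\mid j)$. The key preliminary observation is that $\primalfeasible$ is the minimum of a sum of absolute values and is therefore always nonnegative, so $\primalfeasible\le 0$ is equivalent to $\primalfeasible = 0$, i.e.\ to the existence of some $\Wpimarg \in \Wsetctrlv$ that drives every residual $\sum_{j,a} p^{(\infty)}_{b}(j,a\mid k)\, \wstat(j)\,\pi_e(a\mid j)\,\Wpimarg_k(a\mid j) - \wstat(k)$ to zero. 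Thus the proposition reduces to showing that the existence of a feasible $\Wpimarg\in\Wsetctrlv$ with vanishing residuals, together with $\E[\wstat(s)]=1$, is equivalent to membership $\wstat\in\Theta$.

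For the forward direction, suppose $\wstat\in\Theta$, so there is an MDP $M$ satisfying \Cref{asn-recurrent,asn-no-time-dependent-confounding} whose behavior policy has inverse $\Wpi\in\Wset$, reproduces the given $p^{(\infty)}_{b}(s,a,s')$, and for which $\wstat(s)=p_{e}^{(\infty)}(s,u)/p_{b}^{(\infty)}(s,u)$. \Cref{prop-lumpable-infhorizon} then gives \cref{eqn-esteqn-w,eqn-esteqn-w-1} exactly. I would define $\Wpimarg$ from $\Wpi$ through the marginalization formula and verify $\Wpimarg\in\Wsetctrlv$: the pointwise bounds $\abnd(a\mid j) \le \Wpimarg_k(a\mid j)\le \bbnd(a\mid j)$ hold because $\Wpimarg_k(a\mid j)$ is a convex average (in $u'$) of the values $\Wpi(a\mid j,u')$, each lying in $[\abnd,\bbnd]$ by \cref{eq:boundset}, and the moment constraint \cref{eqn-conditional-moment} is exactly the marginalized restatement of \cref{eqn-conditional-ctrlvar}, which $\Wpi$ satisfies as a genuine inverse propensity. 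Substituting the definition of $\Wpimarg$ into \cref{eqn-esteqn-w} shows that the $k$-th residual equals the difference between the two sides of \cref{eqn-esteqn-w}, hence vanishes; so this $\Wpimarg$ is feasible with objective value $0$ and $\primalfeasible\le 0$, while $\E[\wstat(s)]=1$ is \cref{eqn-esteqn-w-1}.

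For the converse, suppose $\primalfeasible\le 0$ and $\E[\wstat(s)]=1$. As noted, $\primalfeasible=0$ is attained at some $\Wpimarg^\star\in\Wsetctrlv$ for which every residual vanishes, i.e.\ $\Wpimarg^\star$ satisfies the marginalized estimating equation as well as the bounds and \cref{eqn-conditional-moment}. The task is then to lift $\Wpimarg^\star$ back to a full-information behavior policy with inverse $\Wpi\in\Wset$ together with confounder dynamics forming an MDP $M$ that satisfies \Cref{asn-recurrent,asn-no-time-dependent-confounding}, reproduces $p^{(\infty)}_{b}(s,a,s')$, and realizes $\Wpimarg^\star$ as its marginalization. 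This is precisely where the reparametrization argument and the sharpness statement of \Cref{prop-sensitivity-sharpness} enter: because the bound and moment constraints defining $\Wsetctrlv$ are the \emph{exact} implementable implications of the marginalized polytopes, every $\Wpimarg^\star\in\Wsetctrlv$ is the image of some admissible $\Wpi\in\Wset$ under a compatible MDP. Given such a lift, the marginalization identity turns the vanishing-residual condition for $\Wpimarg^\star$ into \cref{eqn-esteqn-w} for $\Wpi$, and combined with $\E[\wstat(s)]=1$, \Cref{prop-lumpable-infhorizon} yields $\wstat(s)=p_{e}^{(\infty)}(s,u)/p_{b}^{(\infty)}(s,u)$, i.e.\ $\wstat\in\Theta$.

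I expect the backward lift to be the main obstacle. The forward map $\Wpi\mapsto\Wpimarg$ is a straightforward averaging, but recovering a full-information $\Wpi\in\Wset$ and consistent confounder transition probabilities from the purely marginal object $\Wpimarg^\star$---while simultaneously honoring ergodicity, memoryless confounding, and the fixed observable joint $p^{(\infty)}_{b}(s,a,s')$---requires the nontrivial claim that the marginal constraints carry no hidden obstruction. I would isolate this as the reparametrization/realizability lemma underlying \Cref{prop-sensitivity-sharpness}, taking care that the reconstructed $\Wpi$ stays within the pointwise band $[\abnd,\bbnd]$ and that the degrees of freedom in $u$ suffice to match $\Wpimarg^\star$ exactly. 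The remaining algebra---convexity of the averaging, the equivalence of \cref{eqn-conditional-ctrlvar,eqn-conditional-moment}, and nonnegativity of $\primalfeasible$---is routine.
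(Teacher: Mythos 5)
Your forward direction is sound and matches the paper: given $w\in\Theta$, \Cref{prop-lumpable-infhorizon} supplies \cref{eqn-esteqn-w,eqn-esteqn-w-1}, the map $\Wpi\mapsto\Wpimarg$ is a convex average in $u$ so the box bounds survive, the moment constraint \cref{eqn-conditional-moment} is the marginalization of \cref{eqn-conditional-ctrlvar}, and nonnegativity plus attainment of $\primalfeasible$ (a linear program over a compact polytope) give $\primalfeasible\le 0$.

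The backward direction, however, rests on a claim the paper explicitly denies. You assert that because the bound and moment constraints are the exact implementable implications of the marginalized polytopes, \emph{every} $\Wpimarg^\star\in\Wsetctrlv$ lifts to some admissible $\Wpi\in\Wset$ under a compatible MDP. \Cref{prop-sensitivity-sharpness} is a statement about which constraints are \emph{implementable}, not a realizability guarantee: the paper stresses that further constraints on $\Wsetctrlv$ \emph{cannot} ensure compatibility of $g$ with the observed $\Pstat_b(s,a,s')$, and indeed that $\Theta$ may be empty even when $\Wsetctrlv$ is nonempty --- so the universal lift you invoke is false, and the ``realizability lemma with no hidden obstruction'' you hope to isolate does not exist. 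The paper's actual resolution dispenses with lifting an arbitrary member of $\Wsetctrlv$: given $g$ feasible for the vanishing-residual system, one \emph{defines} the implied marginal transitions via the conformability relation $p(k\mid j,a)=\Pstat_b(a,k\mid j)\,g_k(a\mid j)$, uses Bayes' rule to rewrite the residual equations as the flow equations $w(k)\Pstat_b(k)=\sum_{j,a} w(j)\,\Pstat_b(j)\,\pi_e(a\mid j)\,p(k\mid j,a)$ for all $k$, and concludes from \Cref{asn-recurrent} and uniqueness of the invariant measure that $w\,\Pstat_b$ is proportional to $\Pstat_e$, certifying $w\in\Theta$. In other words, it is feasibility of the estimating equation itself --- not any property of $\Wsetctrlv$ alone --- that enforces compatibility; your argument goes through only if the claimed lift is replaced by this conformability-plus-stationarity step.
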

In relation to \cref{prop-sensitivity-sharpness}, the analysis for \cref{eqn:pi-inf-horz-reparametrized-pop} shows that it is exactly the partially identified estimating equation, \Cref{prop-lumpable-infhorizon}, which enforces compatibility such that
combining the restrictions on $\tilde\Wset$ and \Cref{prop-lumpable-infhorizon}
verifies membership of $w$ in $\Theta$. A consequence of \Cref{lemma-weight-reparametrization} is sharpness of the partially identified interval $[\underline R_e,\overline R_e]$; each point in the interval corresponds to some policy value. 
	\begin{theorem}[Sharpness]\label{thm-sharpness}$ \{\E[w(s)\Phi(s)]:w\in\Theta\}=[\underline R_e,\overline R_e].$
\end{theorem}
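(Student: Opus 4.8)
The plan is to prove the two inclusions separately. The inclusion $\{\E[w(s)\Phi(s)]:w\in\Theta\}\subseteq[\underline R_e,\overline R_e]$ is immediate from the definitions in \eqref{eq:Re}, since $\underline R_e$ and $\overline R_e$ are by construction the infimum and supremum of the linear functional $w\mapsto\E[w(s)\Phi(s)]$ over $\Theta$. All the content is in the reverse inclusion, which amounts to showing that the image omits no intermediate value, i.e.\ that it is a full \emph{interval} with both endpoints attained. I would obtain this from two facts: that the extrema are attained, and that $\Theta$ is connected, so that its image under a continuous functional is an interval.

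First I would establish attainment. Using the characterization $w\in\Theta\iff\primalfeasible\le0,\ \E[w(s)]=1$ from \Cref{lemma-weight-reparametrization}, observe that $\primalfeasible$ is the optimal value of a linear program whose feasible region $\Wsetctrlv$ is a fixed polytope (independent of $w$, cut out by \eqref{eq:boundset} and the linear moment constraints \eqref{eqn-conditional-moment}) and whose objective is continuous in $w$; hence $\primalfeasible$ is continuous, so $\Theta$ is closed, and it is bounded because the box bounds on $\Wpimarg$ together with $\E[w]=1$ and the stationarity equation force $w$ into a bounded set. Thus $\Theta$ is compact, and the continuous functional $w\mapsto\E[w(s)\Phi(s)]$ attains its minimum $\underline R_e$ at some $\underline w\in\Theta$ and its maximum $\overline R_e$ at some $\overline w\in\Theta$.

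The key step is connectedness, which I would get by exhibiting a continuous path in $\Theta$ joining $\underline w$ and $\overline w$. By \Cref{lemma-weight-reparametrization}, $\underline w$ and $\overline w$ are normalized solutions of the estimating equation \eqref{eqn-esteqn-w} for some admissible reweightings $\underline\Wpimarg,\overline\Wpimarg\in\Wsetctrlv$. Since $\Wsetctrlv$ is convex, the segment $\Wpimarg_t=(1-t)\underline\Wpimarg+t\overline\Wpimarg$ lies in $\Wsetctrlv$ for every $t\in[0,1]$. The crucial observation is that for each \emph{fixed} $\Wpimarg_t$ the estimating equation \eqref{eqn-esteqn-w} is \emph{linear} in $w$: it reads $w=P_{\Wpimarg_t}w$ for the reweighted aggregated transition operator $P_{\Wpimarg_t}$, so the bilinearity that otherwise obstructs convexity of $\Theta$ is sidestepped. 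Under \Cref{asn-recurrent} this operator corresponds to an ergodic aggregated chain (exactly as in the backward implication of \Cref{prop-lumpable-infhorizon}), so by Perron--Frobenius it has a one-dimensional invariant subspace and hence a unique fixed point $w_t$ normalized by $\E[w_t]=1$, depending continuously on $t$. Each $w_t$ satisfies $F(w_t)\le0$ and $\E[w_t]=1$, so $w_t\in\Theta$; thus $t\mapsto w_t$ is a continuous path in $\Theta$ from $\underline w$ to $\overline w$. Composing with the continuous functional and applying the intermediate value theorem shows that every value in $[\underline R_e,\overline R_e]$ is attained, which gives the reverse inclusion and completes the proof.

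I expect the main obstacle to be the uniqueness-and-continuity claim for $w_t$: one must verify that every reweighting $\Wpimarg_t$ along the segment keeps the aggregated chain irreducible and positive recurrent, so that $P_{\Wpimarg_t}$ has a unique normalized invariant direction and $t\mapsto w_t$ is continuous. This is precisely where \Cref{asn-recurrent} and the positivity built into the sensitivity bounds \eqref{eq:boundset} must be used; the rest of the argument is soft (compactness, continuity, the intermediate value theorem). A secondary point to confirm carefully is boundedness of $\Theta$, which I would read off from the box bounds on $\Wpimarg$ together with the normalization and stationarity.
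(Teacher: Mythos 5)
Your forward inclusion and the compactness/attainment step are unobjectionable, but the connectedness argument---which, as you say, carries all the content---has a genuine gap, and it is exactly the one the paper itself flags. Linear interpolation $g_t=(1-t)\underline\Wpimarg+t\overline\Wpimarg$ does stay in the polytope $\tilde\Wset$, but membership in $\tilde\Wset$ does not guarantee that the estimating equation has \emph{any} solution: the relevant set is $\psi=\{g\in\tilde\Wset:\exists\,w\geq 0 \text{ s.t. } A(g)w=0,\ b^\top w=1\}$ of \cref{eqn-g-feas}, and the paper emphasizes (discussion around \Cref{prop-sensitivity-sharpness}, and again after \Cref{lem-matrix-inversion-body}, where the feasible set is explicitly called non-convex) that the linear constraints defining $\tilde\Wset$ \emph{cannot} enforce this compatibility. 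Concretely, for fixed $g$ the equation \eqref{eqn-esteqn-w} reads $w=B(g)w$ with $B(g)_{kj}=\sum_a p_b^{\infty}(j,a\mid k)\,\pi_e(a\mid j)\,g_k(a\mid j)$ a nonnegative matrix, and a normalized nonnegative solution exists essentially only if the Perron root of $B(g)$ equals $1$. Nothing in the box constraints \eqref{eq:boundset} or the moment constraints \eqref{eqn-conditional-moment} pins that root at $1$ along your segment---those constraints sum over $j$ for fixed $(k,a)$, not over $k$, so $B(g_t)$ is not (sub)stochastic in any direction---and \Cref{asn-recurrent} concerns the true MDP, not the reweighted operators $B(g_t)$: a convex combination of two admissible $g$'s need not arise from any MDP, since the map from $(\pi_b,p)$ to $g$ involves a reciprocal of a weighted average and is not affine. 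So for intermediate $t$ the fixed point $w_t$ generally fails to exist, the appeal to the backward implication of \Cref{prop-lumpable-infhorizon} is unavailable (it applies only to $g$'s realized by a valid model), and your path leaves $\Theta$. Positivity of the bounds $\abnd,\bbnd$ does not repair this.

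The paper's own proof takes a different and much shorter route: by \Cref{prop-sensitivity-sharpness} together with the support-function fact that optimizing a linear objective over $\Theta$ is equivalent to optimizing it over $\op{conv}(\Theta)$ \citep{rockafellar1970convex}, convexity of $\op{conv}(\Theta)$ and of an interval yields the claim, with no connectedness assertion about $\Theta$ itself. If you could establish path-connectedness of $\psi$---rather than of $\tilde\Wset$---your argument would indeed go through via \Cref{lem-matrix-inversion-body}, since on $\psi$ the map $g\mapsto\tilde A(g)^{-1}v$ is well defined and continuous, and it would even deliver a stronger conclusion (a continuous selection within $\Theta$ realizing every value); but $\psi$ is precisely the non-convex set for which no such structure is available, so as written the key step fails.
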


\section{Optimizing over the partially identified set}\label{sec-opt-PI} 

 \cref{eqn:pi-inf-horz-reparametrized-pop} suggests computing $\underline R_e,\overline R_e$ by solving \begin{equation}\label{eqn-opt-Fw}
\ts  \inf/\sup~\{ \E[w(s)\Phi(s)] \colon F(w) \leq 0,~\E[w(s)]=1 \}.
 \end{equation}
The restriction $F(w) = 0$ implicitly encodes an optimization over $g$,
 resulting in a hard nonconvex bilevel optimization problem. We first show that the structure of the feasibility program admits reformulation as a disjunctive program.

\begin{proposition}\label{prop-disjunctive} 
	\cref{eqn-opt-Fw} can be reformulated as a disjunctive program (hence a finite linear program).
\end{proposition}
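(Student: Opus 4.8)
The plan is to make the bilevel structure of \cref{eqn-opt-Fw} explicit and recast it as a single bilinear program over the pair $(\wstat,\Wpimarg)$, then resolve the bilinearity through a finite disjunction coming from the geometry of $\Wsetctrlv$. First I would note that $\primalfeasible\ge0$ always, so the constraint $\primalfeasible\le0$ is equivalent to $\primalfeasible=0$, which by the definition of $\primalfeasible$ in \Cref{lemma-weight-reparametrization} holds iff there exists $\Wpimarg\in\Wsetctrlv$ making every residual in $\primalfeasible$ vanish. Substituting this existential characterization into \cref{eqn-opt-Fw} turns it into a joint optimization over $(\wstat,\Wpimarg)$ with the linear objective $\E[\wstat(s)\staterewards(s)]$, the normalization $\E[\wstat(s)]=1$, nonnegativity $\wstat\ge0$, the linear constraints defining $\Wsetctrlv$ (the box bounds and the conditional-moment equalities \cref{eqn-conditional-moment}), and the stationarity equalities $\sum_{j,a}\Pstat_b(j,a\mid k)\wstat(j)\pi_e(a\mid j)\Wpimarg_k(a\mid j)=\wstat(k)$. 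Every constraint is linear except the stationarity equalities, whose only nonconvexity is the bilinear products $\wstat(j)\Wpimarg_k(a\mid j)$.

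Next I would exploit the block structure of $\Wsetctrlv$. Because each conditional-moment equality in \cref{eqn-conditional-moment} involves only the coordinates $\{\Wpimarg_k(a\mid j)\}_j$ for a fixed pair $(a,k)$, the polytope $\Wsetctrlv$ factorizes as a product over $(a,k)$ of polytopes, each defined by the box bounds together with a single linear equality. By the standard vertex characterization of such a polytope, at any vertex all but at most one of the coordinates $\Wpimarg_k(a\mid j)$ sit at a bound $\abnd(a\mid j)$ or $\bbnd(a\mid j)$, and the single remaining coordinate is then pinned to an explicit constant by the moment equality. Consequently $\Wsetctrlv$ has finitely many vertices, each an explicit constant vector of a threshold form, and enumerating one bound-activity pattern per block indexes this finite vertex set.

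I would then build the disjunctive program by branching over these patterns. On the disjunct corresponding to a fixed vertex $\bar\Wpimarg$ of $\Wsetctrlv$, substituting the constant $\bar\Wpimarg$ into the stationarity equalities eliminates the bilinear terms and leaves them linear in $\wstat$, while the moment equalities hold automatically. What remains is an ordinary linear program in $\wstat$ extremizing $\E[\wstat(s)\staterewards(s)]$ subject to the now-linear stationarity constraints, $\E[\wstat(s)]=1$, and $\wstat\ge0$. Optimizing over the finite union of these polyhedra and reporting the best objective realizes \cref{eqn-opt-Fw} as a disjunctive program, and since each disjunct is solved by an LP the problem reduces to finitely many linear programs.

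The main obstacle is exactness of the disjunction, \ie, showing that an optimizer of the bilinear program can be taken with $\Wpimarg$ at a vertex of $\Wsetctrlv$; this fails for generic coupled bilinear programs, so it must use problem-specific structure. The route I would take is the occupancy/visitation viewpoint already invoked in \Cref{prop-sensitivity-sharpness}: under \cref{asn-recurrent} the normalized $\wstat$ is the essentially unique stationary ratio of the chain induced by $\Wpimarg$, so the inner task is to extremize the long-run average $\polval_e=\E[\wstat(s)\staterewards(s)]$ of a linear reward over the stationary distribution of a Markov chain whose transitions are selected through $\Wpimarg\in\Wsetctrlv$. Passing to the state-action-state visitation variables linearizes this long-run-average objective over the induced visitation polytope, so its extreme value is attained at a vertex, \ie, at a ``deterministic'' weight pattern, which is exactly a vertex of $\Wsetctrlv$. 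The delicate points I would treat carefully are the translation of the box bounds (ratios of visitations) into linear facets of this polytope, the possibility that the induced chain fails ergodicity for some $\Wpimarg$ so that the stationary $\wstat$ is not unique, and the case $\Theta=\emptyset$ flagged before \Cref{prop-sensitivity-sharpness}, each of which governs whether the union of polyhedra is closed and nonempty.
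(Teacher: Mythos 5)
There is a genuine gap, and it is exactly at the step you flagged as the main obstacle: the claim that an optimizer of the joint bilinear program can be taken with $\Wpimarg$ at a vertex of $\tilde\Wset$. Your proposed resolution via the occupancy/visitation polytope does not go through. Writing $q(j,a,k)=\wstat(j)p_b^\infty(j)\pi_e(a\mid j)p(k\mid j,a)$ does linearize the objective and (after multiplying through by $\mu(j)=\sum_{a,k}q(j,a,k)$) the box bounds, but the conditional-moment equalities \cref{eqn-conditional-moment} become rational in $q$: since $\Wpimarg_k(a\mid j)\propto q(j,a,k)/\mu(j)$, they read $\sum_j p_b^\infty(j)\,q(j,a,k)/\bigl(\pi_e(a\mid j)\mu(j)\bigr)=p_b^\infty(k\mid a)$, with a different denominator $\mu(j)$ in each summand, so the feasible visitation set is \emph{not} a polytope and no vertex-attainment principle applies. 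More structurally, each moment equality couples $\{\Wpimarg_k(a\mid j)\}_j$ across \emph{source} states $j$, i.e., across rows of the induced transition matrix; this is precisely the non-rectangularity the paper records in \cref{lemma-non-rectangularity} and the surrounding remark, the regime in which deterministic/vertex optimality of (robust) average-reward MDPs fails and the problem is NP-hard in general. Even granting a polytope, its extreme points are points where \emph{stationarity} constraints are active, not bound-activity patterns of $\tilde\Wset$, so they need not map back to vertices of $\tilde\Wset$. Worse, your disjunctive program is a strict restriction of \cref{eqn-opt-Fw} even at the level of feasibility: for a given $\wstat$ the certifying weights live in $\tilde\Wset\cap\{A(g)\wstat=0\}$, whose vertices are generically in the relative interiors of faces of $\tilde\Wset$; since compatibility cannot be forced by constraints on $\tilde\Wset$ alone (\cref{prop-sensitivity-sharpness}), it can happen that \emph{no} vertex of $\tilde\Wset$ is compatible with any stationary $\wstat$, making your union of polyhedra empty or strictly suboptimal while \cref{eqn-opt-Fw} is not.

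The paper avoids this trap by never requiring a single vertex of $\tilde\Wset$ to certify feasibility jointly with $\wstat$. Instead it lifts the $\ell_1$ condition $\primalfeasible=0$ into a \emph{conjunction} over all $2^{\nS}$ sign patterns $\lambda$ (\cref{lemma-l1}); for each fixed $\wstat$ and fixed $\lambda$ the inner problem $\min_{g\in\tilde\Wset}\sum_k\lambda_k m_k(\wstat,g)$ is a genuine LP in $g$, so vertex enumeration is sound there, and crucially different $\lambda$'s may be witnessed by \emph{different} vertices $g_\lambda$ (the paper justifies that this is not a relaxation via uniqueness of the stationary $\wstat$ under \cref{asn-recurrent}). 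The conjunctive normal form is then distributed into a disjunctive normal form and converted to a finite LP by Balas's Theorem 2.1, which is why the resulting program is \emph{super}-exponential in $\nS$. The fact that your construction yields only singly exponentially many disjuncts (one per vertex of $\tilde\Wset$) is itself a warning sign: it would give a polynomially-checkable-per-disjunct exact reformulation of a non-rectangular robust-MDP-type problem that the paper notes is NP-hard in general. Your decomposition of $\tilde\Wset$ into $(a,k)$-blocks of box-plus-one-hyperplane polytopes and the vertex characterization within each block are correct and could be reused, but only inside the paper's per-$(\wstat,\lambda)$ LP structure, not as a global branching rule for the bilinear program.
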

However, the size of the resulting program is super-exponential in the cardinality of the state space, hence not practical. In \cref{apx-empirics} we also discuss the computational intractability of lifted SDP relaxations, which incur $O(\nS^9)$ iteration complexity. Therefore, for small state spaces, we suggest to solve \cref{eqn-opt-Fw} directly via Gurobi. 

\begin{figure}
\begin{subfigure}[t!]{.5\textwidth}
		\caption{Algorithm 1: Nonconvex 
			nonconvex-projected gradient descent}
		\label{alg:pgd}
		\begin{algorithmic}
			\STATE {\bfseries Input:} step size $\eta_0$, exponent $\kappa\in(0,1]$, 
			
		initial iterate $g_0$,  number of iterations $N$
			\FOR{$k=0,\dots,N-1$}
			\STATE $\eta_k \gets \eta_0 t^{-k}$
			\STATE
			 $
			w^*_{g_k} \in
			 \underset{\E[w(s)] = 1}
			{\arg\min} \{\norm{A(g_k) w }_1   \} $		
			
				$\tilde{g}_{k} \in \underset{g \in \tilde\Wset }{\arg\min} \{ \norm{ g - g_k}_1
				\colon A(g) w^*_{g_k} = 0 \} $
			\STATE $g_{k+1} \gets  \op{Proj}_{\Wset} (g_k + \eta_t   \nabla_{\tilde{g}_{k}} (\varphi^T\tilde A(g)^{-1}v))$
			\ENDFOR  \qquad Return $g_{k}$ with the best loss. 
		\end{algorithmic}
\end{subfigure}\begin{minipage}[t!]{0.23\textwidth}
		\centering
		\begin{adjustbox}{max width=0.95\textwidth}
			\begin{tikzpicture}[
			>=latex',node distance=2cm, minimum height=0.75cm, minimum width=0.75cm,
			state/.style={draw, shape=circle, draw=black, fill=green!2, line width=0.5pt},
			u/.style={draw, shape=circle, draw=black, fill=purple!2, line width=0.5pt},
			action/.style={draw, shape=rectangle, draw=red, fill=red!2, line width=0.5pt},
			reward/.style={draw, shape=rectangle, draw=blue, fill=blue!2, line width=0.5pt}
			]
			\node[u] (u0) at (0,0) {$u_0$};
			\node[u,below  of=u0] (u1) {$u_1$};
			\node[draw,inner sep=1mm
			,label=below
			:,fit=(u0) (u1)] (s1) {$s_1$} ;
			\node[u] (u0s2) at (3,0) {$u_0$};
			\node[u,below  of=u0s2] (u1s2) {$u_1$}; 
			\node[draw,inner sep=1mm
			,label=below
			:,fit=(u0s2) (u1s2)](s2) {$s_2$};
			\draw[->] (u0) -- (s2);
			\draw[->] (u1) -- (s2);
			\draw[->] (u0s2) -- (s1);
			\draw[->] (u1s2) -- (s1);
			\draw[->] (u0.north) arc (0:240:0.5) (s1);
			\draw[->] (u1.south) arc (0:-240:0.5) (s1);
			\draw[->] (u0s2.north) arc (180:-75:0.5) (s2);
			\draw[->] (u1s2.south) arc (-180:75:0.5) (s2);
			\end{tikzpicture}
		\end{adjustbox}
		\caption{Confounded random walk.}\label{fig-conf-random-walk} 
	\end{minipage}\hspace{0.1cm}\begin{minipage}[t!]{0.25\textwidth}\centering
		\includegraphics[width=\textwidth]{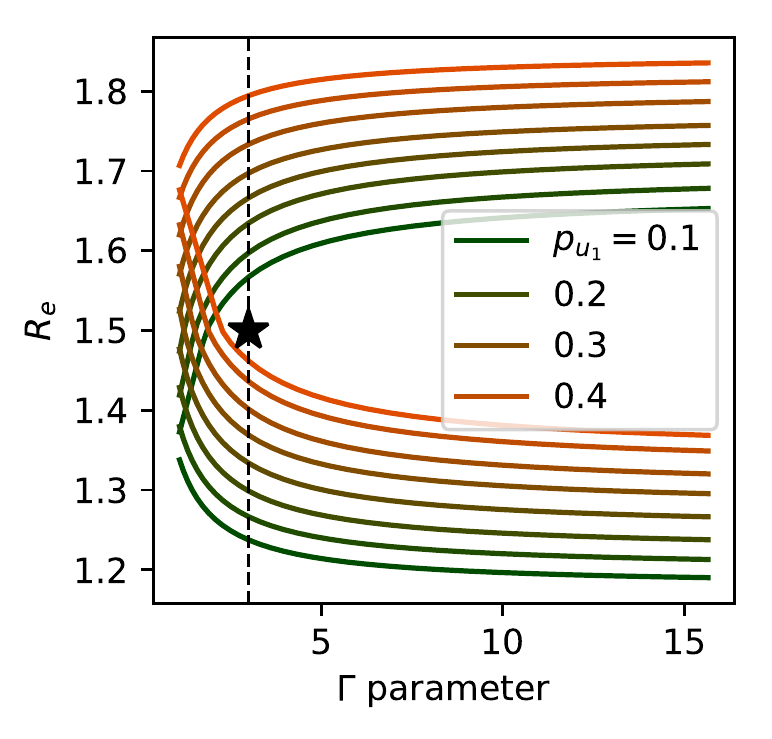}
		\caption{Varying transition models on $(s,u)$. $\star$ is true policy value.}\label{fig-conf-random-walk-varying-transitions} 
	\end{minipage}
\end{figure}
\vspace{-5pt}
\paragraph{ Nonconvex nonconvex-projected gradient method}\label{section-opt-pgd} 


We next develop a more practical optimization approach based on non-convex first-order methods.
First we restate
the estimating equation \cref{eqn-esteqn-w} for a fixed $g$ as a matrix system.
To evaluate expectations on the unconditional joint distribution, we introduce instrument functions
$\phi_s,\phi_{s'} \in \mathbb{R}^{\nS \times 1 }$, random (row) vectors which are one-hot indicators for the state random variable $s, s'$ taking on each value, $ \phi_s=  \begin{bmatrix}
\mathbb{I} [s=0] & \dots &  \mathbb{I} [s=\nS]
\end{bmatrix}$. Let $A(g) = \E[\phi_{s'}( {\pi_e(a\mid s)g_{s'}(a\mid s)} \phi_s - \phi_{s'})^\top]$ and $b_s=p_b^{\infty}(s)$. 
Let $\psi$ be the set of $\Wpimarg \in \tilde\Wset$ that admit a feasible solution to the estimating equation for some $\wstat \in \Theta$: 
\begin{equation}\label{eqn-g-feas} 
\psi\defeq \textstyle\{  g \in \tilde\Wset \colon  
\;\exists \; w\geq 0
\text{ s.t. } 
A(g) w = 0, b^\top w= 1  \}
\end{equation}
Define $\tilde A(g)$ by replacing the last row of $A(g)$ by $b$ and let $v=(0,\dots,0,1)\in\R{\abs{\Ss}}$. 
\begin{proposition}\label{lem-matrix-inversion-body}
If $g\in\psi$ then $\tilde A(g)$ is invertible. Moreover, $
\Theta=\{\tilde A(g)^{-1}v~~:~~g\in\psi\}.
$
\end{proposition}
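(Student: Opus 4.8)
The plan is to recognize $A(g)$ as a similarity transform of (the transpose of) an aggregated evaluation transition matrix, and then to extract invertibility of $\tilde A(g)$ from Perron--Frobenius theory. Writing $b=(p_b^\infty(k))_k$ and expanding the outer product defining $A(g)$, one gets $(A(g)w)_k=\E[\mathbb I[s'=k]\pi_e(a\mid s)g_k(a\mid s)w(s)]-p_b^\infty(k)w(k)$, so that $A(g)=(\hat Q^\top-I)\,\mathrm{diag}(b)$ with the nonnegative matrix $\hat Q_{jk}=\sum_a p_b^\infty(a,k\mid j)\,\pi_e(a\mid j)\,g_k(a\mid j)\ge 0$ indexed by current/next observed state. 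Because every admissible $g$ obeys $g_k(a\mid j)\ge l(a\mid j)\ge 1>0$ (inverse propensities are at least one), the sparsity pattern of $\hat Q$ --- the set of pairs $(j,k)$ with $\hat Q_{jk}>0$ --- does not depend on $g$ and equals $\{(j,k):\exists a,\ p_b^\infty(j,a,k)>0,\ \pi_e(a\mid j)>0\}$.

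The first substantive step is to show this $g$-independent pattern is irreducible, which I would deduce from \Cref{asn-recurrent}: the chain induced by $\pi_e$ is positive Harris recurrent, hence the observed-state-aggregated evaluation chain is irreducible and its edge set coincides with the pattern above. Given irreducibility, I invoke the feasibility witness: since $g\in\psi$ (\cref{eqn-g-feas}) there is $w_0\ge 0$ with $A(g)w_0=0$ and $b^\top w_0=1$; equivalently $y_0:=\mathrm{diag}(b)w_0\ge0$ is a nonnegative left eigenvector of $\hat Q$ at eigenvalue $1$. For an irreducible nonnegative matrix the only eigenvalue admitting a nonnegative eigenvector is the simple Perron root, so $1$ is simple, $y_0>0$, and the right Perron eigenvector $z>0$ as well. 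Translating back through $A(g)=(\hat Q^\top-I)\mathrm{diag}(b)$: the right kernel of $A(g)$ is $\mathrm{span}(\mathrm{diag}(b)^{-1}y_0)=\mathrm{span}(w_0)$, one-dimensional with $w_0>0$ (so $b^\top w_0=\E[w_0]=1\ne 0$), while the left kernel is $\mathrm{span}(z)$ with $z>0$ everywhere.

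Invertibility of $\tilde A(g)$ then follows from the full support of the left null vector. Suppose $\tilde A(g)x=0$. Its first $|\Ss|-1$ rows give $(A(g)x)_k=0$ for $k<|\Ss|$, and since $z^\top A(g)=0$ we get $0=\sum_k z_k (A(g)x)_k = z_{|\Ss|}(A(g)x)_{|\Ss|}$; as $z_{|\Ss|}>0$ the dropped equation is recovered, so $A(g)x=0$ and $x=c\,w_0$. The last row of $\tilde A(g)$ is $b^\top$, giving $0=b^\top x=c\,b^\top w_0=c$, hence $x=0$. For the set identity both inclusions reuse this invertibility together with an already-available solution of the full system. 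If $g\in\psi$, its witness $w_0$ satisfies $\tilde A(g)w_0=v$, so $w_0=\tilde A(g)^{-1}v$; and $w_0\in\Theta$ by \Cref{lemma-weight-reparametrization}, since $A(g)w_0=0$ (the matrix form of the estimating equation of \Cref{prop-lumpable-infhorizon}) certifies $F(w_0)=0$ while $\E[w_0]=1$. Conversely, if $w\in\Theta$ then \Cref{lemma-weight-reparametrization} furnishes $g\in\Wsetctrlv$ with $A(g)w=0$ and $\E[w]=1$, and $w\ge0$ as a stationary density ratio; thus $g\in\psi$, $\tilde A(g)w=v$, and $w=\tilde A(g)^{-1}v$.

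I expect the irreducibility step to be the main obstacle. It is \emph{not} a consequence of $g\in\Wsetctrlv$ alone --- the moment constraint \cref{eqn-conditional-moment} does not by itself make $\hat Q$ row-stochastic or its eigenvalue $1$ the Perron root --- so the argument genuinely needs both the nonnegative kernel witness supplied by $\psi$ and the ergodicity of \Cref{asn-recurrent}. Care is also required to argue that the behaviour-supported sparsity pattern of $\hat Q$ really matches the edge set of the ergodic aggregated evaluation chain, which is where an overlap/positivity consequence of the behaviour policy enters. The remaining manipulations are routine linear algebra.
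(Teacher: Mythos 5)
Your proof is correct, and it reaches invertibility by a genuinely different route than the paper. The paper's proof (in \Cref{apx-optalg}) is structural and terse: it invokes the conformability analysis behind \Cref{lemma-weight-reparametrization} (feasible $g$ correspond to valid marginal transition probabilities with $\sum_k p(k\mid j,a)=1$), which makes the weighted row sums of $A(g)$ vanish --- i.e., the explicit left null vector is $\mathbf{1}$, by flow conservation --- asserts rank $\nS-1$ (the ``kernel is exactly one-dimensional'' half being inherited from uniqueness of the stationary distribution under \Cref{asn-recurrent}), and then simply states $w=\tilde A(g)^{-1}v$, leaving the set identity $\Theta=\{\tilde A(g)^{-1}v:g\in\psi\}$ largely implicit. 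You instead factor $A(g)=(\hat Q^\top-I)\operatorname{diag}(b)$ and run Perron--Frobenius: the feasibility witness $w_0$ supplies a nonnegative eigenvector at eigenvalue $1$, irreducibility of the $g$-independent sparsity pattern (using $g\ge l\ge 1>0$) forces $1$ to be the simple Perron root with strictly positive left and right eigenvectors, and the dropped-row argument finishes as in the paper. What your route buys: it is self-contained (no appeal to the conformability identity from the sharpness machinery), it delivers simplicity of the kernel and strict positivity of $w_0$ and $z$ in one stroke, and since $z>0$ entrywise it shows \emph{any} row of $A(g)$ could be replaced by $b^\top$, not just the last; your explicit two-directional verification of the set identity via \Cref{lemma-weight-reparametrization} is also more complete than what the paper writes down. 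What the paper's route buys: the left null vector is identified for free as $\mathbf{1}$, with no support-matching argument needed. Your flagged irreducibility step does go through under the paper's assumptions, but you should spell out the two ingredients: finiteness of $m(a\mid s)$ under the marginal sensitivity model forces overlap, $\pi_b(a\mid s,u)\ge 1/m(a\mid s)>0$ for all $u$; and \Cref{asn-no-time-dependent-confounding} implies $p^{(\infty)}_e(u\mid j)=p^{(\infty)}_b(u\mid j)$ (the joint densities differ by the $u$-free factor $w(j)$, which cancels upon conditioning), so the behavior-supported pattern $\{(j,k):\exists a,\ p_b^\infty(j,a,k)>0,\ \pi_e(a\mid j)>0\}$ coincides with the edge set of the aggregated evaluation chain, which is irreducible on the finite recurrent observed state space by \Cref{asn-recurrent}.
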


\Cref{lem-matrix-inversion-body} suggests computing 
$\underline R_e,\,\overline R_e$ by solving \begin{equation}\label{eqn-opt-nonconvex}
\ts  \inf/\sup~\{ \varphi^T\tilde A(g)^{-1}v ~~\colon~~ g\in\psi \},
 \end{equation}
 where $\varphi_s=\Phi(s)p_b^\infty(s)$.
This optimization problem has both a non-convex objective and a non-convex feasible set, but it has small size.
As a way to approximate $\underline R_e,\,\overline R_e$, we propose a gradient descent approach to solving \cref{eqn-opt-nonconvex} in Algorithm 1. Since the feasible set is itself non-convex, we use an approximate projection that corrects each $g$ iterate to a feasible point but may not be a projection. This is based on taking alternating projection steps on $
w^*_{g_k} \in \arg\min \{\norm{A(g_k) w }_1 \colon \E[w(s)] = 1 \} $ and 
$\tilde{g}_{k} \in \arg\min \{ \norm{ g - g_k}_1 \colon A(g) w^*_{g_k} = 0, g \in \tilde\Wset \} $; each of these is a linear program with $\nS$ or $\nS^2 \nA$ many variables, respectively.

\remark Since the tabular setting is a special case of linear function approximation for $w$, our approach directly handles the case where $w = \theta^\top s$ is a linear function of the state, but further requires well-specification. See \cref{apx-linearfunctionapprox} for detail and discussion of additional challenges.

\vspace{-5pt}
\section{Consistency}\label{sec-cons-sharpness} 
\vspace{-5pt}
The above analysis considered the population setting,
but in practice we use the empirical state-action occupancy distribution, $\hat{p}^\infty_b(s,a,s')$. Define $\hat{\overline{\polval}}_e, \hat{\underline{\polval}}_e$ as the corresponding values when we solve \cref{eq:Re} with this estimate in place of $\Pstat_b(s,a,s')$. We establish consistency of the estimated bounds.
\begin{theorem}[Consistency]\label{thm-consistency} 
If $\hat{p}^\infty_b(s,a,s')\to\Pstat_b(s,a,s')$, then
$\ts\hat{\overline{\polval}}_e \to  {\overline{\polval}}_e, \,    \hat{\underline{\polval}}_e \to  {\underline{\polval}}_e$.
\end{theorem}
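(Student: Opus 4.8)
The plan is to view $\overline R_e$ and $\underline R_e$ as the optimal values of a parametric optimization problem whose data are continuous functions of $p_b^\infty(s,a,s')$, and to show that this optimal value is continuous in the data at the true distribution; applying this with $\hat p_b^\infty \to p_b^\infty$ then gives the claim. Concretely, I would work with the lifted representation of \Cref{lem-matrix-inversion-body}, writing $\overline R_e=\sup\{\varphi^\top\tilde A(g)^{-1}v : g\in\psi\}$ and $\underline R_e=\inf\{\varphi^\top\tilde A(g)^{-1}v : g\in\psi\}$, and make the data-dependence explicit: the box bounds $l(a\mid j),m(a\mid j)$, the coefficients of the moment equation \cref{eqn-conditional-moment}, the matrix $A(g)$, and the vectors $b,\varphi$ all depend on $p_b^\infty$. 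Under \Cref{asn-recurrent} the stationary marginals satisfy $p_b^\infty(s)>0$, so $\pi_b(a\mid s)=p_b^\infty(s,a)/p_b^\infty(s)$, and hence $l,m,A(g),b,\varphi$ are continuous in $p_b^\infty$ near the truth. Since $g$ is confined to the fixed box $[l,m]$, the set $\psi$ is compact and $\Theta$ (its image under $g\mapsto\tilde A(g)^{-1}v$) is compact as well, so the suprema and infima are attained. I would then conclude continuity of the value via Berge's maximum theorem, after verifying (i) joint continuity of the objective and (ii) continuity, i.e. both upper and lower hemicontinuity, of the correspondence $p_b^\infty\mapsto\psi$.

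For the objective, the only danger is the matrix inverse. Here \Cref{lem-matrix-inversion-body} gives $\det\tilde A(g)\neq0$ for $g\in\psi$, so by continuity of the determinant and compactness $\lvert\det\tilde A(g)\rvert$ is bounded below on $\psi$ at the true $p_b^\infty$; this is an open condition, so invertibility and a uniform bound on $\tilde A(g)^{-1}$ persist for $g$ near $\psi$ and $p_b^\infty$ near the truth, making $(g,p_b^\infty)\mapsto\varphi^\top\tilde A(g)^{-1}v$ jointly continuous. The easy half of the value continuity is the upper-hemicontinuous (closedness) direction, yielding $\limsup\hat{\overline R}_e\le\overline R_e$ and $\liminf\hat{\underline R}_e\ge\underline R_e$: taking a near-optimal $\hat g_n$ for $\hat p_n$, compactness of the box extracts a subsequence $\hat g_n\to g_\infty$, continuity of all constraint functions forces $g_\infty\in\psi$ (feasible at the truth), and joint continuity of the objective gives convergence of the values. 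This is the routine part.

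The main obstacle is the reverse inequality, namely lower hemicontinuity of $\psi$: given an optimizer $g^\star\in\psi$ at $p_b^\infty$, I must produce feasible $\hat g_n$ for the perturbed data with $\hat g_n\to g^\star$, so that $\hat w_n=\tilde A(\hat g_n)^{-1}v\to w^\star$ and the objective converges from below. The estimating-equation block is harmless, since $w$ is recovered by an exact, perturbation-stable matrix inversion; the difficulty is restoring the perturbed \emph{equality} systems, namely the moment constraint \cref{eqn-conditional-moment}, while keeping $g$ inside the perturbed box and keeping the induced $w$ nonnegative. Because the moment system decouples into blocks $(k,a)$, each a single nonzero row $(p_b^\infty(j,a,k))_j$ (nonzero under \Cref{asn-recurrent}), it has full row rank with far more $g$-unknowns, $O(\nS^2\nA)$, than equations, $O(\nS\nA)$; so its Jacobian in $g$ is surjective and a Hoffman/metric-regularity bound supplies a correction of size $O(\lVert\hat p_n-p_b^\infty\rVert)\to0$ that re-satisfies the equalities. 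The genuinely delicate point is that this correction could violate an active box bound or drive a component of $w$ negative, which is exactly where lower hemicontinuity can fail. I would resolve this by a relative-interior argument anchored at the true ratio $w_0=p_e^\infty/p_b^\infty$, which is strictly positive by \Cref{asn-recurrent} and, for $\Gamma>1$, admits marginal weights in the relative interior of the box strictly satisfying the constraints: using \Cref{thm-sharpness}, the attained value set is the full interval $[\underline R_e,\overline R_e]$, so its endpoints are approached by interior feasible points at which no box bound is tight and $w$ is bounded away from zero, where the Hoffman correction stays feasible; a diagonal argument then passes to the limit. Combining the two directions gives $\hat{\overline R}_e\to\overline R_e$ and $\hat{\underline R}_e\to\underline R_e$, and as a byproduct the construction shows $\hat\Theta$ is eventually nonempty, so the estimated bounds are well defined.
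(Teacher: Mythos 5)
You have identified the right pressure point---lower hemicontinuity of the feasible-set correspondence---but the patch you offer for it does not work, and this is a genuine gap. Your claim is that, by \Cref{thm-sharpness}, values near the endpoints $\overline R_e,\underline R_e$ are attained at feasible $g$ lying in the relative interior of the box $[\abnd,\bbnd]$ with the induced $w$ bounded away from zero. But \Cref{thm-sharpness} only says that the \emph{value set} $\{\E[w(s)\Phi(s)]:w\in\Theta\}$ is the full interval; it says nothing about where near-optimal points of $\psi$ sit relative to the boundary of the box. Since you are optimizing $\varphi^\top\tilde A(g)^{-1}v$ over the nonconvex set $\psi$, optimizers will generically have active box constraints, and nonconvexity of $\psi$ blocks the natural repair: a convex combination of $g^\star$ with the strictly interior nominal weights preserves the box and the linear moment constraints \cref{eqn-conditional-moment}, but not membership in $\psi$, because the induced $w=\tilde A(g)^{-1}v$ is a rational function of $g$ and the requirement $w\ge0$ (and more fundamentally, feasibility of the estimating equation) carves out a nonconvex region. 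So your Hoffman/metric-regularity correction has no guaranteed room to move at exactly the points where you need it, and nothing in the proposal actually produces the ``interior near-optimal points'' on which the diagonal argument rests. (A smaller repairable issue: rows $(p_b^\infty(j,a,k))_j$ can vanish identically for structurally impossible transitions, so ``full row rank under \Cref{asn-recurrent}'' needs qualification.)

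The paper's proof sidesteps the hemicontinuity of the nonconvex correspondence entirely, and the device it uses is the one your proposal is missing: for \emph{fixed} $w$, the feasibility problem in $g$ is a \emph{linear} system over the polytope $\tilde\Wset$. The paper therefore discretizes the $w$-simplex with an $\epsilon$-covering $\mathcal E_w$, and for each grid point applies Robinson's stability theorem for linear systems with perturbed constraint-matrix coefficients, obtaining a quantitative two-sided bound of the form $d(\hat G(w),G(w))\le 2(1+\nS^2\nA\nu)^2\|\hat p_b^\infty-\Pstat_b\|_\infty$ under a regularity (Slater-type) condition; a union bound over the finite covering makes this uniform, and the remaining error is the discretization bias, controlled by letting $\epsilon\to0$ as $n\to\infty$. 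This converts the delicate lower-hemicontinuity question into an off-the-shelf LP stability estimate with both directions of the Hausdorff bound supplied at once. Your upper-semicontinuity half (compactness of the box, closedness of $\psi$, uniform invertibility of $\tilde A(g)$ near $\psi$) is fine and standard; but to rescue the Berge-style argument you would need to \emph{assume and verify} a constraint qualification (metric subregularity of the full system, including the box and the sign constraint on the induced $w$) at the optimizers---which is essentially what Robinson's theorem packages for the paper's linearized, fixed-$w$ subproblems. As written, the claimed byproduct that $\hat\Theta$ is eventually nonempty also rests on the flawed step.
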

Since the empirical distributions satisfy $\hat{p}^\infty_b(s,a,s')\to_p\Pstat_b(s,a,s')$ \citep[see, \eg,][]{mannor2005empirical}, \cref{thm-consistency} and the continuous mapping theorem would then together imply that $\ts\hat{\overline{\polval}}_e \to_p  {\overline{\polval}}_e, \,    \hat{\underline{\polval}}_e \to_p  {\underline{\polval}}_e$. Since the perturbation of ${p}^\infty_b$ to $\hat{p}^\infty_b$ 
introduces perturbations to the \textit{constraint matrix} of the LP,
to prove this result we leverage a general stability analysis due to \citep[Theorem 1]{robinson1975stability}.


\begin{figure*}
\centering
	\begin{minipage}[t]{0.3\textwidth}
	\includegraphics[width=\textwidth]{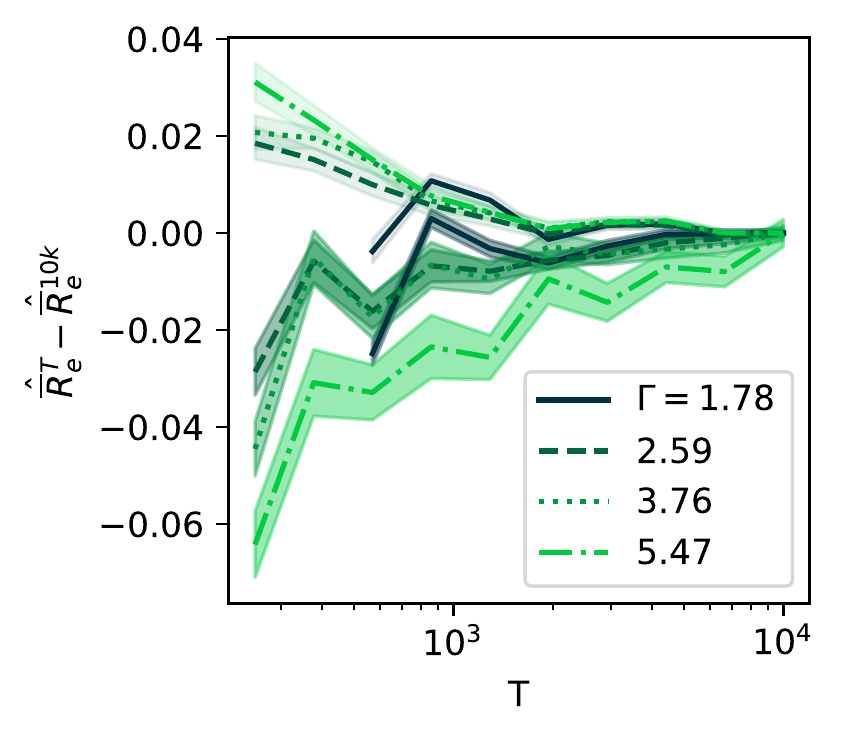} 
		\caption{Statistical 
			\\
			consistency}
	\label{fig-stat-consistency} 
\end{minipage}\begin{minipage}[t]{0.35\textwidth}\centering
		\captionsetup{width=.8\linewidth}   
		\includegraphics[width=\textwidth]{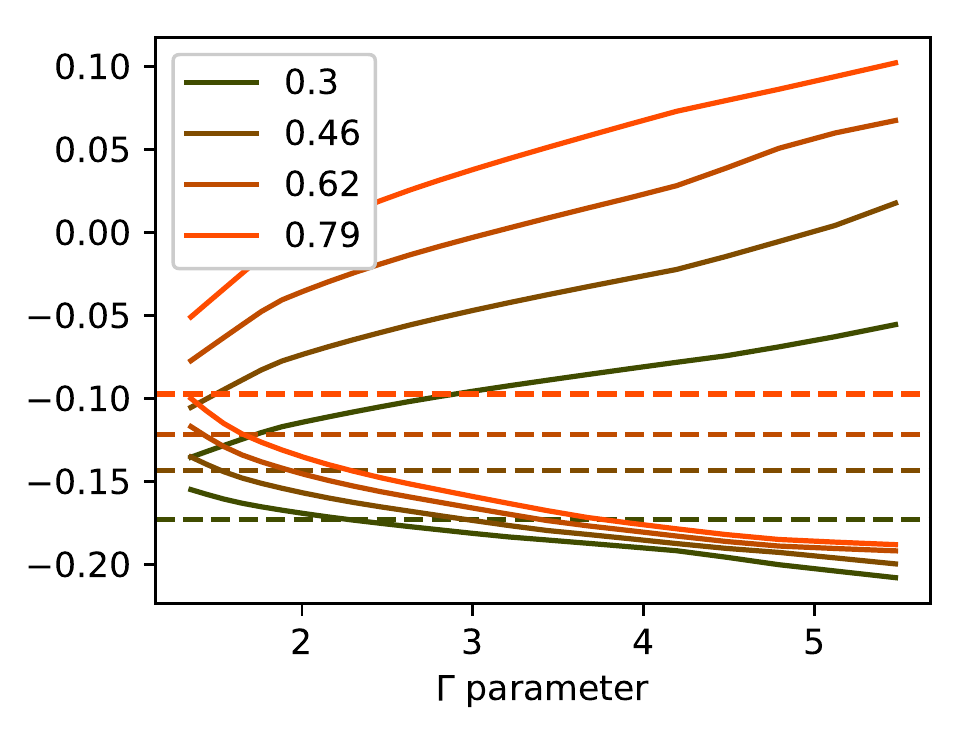}
		\caption{Varying evaluation 
			$\eta$ mixture weight from uniform  to $\pi_b^{*,u=0}$. 
		}
		\label{fig-gridworld-apx-bounds-mixture} 
	\end{minipage}\begin{minipage}[t]{0.35\textwidth}\centering
		\captionsetup{width=.8\linewidth}
		\includegraphics[width=\textwidth]{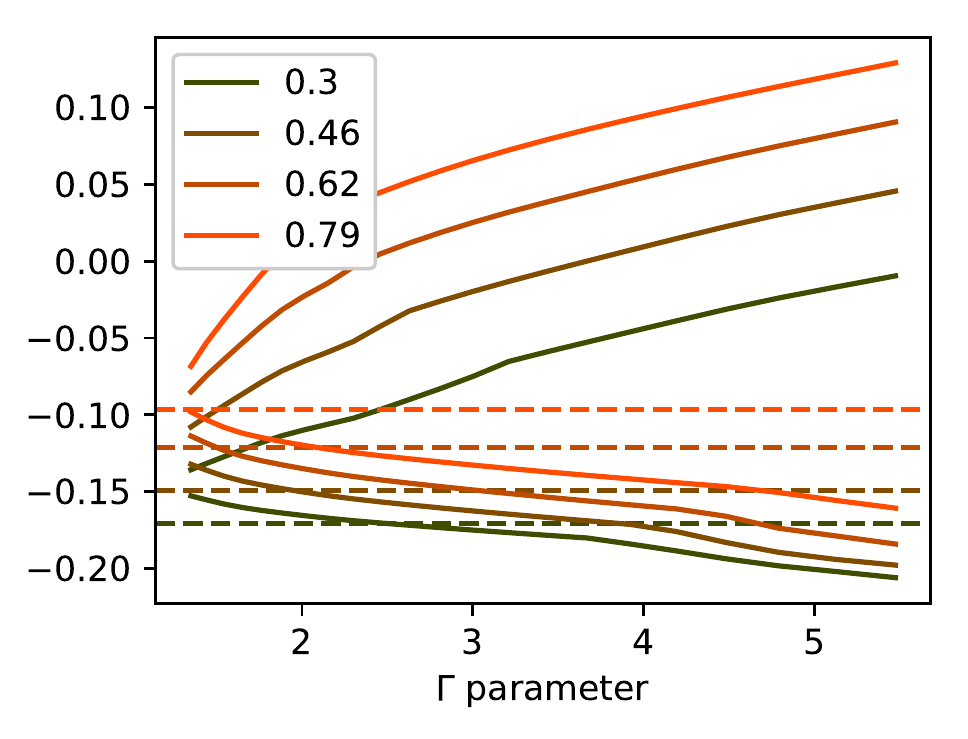}
		\caption{ Varying from uniform  to $\pi_b^{*,\Ss }$. 
		}
		\label{fig-gridworld-apx-bounds-mixturepi-s-marg} 
	\end{minipage}
\end{figure*}

\section{Empirics}\label{sec-empirics}

\paragraph{Illustrative example: Confounded random walk} We introduce a simple example in \Cref{fig-conf-random-walk}. 
\Cref{fig-conf-random-walk} satisfies a sufficient condition of \cref{lemma-suff-asn-no-time-dependent}
this is graphically denoted by arrows from $(s,u)$ tuples to $s'$ states. The confounded random walk is parametrized by the transition probabilities 
under action $a=1$:
$\textstyle P(s_1  \mid s_1,u_1, a=1) = p_{u_1}, \;\; P(s_1 \mid s_1,u_2, 1) = \frac 12 -p_{u_2},$
 where $u$ is generated exogenously upon transiting a state $s$. Transitions are antisymmetric under action $a=2$, 
 $
 P(s_1\mid  s_1,u_1, 1) = \frac 12 - p_{u_1}, \;\; P(s_2\mid s_1,u_2, 1) = p_{u_2}
 $. Then, a stationary policy that is uniform over actions generates a random walk on $\Ss$. See \cref{apx-empirics} for a full description.

In \cref{fig-conf-random-walk-varying-transitions}, we vary the underlying transition model, varying $p_{u_1} = p_{u_2}$ on a grid $[0.1,0.45]$, and we plot the varying bounds with action-marginal control variates. The true underlying behavior policy takes action $a=1$ with probability $\pi(a=1\mid s_1,u_1) = \pi(a=1\mid s_2, u_1)= \frac{1}{4}$ (and the complementary probability when $u=u_2$), modeling the setting where a full-information behavior policy is correlated with the full-information transitions. The behavior policy on $\Ss$ appears to be uniform; using these confounded estimates results in biased estimates of the transition probabilities. As we vary the transition model in \cref{fig-conf-random-walk-varying-transitions}, note that the true policy value when $\Phi = [1,2]$ is $1.5$, indicated by $\star$, and it is within the bounds for large enough $\Gamma=3$, uniformly over different data-generating processes. 
\vspace{-5pt}
\paragraph{3x3 confounded windy gridworld. }
We introduce unobserved confounding to a 3x3 simple windy gridworld environment, depicted in \Cref{fig-gridworld} in the appendix \citep{sutton1998introduction}. The the agent receives $\Phi(s) =1$ reward at a goal state but $\Phi(s) = -0.3$ at hazard states (shaded red). We assume a binary unobserved confounder $u \in \{0,1\}$ that represents ``wind strength''. 
Transitions in the action direction succeed in that direction with probability $p=0.8$, otherwise with probability $0.1$ the agent goes east or west.  However, when $u=1$, the ``westward wind'' is strong, but if the agent takes action ``east'', the agent instead stays in place (otherwise the agent transitions west). The wind is generated exogenously from all else in the environment. An optimal full-information behavior policy (agent with wind sensors) varies depending on $u_t$ by taking the left action, to avoid the penalty states. This models the setting where unobserved confounding arises due to exogenous private information. 

In \Cref{fig-stat-consistency}, we study the finite-sample properties of the bounds estimator, plotting $\textstyle\hat{\overline{R^T}}_e- \hat{\overline{R^{10k}_e}}$ for differing trajectory lengths on a logarithmic grid, $T\in[250,10000]$, and standard errors averaged over 50 replications. (We plot the difference in order to normalize by an estimate of the large-sample limit). The bounds converge; optimizing over larger $\Gamma$ values tends to increase finite sample bias. 



 We illustrate bounds obtained by our approach in 
 \Cref{fig-gridworld-apx-bounds-mixture} for evaluation policies which are mixtures to $\pi_b^{*,u=0}$, a
suboptimal policy that is optimal for the transitions when $u=0$ (no wind), and in \Cref{fig-gridworld-apx-bounds-mixturepi-s-marg},
$\pi^{*\Ss}$, a policy that is optimal on $\Ss$ given the true marginalized transition probabilities (which are unknown to the analyst, but known in this environment). 
We display the bounds as we range mixture weights on the non-uniform policy from $0.3$ to $0.8$. We display in a dashed line, with the same color for corresponding mixture weight $\eta$, the true value of $R_e$.


\section{Related work, discussion, and conclusions}\label{sec-related-work} 

\paragraph{Off-policy evaluation in RL.} 
We  build most directly on a line of recent work in off-policy policy evaluation which targets estimation of the stationary distribution density ratio \citep{hallak2017consistent,liu2018breaking,gelada2019off,kallus2019efficiently}, which can be highly advantageous for variance reduction compared to stepwise importance sampling.
\paragraph{Sensitivity analysis in the batch setting in causal inference.} 
Sensitivity analysis is a rich area in causal inference. 
A related work in 
approach is \citep{kallus2018confounding}, which builds on \citep{aronowlee12} and considers robust off-policy evaluation and learning in the \textit{one-decision-point} setting. {In \Cref{sec-apx-fh}, we discuss an extension of their or other inverse-weight robust approaches to this setting and the inherent challenges of such an approach with long horizons in introducing ``exponential robustness''.}
The identification approach in this work is very different:
the partial identification region is only identified \textit{implicitly} as the solution region of an estimating equation. 
Unlike \citep{kallus2018confounding}, \citep{bennett2019policy} consider an \emph{identifiable} setting where we are given a well-specified latent variable model and propose a minimax balancing solution. Finally, \citep{kallus2018interval,yadlowsky2018bounds} study bounds for conditional average treatment effects in the \textit{one-decision-point} setting. \cite{yang2018sensitivity} consider sensitivity analysis under assumptions on the outcome functions for a marginal structural model.

\paragraph{Off-policy evaluation in RL with unobservables.}
Various recent work considers unobserved confounders in RL. \citep{oberst2019counterfactual} considers identification of counterfactuals of trajectories in an POMDP and SCM model. 
\citep{tennenholtz2019off} study off-policy evaluation in the POMDP setting, proposing a ``decoupled POMDP'' and leveraging the identification result of \citep{miao2018identifying}, viewing previous and future states as negative controls. \citep{lu2018deconfounding} propose a ``deconfounded RL method'' that builds on the deep latent variable approach of \citep{louizos2017causal}
. 
\citep{zhang2019near} uses partial identification bounds to narrow confidence regions on the transition matrix to warm start the UCRL algorithm of \citep{jaksch2010near}.

These generally consider a setting with sufficient assumptions or data to render policy values \emph{identifiable}, where in the general observational setting they are unidentifiable.
Specifically, \citep{tennenholtz2019off} require an invertibility assumption that implies in a sense that we have a proxy observation for \emph{every} unobserved confounder, \citep{lu2018deconfounding} assume a well-specified latent variable model, also requiring that every unobserved confounder is reflected in the proxies, and \citep{zhang2019near} consider an online setting where additional experimentation can eventually identify policy value.
Our approach is complementary to these works: 
we focus on the time-homogeneous, infinite-horizon case, and are agnostic to distributional or support assumptions on $u$.
Our structural assumption on $u$'s dynamics (\Cref{asn-no-time-dependent-confounding}) is also new.

In contrast to POMDPs in general, 
which \textit{emphasize} the hidden underlying state and its statistical recovery, 
our model is distinct in that we focus on rewards as functions of the observed state. 
In contrast to robust MDPs, one 
may derive 
a corresponding ambiguity set on transition matrices (see \Cref{lemma-non-rectangularity} in the appendix), but it is generally
\textit{non-rectangular} because 
the ambiguity set does not decompose as a product set over states, which leads to a NP-hard problem in the general case \cite{wiesemann2013robust}. See \cref{apx-pomdp} of the appendix for a fuller discussion.

\vspace{-5pt}
\paragraph{Conclusions and future work}
Our work establishes
partial identification results for policy evaluation in infinite-horizon RL under unobserved confounding. 
We showed that the set of all policy values that agree with both the data and a marginal sensitivity model can be expressed in terms of whether the value of a linear program is non-positive and developed algorithms that assess the minimal- and maximal-possible values of a policy. 
Further algorithmic improvements are necessary in order to extend our results to infinite state spaces. Finally, an important next step is to translate the partial identification bounds to robust policy learning.

\clearpage
\section*{Broader Impact} 

As a contribution to offline RL,
our work is of particular importance for RL in the context of social and medical sciences, where experimentation is limited and observational data must be used. The validity of the no-unobserved-confounders assumption is of particular importance to applied research in these fields since the presence of unobserved confounding can bias standard evaluations that assume the issue away and, unchecked, this may potentially hide harms done by the policy being evaluated or learned. Our work is the first step in developing offline RL algorithms that directly address this real, practical issue, and its primary purpose is to directly deal with such biases in the data.

That said, it is well understood that there is generally a gap between theory and practice in RL as it is applied to very complex and large-scale systems. It is therefore important to keep in mind practical heuristics, stopgaps, and approximations from applied RL when translating this work into practice. The systematic investigation of the use of these in the context of confounded offline RL and more extensive experimental studies in larger environments may require additional future work. 

Moreover, there are several general potential dangers to be cognizant of when applying any offline RL tool in practice. First, if the observational data is not representative of the population, that is, there is a covariate shift in the state distribution, then the evaluation will reflect these biases and correspondingly be unrepresentative, under-emphasizing value to some parts of the population and over-emphasizing value to others. More generally, even without covariate shift, here we focused on evaluation of \emph{average} welfare, which may average the harms to some and the benefits to others; it may therefore be important in some applications to also conduct auxiliary evaluations on certain protected subgroups to ensure equal impact, which can be done by segmenting the data. Third, it is possible that any offline RL approach may be applied inappropriately when the assumptions are not met -- here we dealt directly with how to deal with violations of unconfoundedness assumptions but there may still be other assumptions such as our \cref{asn-recurrent} -- and this will mislead any evaluation or learning. For example, concerns about violations of absolute continuity of importance weights (overlap) are especially relevant in the offline RL setting. Therefore, we strongly recommend considering such offline RL and, in particular the sensitivity analyses we developed herein, as a way to inform further investigation, additional data collection, and/or investment in a randomized trial, rather than as an outright replacement for any of these. That said, offline evaluation, especially robust evaluation as we propose herein, is crucial for assessing policies \emph{before} even trialling them, where they may effect actual negative impacts on study populations.


	\bibliography{sensitivity_and_rl}
\bibliographystyle{icml2020}

\clearpage
\onecolumn

\appendix

\paragraph{Outline of the appendix} 

\begin{itemize}
	\item \Cref{apx-discussion} contains omitted discussion and comparison to other frameworks.
	\item \Cref{apx-partialid} contains proofs regarding the model and partial identification thereof. 
	\item \Cref{apx-optalg} discusses optimization and algorithms. 
	\item \Cref{apx-consistency} proves statistical consistency.
	\item \Cref{apx-empirics} contains additional empirics and computational discussion.
\end{itemize}

\section{Additional context}\label{apx-discussion}

\subsection{Off Policy Policy Evaluation: Relationship to finite-horizon case}\label{sec-apx-fh}
To aid comparison to the off-policy policy evaluation literature, we describe an approach to robustness under unobserved confounding which might be pursued in the finite-horizon case, which does not leverage stationarity.
Such an approach would bound the density ratio product of true behavior policy weights
$ \prod_{t \in[H]}\frac{ \pi_e(a_t \mid s_t)   }{ \pi_b(a_t \mid s_t, u_t)  } $ relative to the product of nominal inverse propensity weights, $\prod_{t \in[H]}\frac{ \pi_e(a_t \mid s_t)   }{ \pi_b(a_t \mid s_t)  }$ (including the moment restrictions according to it being a valid density ratio). It is apparent by factorizing the 
joint distribution that the true density ratio product 
would identify the policy value.
\begin{align*} 
\polval_e=  & \E[ \sum_{h=1}^H  r_h \mid a_{1:H } \sim \pi_e ]\\
=&\E_{b} \left[ 
\left( \frac{p^{(0)}(s_0, u_0) }{p^{(0)}(s_0, u_0) } \prod_{t \in[H]}\frac{ \pi_e(a_t \mid s_t) p(r_t\mid s_t, a_t) p(s_{t+1}, u_{t+1} \mid s_t, u_t, a_t) }{ \pi_b(a_t \mid s_t, u_t) p(r_t\mid \mathcal{H}_t) p(s_{t+1}, u_{t+1}\mid s_t, u_t, a_t)} \right) \sum_{h=1}^H  r_h  \right]\\
=&\E_{b} \left[ 
\prod_{t \in[H]}\frac{ \pi_e(a_t \mid s_t)   }{ \pi_b(a_t \mid s_t, u_t)  }  \sum_{h=1}^H r_h  \right]
\end{align*} 

While optimizing bounds on the range of the product of importance-sampling weights, $ \prod_{t \in[H]}\frac{ \pi_e(a_t \mid s_t)   }{ \pi_b(a_t \mid s_t, u_t)  } $
may be tractable 
using geometric programming \cite{boyd2007tutorial}, enforcing the moment restrictions on density ratios (as in \cref{eqn-conditional-moment}) introduces further difficulty (nonconvex equality constraints). The core difficulty is that considering an uncertainty set which decomposes as a product set over timesteps 
may be too conservative to be useful in practice. 

Further, in the infinite horizon setting, the divergence of the robust density ratio products grows with the horizon length. This is immediate when considering a relaxation of the sharp uncertainty set, i.e. optimizing the density ratio within bounds without enforcing moment constraints that must be satisfied by inverse probabilities by a geometric programming reformulation \cite{boyd2007tutorial}; verifying similar properties in the sharp setting with additional nonlinear constraints may require additional analysis. 

\subsection{Related work and relation to POMDPs and robust MDPs}\label{apx-pomdp}
\paragraph{Contrast to POMDPs.} 
In contrast to POMDPs in general, 
which \textit{emphasize} the hidden underlying state, 
our model is distinct in that we focus on rewards as functions of the observed state. The unobserved confounder is therefore a ``\textit{nuisance}'' confounder which prevents us from estimating policy value, rather than the true underlying state to recover. In settings where unobserved confounders are of concern in observational data in causal inference, typically it is unclear whether or not a latent variable model such as those underlying POMDPs indeed generalizes to the time of deployment: assuming so corresponds to a structural assumption about the environment.  

\paragraph{Contrast to Robust MDPs.} Robust MDPs, representing a \textit{model-based} approach, consider policy evaluation or improvement over an ambiguity set of the \textit{transition probabilities} \cite{i-robustmdp-05,wiesemann2013robust,nilim2005robust}. 
Alternatively, some approaches build confidence regions from concentration inequalities \cite{thomas2015high,pgc16} and restrict recommendations within them.
\citep{petrik2014raam} improve performance guarantee bounds for state aggregation in MDPs; but in their setting they are able to sample additional full-information transitions
unlike our fully-observational data setting
The difficulty in applying the robust MDP framework using an ambiguity set on transition matrices 
suggested from \Cref{lemma-non-rectangularity} (in the appendix) is
\textit{non-rectangularity} because 
the ambiguity set does not decompose as a product set over states, which leads to a NP-hard problem in the general case \cite{wiesemann2013robust}.

	\section{Proofs: Model and Partial Identification}\label{apx-partialid}

\begin{proof}[Proof of \Cref{lemma-suff-asn-no-time-dependent}]
	This is apparent from a recursive definition of $w(j,u)$: 
	\begin{align*} w(s,u) &= \lim_{T\to \infty} \frac{ \sum_{t=1}^T  \sum_{a,k,u'} p(j,u \mid k,u', a) p^{t-1}_e (k,u', a)   }{ \sum_{n=1}^T  \sum_{a,k,u'} p(j,u \mid k,u', a)  p^{t-1}_b (k,u', a) } \\
	&= \lim_{T\to \infty} \frac{ \sum_{t=1}^T  \sum_{a,k,u'} p(j,u'' \mid k,u', a)  p^{t-1}_e (k,u', a)   }{ \sum_{n=1}^T  \sum_{a,k,u'} p(j,u'' \mid k,u', a)  p^{t-1}_b (k,u',a) } = w(s,u'')
	\end{align*} 
\end{proof}


\begin{proof}[Proof of \Cref{prop-lumpable-infhorizon}]
Clearly, by assumption of Markovian dynamics on the full information state space, $w(s,u) 
	$ solves the estimating equation (state-action flow equations) on the space of $\Ss \times \mathcal U $, 
		\begin{align}
	&\E
	[ \pi_e(a \mid s) \wstat (s,u) \Wpi(a \mid s,u)  \mid s' = k ]= \wstat(k,v)~~\forall k \in \Ss, v \in \Uu \\
	&\E[w(s,u)] = 1.
	\end{align}

	We will proceed to show that under \Cref{asn-no-time-dependent-confounding}, the projected $\tilde{w}(k)$ defined as $\tilde{w}(k) \defeq w(k,v)$ equivalently solves the estimating equation on the observed state space $\Ss$.

	The forward implication of \Cref{prop-lumpable-infhorizon} follows from Theorem 1 of \citet{liu2018breaking} applied to the state variable $(s,u)$ after recognizing that $w(s,u)$ only depends on $s$ under \Cref{asn-no-time-dependent-confounding} and marginalizing out $u'$. 
	\begin{align*}
	\wmarg(k) =\frac{\Pstat_e (k)}{\Pstat_b (k)} =   \wstat(k,v) &  = \E_{(s,u), a, (s',u') \sim \Pstat_b} [ \wstat(s,u) \pi_e(a\mid s) \Wpi(a\mid s,u)   \mid s' =k, u'= v ] \\ 
	&= \E_{(s,u), a, s' \sim \Pstat_b} [ \wstat(s,u) \pi_e(a\mid s) \Wpi(a\mid s,u)   \mid s'=k] && \text{by \Cref{asn-no-time-dependent-confounding}}\\
	& = \E_{(s,u), a, s' \sim \Pstat_b} [ \wmarg(s) \pi_e(a\mid s) \Wpi(a\mid s,u)   \mid s'=k]\\
	\end{align*}

Verifying that $\tilde\wstat(k) p^{(\infty)}_{b}(k)$ satisfies conditions on the invariant measure on $s$: 
	\begin{align*}
	\tilde\wstat(k) 
	&= \frac{1}{p^{(\infty)}_{b}(k)} \sum_{j,a} \tilde\wstat(j)\sum_u p(k \mid j,u,a)  p^{(\infty)}_{b}(s,u) \pi_{b}(a\mid s,u)
	\frac{\pi(a\mid s)}{\pi_{b}(a\mid s,u)} , \forall k\\
	&= 	\frac{1}{p^{(\infty)}_{b}(k)}
	\sum_{j,a} \tilde\wstat(j)\pi(a\mid j)\sum_u p^{(\infty)}_{b}(j,u) p(k \mid j,u,a)   , \forall k
	\end{align*}
	
Therefore,
	$$ 	\tilde\wstat(k) p^{(\infty)}_{b}(k) = \sum_{j,a} \tilde\wstat(j)p^{(\infty)}_{b}(j) \pi(a\mid j) p (k \mid j, a) , $$
	
so we conclude that $\tilde\wstat(s) \propto p^{(\infty)}_{e}$, the stationary distribution on $\Ss$ induced by $\pi_e$.
	
	Finally, we argue the reverse implication; uniqueness of the solution of $\tilde w(s)$. Uniqueness is a consequence of the positive recurrence assumption (\Cref{asn-recurrent}) on the full-information MDP on $\Ss \times \uset$.	Note that by definition of recurrence, recurrence on the full-observation state space of the Markov process induced under $\pi$ implies recurrence of the Markov process induced under $\pi$ on its marginalized transitions $p(k \mid j,a)$. Recurrence requires that starting from any state $j,u$ in the recurrent class, the number of visits of the chain to the state is infinite. Clearly, if this is satisfied by the full-information transition matrix, this is also satisfied for the aggregated recurrent class corresponding to marginalized transitions. 
	
	Therefore, the stationary distribution exists, and is unique on $\Ss$, under the marginalized transition matrix induced by $\pi_e$. The solution to the invariant measure flow equations on $ \Ss$ satisfies that $\tilde w(s) \Pstat_b(s) \propto \Pstat_e(s)$; and only $\tilde w(s)$ satisfies this requirement. 
\end{proof}

\subsection{Observable Implications, Membership Oracle, and Sharpness}\label{apx-obs-implications}
In this section, we introduce the state-action polytope and the state-action-state frequency polytope and deduce the observable implications which lead to \Cref{prop-sensitivity-sharpness}, the main membership certificate result of \Cref{lemma-weight-reparametrization}, and \Cref{thm-sharpness}.

For any infinite-horizon MDP with transition matrix on $\Ss \times \Uu$, the stationary dynamics 
impose restrictions on
the unknown full-information state-action-state visitation distribution, $\Pstat_b(s,u,a,s',u')$, and its observable marginalization $\Pstat_b(s,a,s')$. 
These restrictions
are encapsulated as the full-information \textit{state-action polytope} $(\op{SAP})$, which is the set of all limiting state-action occupancy probabilities achievable under {any} policy, and 
the
closely related \textit{state-action-state} polytope $(\op{SASP})$ \citep{puterman2014markov,mannor2005empirical}.
%
Marginalizing the full-information constraints with respect to $\Pstat_b(u \mid s)$ leads to the marginalized versions $\op{mSAP}$ and $ \op{SASP}$.

\begin{proof}[Proof of \Cref{prop-sensitivity-sharpness}]
	To study the observational implications of $\op{SAP}, \op{SASP}$ (e.g. the implications of the full-information polytopes which are additionally enforceable as constraints on $\tilde \Wset$), we study the marginalized versions of both the state-action polytope and the state-action-state frequency polytope under the behavior policy as studied in \cite{puterman2014markov,mannor2005empirical}. Typically the extremal analysis of the state-action polytope in the infinite-horizon case  characterizes the structure of the optimal policy. We simply focus on its properties as a characterization of all the possible limiting state-action frequencies under \textit{any} stationary policy. 
	
	While \cite{mannor2005empirical} studies the asymptotic inclusion of
	more general policies such as non-stationary policies, we focus on the case of stationary policies for simplicity. While we state the following analysis for discrete state and action spaces (e.g. discrete $u$), the discussion of \citet[Sec. 4]{altman1991markov} provides regularity results on the set of limiting state-action measures for the continuous state case with continuous $u$. One sufficient condition for the case of continuous $u$ is that, indexing $x,y$ into abstract state tuples on $\Ss \times \Uu$, for the transition probability density defined as $P^\pi_{x,K} \defeq \sum_{y \in K} P^\pi_{x,y}$, with $P^\pi_{x,y} = \mathbb P(s_{t+1} = y\mid s_t =x)$, given any $\epsilon > 0$ there exist a finite set $K(\epsilon)$ and an integer $N(\epsilon)$ such that for all $x \in X$ and $g \in U(S)$, $[(P^\pi)^{N(\epsilon)}]_{x,K(\epsilon)} \geq 1 - \epsilon$. 
	
	%
	First, we introduce $\op{SAP}, \op{SASP}$ as studied in \cite{mannor2005empirical}.
	\begin{definition}[State-action polytope]
		Given an MDP, the \textbf{state-action polytope} $\op{SAP}$ is defined as the set of vectors $x$ in $\Delta^{\mathcal S \times \mathcal A}$ that satisfy 
		\begin{equation}\label{eqn-X-flow} 
		\sum_{j,u'} \sum_a p(k,u''\mid j,u',a) \Pstat_b(j,u',a) = \sum_{a'} \Pstat_b(k,u'', a'), \qquad \forall s' 
		\end{equation}
		
	\end{definition}
	
	where $ \Pstat_b(j,u',a)\in \Delta^{\mathcal S \times \mathcal A}$ is the limiting expected state-action frequency vector under policy $\pi$. This constraint can be understood as a ``flow conservation'' constraint satisfied by any full-information joint distribution $p_b^\infty(s,u,a,s',u')$. 

	\begin{definition}[State-action-state polytope]
		The \textbf{state-action-state} frequency polytope, $\op{SASP}$, is the set of vectors in $\Delta^{\mathcal S \times \mathcal A \times \mathcal S}$ which satisfy conformability of state transitions under transition probabilities; and marginalization: 
		
		\begin{equation}\label{eqn-Q-conformability-transitions}
		\Pstat_b(j,u',a,k,u'') = p(k,u''\mid j,u',a)  \sum_{\tilde k,\tilde u''} \Pstat_b(j,u',a,\tilde k,\tilde u''), \qquad \forall j,u', a ,( k, u'')
		\end{equation} 
		\begin{equation}\label{eqn-Q-flow-conservation}
		\sum_{j,u'}  \sum_a \Pstat_b(j,u',a,k,u'') = \sum_{a'} \sum_{\tilde k,\tilde u''} \Pstat_b(k,u'',a',\tilde k,\tilde u''),\qquad \forall (k,u'')
		\end{equation}
	\end{definition}
	Lemma 3.1 of \cite{mannor2005empirical} states that the two sets are equivalent: if $\Pstat_b(s,u,a)  \in \op{SAP}$, and furthermore under the transformation $$\Pstat_b(j,u',a,k,u'') =\Pstat_b (j,u',a) p(k,u''\mid j,u',a),$$ then $\Pstat_b(j,u',a,k,u'')\in \op{SASP}$. Every element of $\op{SASP}$ can be generated in this manner from some element of $\op{SAP}$. It is this construction that leads to the ``information loss'' stated formally in the next result which characterizes the marginalized versions of \cref{eqn-Q-conformability-transitions,eqn-Q-flow-conservation,eqn-X-flow} in terms of the marginalized weight of the reparametrization, $g_k(a\mid j)$.

	\begin{lemma}\label{lemma-sharpness-Q}
		The marginalized version of \cref{eqn-X-flow} is 
		\begin{equation}\label{eqn-sharp-X-compatibility}  \Pstat_b(k) = \sum_a \sum_j \Pstat_b(j) p(k\mid j,a ) g_k(a\mid j)^{-1}, \; \forall k \in \Ss 
		\end{equation} 
		
		The marginalized version of \cref{eqn-Q-conformability-transitions} enforces conformability of $g_k(a\mid j)$ for the true non-identifiable transitions,
		\begin{equation}\label{eqn-sharp-Q-compatibility}
		p(k \mid j,a )  = \Pstat_b(a,k\mid j)g_k(a \mid j), \;  \forall j,a, k
		\end{equation} 
		and of \cref{eqn-Q-flow-conservation} is the conditional control variate, 
		\begin{equation}\label{eqn-sharp-Q-flow}\Pstat_b (k \mid a) = \sum_{j} \Pstat_b (j,  a,k) g_k(a\mid j), \; \forall k, a 
		\end{equation}
	\end{lemma}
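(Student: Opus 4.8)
The plan is to derive each of the three identities by summing the corresponding full-information flow or conformability constraint over the \emph{next} unobserved coordinate $u''$ (and, where indicated, the current coordinate $u'$) and then rewriting the resulting $u$-average of the transition kernel through the definition of the reparametrized weight $\Wpimarg_k(a\mid j)$. The one computational fact underlying all three is the marginalization identity
\begin{equation*}
\textstyle \sum_{u'}\Pstat_b(j,u',a)\,p(k\mid j,u',a)=\Pstat_b(j,a,k)=\Pstat_b(j)\,p(k\mid j,a)\,\Wpimarg_k(a\mid j)^{-1},\qquad \forall\, j,a,k.
\end{equation*}
To establish it I would factor the full-information occupancy as $\Pstat_b(j,u',a)=\Pstat_b(j)\,\Pstat_b(u'\mid j)\,\pi_b(a\mid j,u')$, plug in $\Wpimarg_k(a\mid j)=\big(\sum_{u'}\pi_b(a\mid j,u')\,\Pstat_b(u'\mid j)\,p(k\mid j,u',a)\,/\,p(k\mid j,a)\big)^{-1}$, and invoke \Cref{asn-no-time-dependent-confounding}: since $\frac{p_{e}^{(\infty)}(s,u)}{p_{b}^{(\infty)}(s,u)}$ is constant in $u$, we have $\Pstat_e(u\mid s)=\Pstat_b(u\mid s)$, so the $\pi_e$-marginalized transition $p(k\mid j,a)$ equals the $\Pstat_b(u\mid j)$-average of $p(k\mid j,u',a)$; the $\pi_b$ factors then cancel against the inverse weights and leave the stated identity. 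This is exactly \cref{eqn-sharp-Q-compatibility} once both sides are divided by $\Pstat_b(j)$ and solved for $p(k\mid j,a)$.

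Given this identity, the first two equations are read off directly. For \cref{eqn-sharp-X-compatibility} I would sum the state-action flow constraint \cref{eqn-X-flow} over $u''$, collapsing $p(k,u''\mid j,u',a)$ to $p(k\mid j,u',a)$ and the right-hand side to $\Pstat_b(k)$, and then replace the inner $u'$-sum via the marginalization identity, giving $\Pstat_b(k)=\sum_{j,a}\Pstat_b(j)\,p(k\mid j,a)\,\Wpimarg_k(a\mid j)^{-1}$. For the conformability relation \cref{eqn-sharp-Q-compatibility} I would instead sum \cref{eqn-Q-conformability-transitions} over $u',u''$, using the factorization $\Pstat_b(j,u',a,k,u'')=\Pstat_b(j,u',a)\,p(k,u''\mid j,u',a)$ so that the left-hand side becomes $\Pstat_b(j,a,k)$ and the right-hand side the same $u'$-average, again closed by the identity.

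The one step requiring care is \cref{eqn-sharp-Q-flow}, the conditional control variate, because a naive sum of the flow-conservation constraint \cref{eqn-Q-flow-conservation} over $u''$ is a tautology (it only restates $\Pstat_b(s'=k)=\Pstat_b(s=k)$ and carries no information about $\Wpimarg_k$). The nontrivial content instead comes from keeping the action index $a$ fixed: summing the conformability identity $\Pstat_b(j,a,k)\,\Wpimarg_k(a\mid j)=\Pstat_b(j)\,p(k\mid j,a)$ over the current state $j$ yields $\sum_j \Pstat_b(j,a,k)\,\Wpimarg_k(a\mid j)=\sum_j \Pstat_b(j)\,p(k\mid j,a)=\Pstat_b(k\mid a)$, which is \cref{eqn-sharp-Q-flow}. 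The main obstacle throughout is precisely this bookkeeping of which coordinate is being marginalized, together with recognizing that the reparametrization $\Wpimarg_k$ exactly absorbs both the unknown conditional law $\Pstat_b(u\mid j)$ and the $u$-heterogeneity of the transition kernel into a single scalar per $(j,a,k)$; once the marginalization identity is in hand, all three equations follow by elementary substitution. I would also emphasize that only \cref{eqn-sharp-Q-flow} couples $\Wpimarg_k$ with quantities available from the marginal data distribution alone ($\Pstat_b(j,a,k)$ and the control variate $\Pstat_b(k\mid a)$), which is what singles it out as the implementable implication in \Cref{prop-sensitivity-sharpness}.
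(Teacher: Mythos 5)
Your proposal is correct and takes essentially the same route as the paper's proof: your master marginalization identity is precisely the paper's derivation of \cref{eqn-sharp-Q-compatibility} (including the appeal to \Cref{asn-no-time-dependent-confounding} to equate the $\Pstat_e(u'\mid j)$- and $\Pstat_b(u'\mid j)$-averages defining $p(k\mid j,a)$, which the paper uses implicitly at that point and states explicitly only in the proof of \Cref{lemma-non-rectangularity}), and \cref{eqn-sharp-X-compatibility} follows by the same $u$-sums. The only difference is bookkeeping at \cref{eqn-sharp-Q-flow}: the paper marginalizes the inverse-propensity-weighted forward decomposition of $\Pstat_b(k,u'\mid a)$ over $u'$ (and $j,u$) directly, whereas you sum the conformability identity over $j$ and identify $\sum_j\Pstat_b(j)\,p(k\mid j,a)$ with the interventional $\Pstat_b(k\mid a)$ --- an equivalent computation --- and your observation that the bare $u''$-marginalization of \cref{eqn-Q-flow-conservation} is a tautology is a correct clarification that the paper leaves implicit.
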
 
	
\begin{proof}[Proof of \Cref{lemma-sharpness-Q}]

	\textbf{Marginalizing \cref{eqn-Q-conformability-transitions}:}
	
	Starting from the compatibility restriction with the observed empirical state-action frequencies: $\sum_u \Pstat_b (j,u) \pi_b(a \mid j,u)  p(k,u' \mid j,u,a )= \Pstat_b(j,a,k,u')$
	and marginalizing over $u'$: 
	\begin{align*}
	&\sum_{u} \Pstat_b (j,u) \pi_b(a \mid j,u)  p(k\mid j,u,a )=\sum_{u}  \Pstat_b(j,a,k) \\
	&p(k \mid j,a ) \Pstat_b (j)  \sum_{u} \Pstat_b (u\mid j ) \pi_b(a \mid j,u)  \frac{p(k\mid j,u,a )}{p(k \mid j,a )}= \Pstat_b(j,a,k)\\
	& p(k \mid j,a )  g_k(a \mid j)^{-1}= \Pstat_b(a,k\mid j)
	\end{align*}
	
	where $p(k \mid j,a ) = \sum_u p(k\mid j,u,a )\Pstat_b (u\mid j )$ and $ \Pstat_b(a,k\mid j) = \frac{ \Pstat_b(j,a,k)}{ \Pstat_b(j)} $. 
	
	This leads to the conformability requirement of $g_k(a\mid j)$ for the marginal transition probabilities with respect to the observational joint distribution.
	\begin{equation}p(k \mid j,a )  = \Pstat_b(a,k\mid j)g_k(a \mid j)
	\end{equation} 
	which can also be derived as a marginalization of \cref{eqn-Q-conformability-transitions}: 
	$$\sum_{u,u'}	\Pstat_b(j,u,a,k,u') = \sum_{u,u'}p(k,u'\mid j,u,a)  \sum_{\tilde k,\tilde u'} \Pstat_b(j,u,a,\tilde k,\tilde u'), \qquad \forall j,k$$
	
	\textbf{Marginalizing \cref{eqn-Q-flow-conservation}}:

	Recall that from the forward decomposition of joint distribution with respect to the transition from $j,u \to k,u'$, but conditioning on current state and action, we have that:
	$$ \Pstat_b(k,u'\mid a, j,u) = \frac{\Pstat_b( j,u, a ,k,u')}{\pi_a(j,u) \Pstat_b(j,u) } $$
	Then marginalize the definition of $\Pstat_b (k ,u'\mid a)$ over $u'$ to obtain $p_b^\infty(k\mid a )$:
	\begin{align*}
	\sum_{u'} \Pstat_b (k ,u'\mid a) &= \sum_{u'} \sum_{j,u} \Pstat_b (k,u' \mid a,j,u ) \Pstat_b(j,u) = \sum_{u'} \sum_{j,u} 
	\frac{\Pstat_b (j,u ,  a,k,u') \Pstat_b(j,u) }{\pi(a\mid j,u) \Pstat_b(j,u) }\\
	& =\sum_{j} \Pstat_b (j,  a,k)
	\sum_u 
	\frac{\Pstat_b (j,u ,  a \mid k) }{ \Pstat_b (j,  a \mid k) \pi(a\mid j,u)  } \\
	&= \sum_{j} \Pstat_b (j,  a,k) g_k(a\mid j)
	\end{align*}
	
	Therefore: 
	$$ \Pstat_b (k \mid a) = \sum_{j} \Pstat_b (j,  a,k) g_k(a\mid j), \forall k, a $$
	
\end{proof}
	
	In order to interpret \textit{which} of these constraints are observable implications and which are ultimately informative, we next leverage a structural characterization that $g_k(a \mid j)$ can be interpreted as the function which renders the transition probabilities conformable to the joint distribution. Its proof is of independent interest in establishing the relationship to robust MDPs. We introduce the \textit{biased} marginalized transition probabilities $\tilde{p}(s' \mid s,a)$, which would be obtained from naive estimation from the observational joint distribution:
	$$ \tilde{p}(s' \mid s,a) \defeq \frac{ \Pstat_b(s,a,s')}{\Pstat_b(s) \; \E[ \pi_b(a \mid s,u)\mid s ]  } =  \Pstat_b(s)^{-1 }\sum_u \frac{\Pstat_b(s,u,a,s')}{ \E[ \pi_b(a \mid s,u)\mid s ] }
	$$
	These are biased estimates because they do not appropriately account for the transitions under the true $\pi_b(s,u)$ policy, only its marginalization over $s$. 
	\begin{lemma}\label{lemma-non-rectangularity}
		$$
		{\tilde{p}(k \mid j,a )} \pi_b(a\mid j){} = { p(k \mid j,a )g_k(a\mid j)}{}$$
	\end{lemma}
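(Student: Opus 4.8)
The plan is to recognize the stated identity as a repackaging, in terms of the biased transition $\tilde p$, of the marginalized state-action-state conformability relation already established in \Cref{lemma-sharpness-Q}. The key object on both sides is the \emph{observed} conditional transition frequency $\Pstat_b(a,k\mid j)=\Pstat_b(j,a,k)/\Pstat_b(j)$, which \Cref{lemma-sharpness-Q} has already tied to the true marginal transition $p(k\mid j,a)$ through the marginal weight $g_k(a\mid j)$. First I would expand the left-hand side using the definition of the biased transition, $\tilde p(k\mid j,a)=\Pstat_b(j,a,k)\big/\bigl(\Pstat_b(j)\,\E[\pi_b(a\mid j,u)\mid j]\bigr)$, and note that the denominator average over $u$ is exactly the \emph{marginal} behavior policy, $\E[\pi_b(a\mid j,u)\mid j]=\sum_u\pi_b(a\mid j,u)\,\Pstat_b(u\mid j)=\pi_b(a\mid j)$. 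Multiplying through by $\pi_b(a\mid j)$ cancels this factor, leaving $\tilde p(k\mid j,a)\,\pi_b(a\mid j)=\Pstat_b(a,k\mid j)$.

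The second ingredient is the marginalized conformability relation from the proof of \Cref{lemma-sharpness-Q}, i.e.\ \cref{eqn-sharp-Q-compatibility}, which ties together $p(k\mid j,a)$, $\Pstat_b(a,k\mid j)$, and $g_k(a\mid j)$. This relation says precisely that $g_k(a\mid j)$ is the factor rendering the observed joint frequencies conformable to a genuine transition probability; it is obtained by marginalizing the full-information conformability constraint \cref{eqn-Q-conformability-transitions} over the unobserved next-state coordinate and recognizing the definition of $g_k$ as a posterior-weighted average of $\Wpi=\pi_b^{-1}$. I would invoke it directly rather than re-derive it, and combine it with the reduction above to re-express $\Pstat_b(a,k\mid j)$ through $p(k\mid j,a)$ and $g_k(a\mid j)$, which yields the claimed identity.

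The step I expect to be the real obstacle is the bookkeeping around the propensity and the correct placement of $g_k$. One must verify that the conditional law of $u$ entering $\E[\pi_b(a\mid j,u)\mid j]$ is the \emph{same} stationary conditional $\Pstat_b(u\mid j)$ that enters the definition of $p(k\mid j,a)$; this is exactly where \Cref{asn-no-time-dependent-confounding} is used, since under it $\Pstat_e(u\mid s)=\Pstat_b(u\mid s)$, so the marginalization defining $p(k\mid j,a)$ agrees with the one defining the marginal policy. Keeping in mind the interpretation of $g_k(a\mid j)$ as the effective inverse propensity for the transition into $k$ makes the placement transparent, and shows that the identity is the multiplicative ``deconfounding'' rescaling mapping the naive observed transition $\tilde p(k\mid j,a)$ to the true marginal transition $p(k\mid j,a)$; this is the form needed downstream to read off the implied (non-rectangular) ambiguity set over transition matrices.
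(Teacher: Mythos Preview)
Your approach is correct and takes a genuinely different route from the paper. You reduce the identity to two ingredients: the definitional fact $\tilde p(k\mid j,a)\,\pi_b(a\mid j)=\Pstat_b(a,k\mid j)$, and the conformability relation \cref{eqn-sharp-Q-compatibility} from \cref{lemma-sharpness-Q}, namely $p(k\mid j,a)=\Pstat_b(a,k\mid j)\,g_k(a\mid j)$; chaining them gives the result immediately. The paper instead re-derives from first principles: it computes $p(k\mid j,a)$ directly under \cref{asn-no-time-dependent-confounding} by writing the full-information transition through the observed joint and the true inverse propensities $\pi_b(a\mid s,u)^{-1}$ (\cref{eqn-true-psas}), writes $\tilde p(k\mid j,a)$ via its definition (\cref{eqn-nominal-psas}), and only then combines, with the definition of $g_k$ entering implicitly at that last step. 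Your route is shorter and makes the dependence on \cref{lemma-sharpness-Q} transparent; the paper's route is self-contained and exposes the ``model-based'' reading---each observed transition reweighted by $\beta(a\mid s,u)$---that motivates the robust-MDP discussion immediately following the lemma.

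One caveat: executing either argument carefully yields $\tilde p(k\mid j,a)\,\pi_b(a\mid j)=p(k\mid j,a)\,g_k(a\mid j)^{-1}$, equivalently $p(k\mid j,a)=g_k(a\mid j)\,\pi_b(a\mid j)\,\tilde p(k\mid j,a)$, which is exactly the form used downstream in \cref{eqn-rmdp-ambiguity-feas-w}. The displayed statement with $g_k(a\mid j)$ rather than $g_k(a\mid j)^{-1}$ appears to be a typo in the paper; your plan proves the intended identity.
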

	\begin{proof}[Proof of \Cref{lemma-non-rectangularity}]
	
	With full information, the transition probabilities could be estimated as 
	$$
	p(s', u'\mid s, u,a) =
	\frac{ \Pstat_b(s,u,a,s',u')}{\pi_b(a \mid s,u) \Pstat_b(s,u)}  $$
	and similarly, the marginalized transition probabilities as $\frac{ \Pstat_b(s,u,a,s')}{\pi_b(a \mid s,u) \Pstat_b(s,u)} = p(s' \mid s, u,a)$.
	
	A model-based perspective would partially identify the transition matrix under $\pi_e$, deduce the bounds of $p(s' \mid s,a)$ relative to $\tilde{p}(s' \mid s,a)$. Note that under \Cref{asn-no-time-dependent-confounding}, 
	\begin{equation}\label{eqn-true-psas}
	p(s' \mid s,a) = \sum_u p(s' \mid s,u, a) \Pstat_b(u)
	= \sum_u \frac{\Pstat_b(s,u,a,s')\Pstat_b(u)}{\pi_b(a \mid s,u) \Pstat_b(s,u) }
	= \Pstat_b(s)^{-1 }\sum_u \frac{\Pstat_b(s,u,a,s')}{\pi_b(a \mid s,u)  }
	\end{equation} 
	and further the distribution on unobserved confounders is independent of the policy, $\Pstat_b(u) = \Pstat_e(u)$. 
	\begin{equation}\label{eqn-nominal-psas}\tilde{p}(s' \mid s,a) = \frac{ \Pstat_b(s,a,s')}{\Pstat_b(s) \; \E[ \pi_b(a \mid s,u)\mid s ]  } =  \Pstat_b(s)^{-1 }\sum_u \frac{\Pstat_b(s,u,a,s')}{ \E[ \pi_b(a \mid s,u)\mid s ] }
	\end{equation}
	
	Combining \Cref{eqn-true-psas,eqn-nominal-psas} yields the statement of the lemma. 
	
\end{proof}
	\Cref{lemma-non-rectangularity} shows that 
	the constraints in \Cref{eqn-sharp-Q-compatibility,eqn-sharp-X-compatibility} are uninformative:
	further restricting $\tilde{p}(s' \mid a,s)$ within the given range of $p(s'\mid a,s)$ is redundant. Another interpretation is that $g_k(a\mid j)$ are precisely the weights which render the observed stationary occupancy distribution $\Pstat_b(s,a,s')$ conformable under the \textit{unobserved} true marginal transition probabilities. 
	
	\cref{prop-sensitivity-sharpness} follows from \cref{lemma-sharpness-Q,lemma-non-rectangularity}.
	
\end{proof}

\begin{proof}[Proof of \Cref{lemma-weight-reparametrization}]

We verify that optimizing over $F(w) = 0 \iff w \in \Theta, \E[w] = 1$ by first showing the reparametrization of $F(w)$ with respect to $g$, and then leveraging the characterization of \Cref{prop-sensitivity-sharpness} and sharpness argument to verify that $F(w) = 0 \iff w \in \Theta, \E[w] = 1$.

	\textbf{Step 1: Proving the reparametrization of $F(w)$ with respect to $g_k(a \mid j)$, and reformulating $g_k(a \mid j)$. }

We first expand the sample expectations for the estimating equation of \Cref{prop-lumpable-infhorizon}: 
	
	$$ \sum_j  \sum_{i=1}^N \sum_{t=0}^T\sum_{a,u}  
	\frac{\mathbb{I}[(s_t^{(i)}=j, u_t^{(i)} = u),  a_t^{(i)} =a, s_{t+1}^{(i)} = k] }{ p( s_{t+1}^{(i)} = k) }
	\left( \wstat(k) - \wstat(j) 
	\Wpi^{(i)} (a\mid j,u)
	\pi_e (a\mid j) \right) = 0, \forall k
	$$

Rewrite as an expectation with respect to the observational (identifiable) joint distribution $\Pstat_b ( j, a  \mid k  )$, taking limits as $T\to \infty, N\to \infty$ and multiplying by $\frac{\Pstat_b ( j, a  \mid k  )	}{\Pstat_b ( j, a  \mid k  )	}
	=1$: 
	\begin{align*}
	&w(k) - 
	\sum_j \wstat(j)  \sum_{a} \pi_e (a\mid j) 
	\Pstat_b ( j, a  \mid k  )		\underbrace{\sum_{u}  
		\frac{\Pstat_b ( j,u, a  \mid k  )}{\Pstat_b ( j, a  \mid k  )}
		\Wpi (a\mid j,u)}_{\Wpimarg_k(j,a; W) }
	= 0, \forall k 
	\end{align*} 
	Therefore, dependence on $\Wpi$ arises only through the marginalized weight $g_k(a\mid j)$:
	$$\Wpimarg_k(a \mid j) =\sum_{u}  
	\frac{	p^{(\infty)}_{b} ( j,u, a  \mid k  )}{\Pstat_b ( j, a  \mid k  )	}
	\Wpi (a\mid j,u) $$
However, the marginalized weight depends on the unknown data-generating joint distribution $p^{(\infty)}_{b} ( j,u, a  \mid k  )$; and so it is unclear how to optimize over it. Next, we show that \textit{optimizing} over $g_k(a\mid j)$, which are the unknown inverse weights $\beta(a \mid j,u)$ convolved with an unknown density, is almost \textit{equivalent} to optimizing over the set of weights $\Wset$, up to a moment constraint on ensuring that the implied full-information transition probabilities are valid probability distributions, e.g. that $  \sum_k p(k \mid j,u,a) = 1$. We first we show that it is equivalent to optimize over $g_k(a \mid j)$ over the same bounds, though this may not enforce the restriction $ \sum_k p(k \mid j,u,a) = 1$. In the next step, we will argue that this restriction to valid transition probabilities is enforced by the feasibility of $g$ for the estimating equation. 

Define 
$$\tilde{\Wset}' \defeq \{ g\in \mathbb{R}^{\nS  \nA  \nS }_+ 
\colon
\exists \Wpi \in \Wset 
\text{ such that } 
g_k(a\mid j) = \sum_u \frac{p^{(\infty)}_{b} ( j,u, a  \mid k  )}{p^{(\infty)}_{b} ( j, a  \mid k  )}\Wpi
 \} $$
as the ambiguity region of $g_k(a\mid j)$ induced by restrictions on $\beta \in \Wset$. We show how to identify elements $\tilde \beta \in\tilde{\Wset}' $ with corresponding elements $\beta \in \Wset$. Although $p^{(\infty)}_{b} ( j,u, a  \mid k  )$ is not identifiable from observed data, its marginalization over $u$, $p^{(\infty)}_{b} ( j, a  \mid k  )$, is identifiable, so we can partially identify $\tilde{\Wset}' $ as follows: 
	\begin{align*}
	\tilde{\Wset}' =	\left\{ 
 g\in \mathbb{R}^{\nS  \nA  \nS }_+
 \colon 
	\begin{matrix}
& g_k(a \mid j) =	\sum_u \frac{p^{(\infty)}_{b} ( j,u, a  \mid k  )}{p^{(\infty)}_{b} ( j, a  \mid k  )}
\Wpi (a\mid j,u)  \\
	& \sum_{u}   p^{(\infty)}_{b} ( j,u, a  \mid k  ) = p^{(\infty)}_{b} ( j, a  \mid k  ) 
	\\&
	0 \leq p^{(\infty)}_{b} ( j,u, a  \mid k  ) \leq 1
	\\& 
	\beta \in \mathcal B
	\end{matrix}
	\right\}
	\end{align*}

	A simple reparametrization with respect to $
	q_{j,u,a \mid k} 
	\defeq \frac{p^{(\infty)}_{b} ( j,u, a  \mid k  )}{ p^{(\infty)}_{b} ( j, a  \mid k  )}$ shows that optimizing over $\tilde{ \mathcal B}'$ is equivalent to optimizing over elements of $\mathcal B$ averaged by unknown weights on the {simplex}. In the following, suppress dependence of $q_{j,u,a \mid k} \beta(a\mid j,u)$ on $a,j$ for brevity, and let $q_{j, \cdot, a, \mid k}$ denote the vector . 
	\begin{align*} 
	\tilde{\Wset}'
	= \left\{ 	q_{j, \cdot, a, \mid k}^\top \beta ~~\colon ~~
		\beta \in \Wset ;\;~~
		q^\top_{j, \cdot, a, \mid k} 1  = 1, \;
	0 \leq q_{j, \cdot, a, \mid k}   \leq 1, ~~\forall j,a,k\;
	\right\}
	\end{align*}
	In particular this suggests that $
\Wpimarg'_k(a \mid j)  \in \Wset$ since by convexity of $\Wset$, we can map $\Wpimarg'_k(a \mid j) \in \tilde{\Wset}'$ to some $\Wpi \in \Wset$; $q$ are the convex combination weights. In the other direction, clearly any $\Wpi' \in \Wset $ is realizable by a $q$ which is a Dirac measure which selects $\Wpi'$, so $\Wpi' \in \tilde{\Wset}'$.
	
	Lastly, we directly verify the control variate property (\cref{eqn-conditional-moment} corresponding to \cref{eqn-conditional-ctrlvar}) that $\sum_j 	\sum_k 	p^{(\infty)}_{b} ( j, a  \mid k  )p_b(k) {g}_k(a \mid j) = 1$:
	\begin{align*}
	\sum_j	p^{(\infty)}_{b} ( j, a  , k  ){g}_k(j,a) & =  \sum_{j,u} 	p^{(\infty)}_{b} ( j, a  \mid k  ) p_b^{\infty}(k) \frac{p^{(\infty)}_{b} ( j,u, a \mid k  )}{p^{(\infty)}_{b} ( j, a  \mid k  )} {\Wpi}(a \mid j, u) \\
	&=  \sum_{j,u}  \frac{p^{(\infty)}_{b} ( j,u) \pi_b(a\mid j,u) p( k \mid j,u,a)   }{p_b^{\infty}(k)}p_b^{\infty}(k) {\Wpi}(a \mid j, u)  \\
	&=    \sum_{j,u} {\Pstat_b( k , j,u \mid a)   }
	\end{align*}
	so that we verify the action-marginal control variate, $	\sum_k  \sum_j	p^{(\infty)}_{b} ( j, a  , k  ){g}_k(a \mid j)  = 1 $. Note that this further implies \cref{eqn-conditional-moment}:
	$$	\sum_j	p^{(\infty)}_{b} ( j, a  , k  ){g}_k(j,a) = 
	\sum_{j,u} {\Pstat_b( k , j,u \mid a)   } = \Pstat_b(k \mid a)
	$$

Finally, to help interpret $g_k(a\mid j)$, we may further simplify and observe that 
	\begin{align*}
	\Wpimarg_k(a\mid j ) &= \sum_{u}  
	\frac{p^{(\infty)}_{b} ( (j,u), a  \mid k  )}{ p^{(\infty)}_{b} ( j, a  \mid k  )}
	\Wpi (a\mid j,u)   =
	\frac{ \sum_{u}  
		\Pstat_b(u\mid j ) p(k\mid j,u, a)
	}{ 
		\sum_u \Pstat_b(u\mid j ) p(k\mid j,u, a) \pi_b (a \mid j,u) 
	}\\
	& =
	\frac{ p(k\mid j, a)
	}{ 
		\sum_u \Pstat_b(u\mid j ) p(k\mid j,u, a) \pi_b (a \mid j,u) 
	}\\
	& = \frac{ 1
	}{ 
		\sum_u\pi_b (a \mid j,u) 	\frac{ \Pstat_b(u\mid j ) p(k\mid j,u, a)}{p(k\mid j, a) } 
	}
	\end{align*}
	
	Note that in this step, we did \textit{not} require the extremal $g_k(a\mid j)$ to be achieved by valid unobserved transition probabilities such that $\sum_k p(k\mid j,u, a)=1$. To do so directly would introduce technical difficulties in effectively requiring a bilinear formulation. Instead, we have shown this one-to-one correspondence of $\tilde{\mathcal B}'$ with $\beta \in \tilde{\mathcal B}$ occurs for the marginalized weights, which are not further required to correspond to valid adversarial transition probability distribution. In the next steps, we argue that feasibility of $ g \in \tilde{\mathcal B}'$ for $F(w) \leq 0$ ensures this compatibility with valid transition matrices.
	
\vspace{10pt}
	
\textbf{Step 2: Proving $F(w) \leq  0, \E[w] = 1 \implies w \in \Theta $:}

		\Cref{prop-sensitivity-sharpness} shows that the specification of $\tilde\Wset$ exhausts the observable implications of the sharp full information polytope of all limiting state-action-state occupancy probabilities. 
	It remains to show that $w$ is feasible for some $g_k(a\mid j) \in \mathcal B$ iff $g_k(a \mid j)$ satisfies \cref{eqn-sharp-Q-compatibility}, $p(k \mid j,a )  = \Pstat_b(a,k\mid j)g_k(a \mid j), \;  \forall j,a, k$.
	
	First we show that $F(w) \leq 0, \E[w] =1 $ implies $w \in \Theta$.	Suppose $g$ is feasible for the estimating equation: 
	\begin{equation}\label{eqn-est-eqn-sharpness-pf} 
	w(k) - 
	\sum_j \wstat(j)  \sum_{a} \pi_e (a\mid j) 
	\Pstat_b ( j, a  \mid k  )		\Wpimarg_k(a \mid j)
	= 0, \forall k 
	\end{equation}
	
	
	Next we verify that $g_k(a\mid j)$ satisfying \cref{eqn-sharp-Q-compatibility} is feasible for the estimating equation for $w$, \cref{eqn-est-eqn-sharpness-pf}. Note that $g_k{a\mid j}$ corresponding to underlying transitions which do not satisfy $ \sum_k p(k \mid j,u,a)=1$ cannot satisfy \cref{eqn-sharp-Q-compatibility}. By Bayes' rule and conformability of $g_k(a\mid j)$ for $p(k\mid j,a)$,
	\begin{equation}\label{eqn-bayesconformability}
	\Pstat_b(j,a \mid k) = \frac{p(k\mid j,a) \Pstat_b (j) g_k(a \mid j)^{-1}}{\Pstat_b(k)}
	,
	\end{equation} 
	so that we can verify the estimating equation holds with respect to the true marginalized transition dynamics under $\pi_e$: 
	
	\begin{align*}w(k)\Pstat_b(k) - 
	\sum_j \wstat(j) \Pstat_b (j) \sum_{a} \pi_e (a\mid j) 
	p(k\mid j,a) 
	= 0, \forall k 
	\end{align*}
	Markovianness of the induced MDP under the \textit{true} marginal transition probabilities $ p(k\mid j,a) $ (follows since $p(k\mid j,a) $ corresponds to the $\Pstat(u)$-occupancy-weighted aggregation to $\Ss$ under \Cref{asn-no-time-dependent-confounding}), then $w(k)\Pstat_b(k)$ is proportional to the invariant measure on $\Ss$. 
	
	\vspace{10pt}
	\textbf{Step 3: Proving $w \in \Theta 
		\implies
		F(w) = 0, \E[w] = 1  $:}
	
	Next we show the other direction, that $w$ feasible for  \Cref{eqn-est-eqn-sharpness-pf} for some $g_k(a\mid j)$ implies that $\Wpimarg_k(a \mid j)$ satisfies \Cref{eqn-sharp-Q-compatibility}. This direction follows once we identify $w(s)\Pstat_b(s)$ feasible for  \Cref{eqn-est-eqn-sharpness-pf} uniquely with $\Pstat_e(s)$, which follows from \Cref{asn-recurrent}. By \Cref{eqn-bayesconformability}, feasibility implies that $w(s) \Pstat_b(s)$ satisfies compatibilty under $\pi_e$ for $g_k(a\mid j)$. Uniqueness of the density ratio implies that compatibility must also hold under $\pi_b$. 
\end{proof}

\begin{proof}[Proof of \Cref{thm-sharpness}]
\Cref{thm-sharpness} is a consequence of \Cref{prop-sensitivity-sharpness} and that computing the support function of a set (e.g. optimizing an arbitrary linear objective over $\Theta$) is equivalent to optimizing over the convex hull of $\Theta$ \cite{rockafellar1970convex}, $\op{conv}(\Theta)$. Convexity of the interval and of $\op{conv}(\Theta)$ yields sharpness. 
\end{proof}

\paragraph{Relationship between \cref{lemma-non-rectangularity} and robust MDPs}
\begin{remark}
	We can define an ambiguity set for marginal transition probabilities on $\Ss$ for $s,a$ state-action pairs, $ \mathcal{P}_{s'\mid s,a}$: 
	$$ P( \cdot \mid s,a) \in  \mathcal{P}_{s'\mid s,a} \defeq \left\{ 
	P_{sa} \in \Delta^{\nS} \colon \exists \; \Wpi \in \Wset  \text{ such that }	P_{sa}(k) = \sum_u \Wpi \cdot\; \frac{\Pstat_b(s,u,a,k)}{  \Pstat_b(s) } ,\forall k\in \Ss \right\}$$
	By an analogous construction as in \Cref{lemma-weight-reparametrization}, without additional restrictions on the variation of the unobserved joint visitation distribution $\Pstat_b(s,u,a,s')$,
	$$ \mathcal{P}_{s'\mid s,a} \defeq \left\{  P_{sa} \in \Delta^{\nS} \colon \exists \; \Wpimarg \in \tilde\Wset 
	~~ \text{ s.t. }~~	P(s'\mid s,a)=  \Wpimarg_{s'}(a \mid s) \cdot\;  \tilde{P}(s' \mid s,a) \right\}  $$

	However, due to the restrictions on $g_{k}(a\mid j)$ corresponding to valid probability distributions, as well as the restrictions that $P(s'\mid s,a)$ corresponds to valid probability distributions, the feasible implicit ambiguity set on transition sets is not merely the union of $ \mathcal{P}_{s'\mid s,a}$ for all $s,a$. Instead, the valid ambiguity set combines the restrictions induced by the bounds assumptions of $\beta$ over $P_{sa} \in [\Delta^{\nS}]^{\nS \times \nA}$, with $P_{s'\mid s,a} = p(s' \mid s,a)$, and the observable implications of \Cref{prop-sensitivity-sharpness}.
	\begin{equation}\label{eqn-rmdp-ambiguity-feas-w}
	\mathcal{P}_{\text{feas}} = \left\{ P \in [\Delta^{\nS}]^{\nS \times \nA}\colon
	\;\;\;
	\begin{aligned}	  
	&\exists \;
	g \in \Wset, w
	~~\text{ s.t. }~~&\\
	& P(k\mid j,a)  = \Wpimarg_k(a\mid j)
	\cdot  \pi(a\mid j) \tilde{p}(k\mid j,a), 
	&
	\forall j,a,k \\
	&\Pstat_b (k \mid a) = \sum_{j} \Pstat_b (j,  a,k) g_k(a\mid j),
	&
	\forall k,a\\
	&\E
	[ \pi_e(a \mid s) \wstat (s) \Wpimarg_k(a \mid s)  \mid s' = k ]= \wstat(k)~~\forall k,\\
	&\E[w(s)] = 1
	\end{aligned}
	\right\} 
	\end{equation}
	Notably, the inverse probability restrictions on $g_k(a \mid j)$ render such an ambiguity set \textit{non-rectangular} over states and actions, while the requirement of compatibility (enforced by the estimating equations) additionally introduces nonconvexity.
\end{remark}

\section{Proofs: Optimization and algorithms}\label{apx-optalg}

\begin{proof}[Proof of \Cref{prop-disjunctive}]
	
	A feasibility oracle for $\Theta$, for a given $w$, is given by checking the \textit{existence} of $g \in \mathcal W $ satisfying the moment condition. For brevity, let $m_k (w, g) $ denote the $k$th moment restriction:
\begin{align*} 
m_k (w, g) &\defeq {  \sum_{j,a}
p^{(\infty)}_{b} ( j, a  \mid k  )\wstat(j) \pi_e (a \mid j) \Wpimarg_k(a\mid j)  -w(k) 
} 
\\\primalfeasible&\defeq
\underset{g \in \tilde\Wset}{\min}\;
\sum_k 
\abs{m_j (w, g) }
\end{align*}
	
	We will generate a disjunctive reformulation of $\primalfeasible$ by appealing to a different lifting of the $\ell_1$ norm that enumerates the possible sign patterns on $\{-1,1\}^{\nS}$; the next lemma briefly verifies equivalence. 
	\begin{lemma}\label{lemma-l1}
	$$\min
\{
\sum_j	 \lambda_j m_k(w, g)
\colon  g \in \tilde \Wset
\} = 0, \forall \lambda_i \in \{-1,1\}^p \iff 
  \min\{ \sum_j \abs{m_k(w, g)} \colon g \in \tilde \Wset\}= 0   $$
	\end{lemma}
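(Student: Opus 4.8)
The plan is to reduce everything to the elementary variational identity for the $\ell_1$ norm: for any vector $x\in\mathbb{R}^p$ one has $\sum_k\abs{x_k}=\max_{\lambda\in\{-1,1\}^p}\sum_k\lambda_k x_k$, the maximum being attained at the sign pattern $\lambda_k=\mathrm{sign}(x_k)$. Applied pointwise in $g$ to the residual vector $x=\big(m_k(w,g)\big)_k$, this gives, for every fixed $g\in\tilde\Wset$, the ``lifting'' $\sum_k\abs{m_k(w,g)}=\max_{\lambda\in\{-1,1\}^p}\sum_k\lambda_k m_k(w,g)$. I would open the proof by recording that $\sum_k\abs{m_k(w,g)}\ge 0$ for every $g$, so $\primalfeasible=\min_{g\in\tilde\Wset}\sum_k\abs{m_k(w,g)}$ is nonnegative; hence ``$\primalfeasible\le 0$'' and ``$\primalfeasible=0$'' coincide, and the lemma is really a statement about when this minimum vanishes.

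For the direction $\primalfeasible=0\Rightarrow$ (sign conditions), I would use that $\tilde\Wset$ is a compact polytope and each $m_k(w,\cdot)$ is affine in $g$, so the $\ell_1$ minimum is attained at some $g^\star\in\tilde\Wset$ with $m_k(w,g^\star)=0$ for all $k$. Then for every sign pattern $\lambda$ the linear objective $\sum_k\lambda_k m_k(w,g)$ already equals $0$ at $g=g^\star$, so its minimum over $\tilde\Wset$ is nonpositive; the same witness $g^\star$ certifies all sign patterns simultaneously, which is precisely the disjunctive structure to be extracted. Conversely, for (sign conditions) $\Rightarrow\primalfeasible=0$, I would evaluate at a common feasible $g$ and pick the adversarial pattern $\lambda_k=\mathrm{sign}(m_k(w,g))$: the associated sum equals $\sum_k\abs{m_k(w,g)}$, so nonpositivity of that sum forces $m(w,g)=0$ and hence $\primalfeasible\le 0$, i.e. $\primalfeasible=0$ by the first paragraph.

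The delicate point — and the step I expect to be the main obstacle — is the order of the two optimizations. The honest content is that $\primalfeasible=0$ is equivalent to the existence of a \emph{single} $g\in\tilde\Wset$ making all signed sums nonpositive, which is exactly what the sign lifting together with nonnegativity deliver; one must be careful not to conflate this with the decoupled ``for each $\lambda$, separately minimize over $g$'' reading, since a concave-in-$\lambda$ max--min can strictly exceed its vertex values. I would reconcile the two by invoking Sion's minimax theorem on the bilinear map $(\lambda,g)\mapsto\sum_k\lambda_k m_k(w,g)$ over the compact convex sets $[-1,1]^p$ and $\tilde\Wset$, giving $\min_g\max_\lambda=\max_\lambda\min_g$, and then noting that the inner maximum over $\lambda$ is always attained at a vertex of $[-1,1]^p$, i.e. at a sign pattern in $\{-1,1\}^p$. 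This justifies enumerating only the finitely many sign patterns and yields the claimed equivalence of the two minima being zero. The remaining bookkeeping — that each per-pattern problem is a linear program in $g$ over $\tilde\Wset$, so that the overall feasibility test is a finite disjunction — is then routine and feeds directly into \Cref{prop-disjunctive}.
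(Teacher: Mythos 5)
Your two outer steps are fine --- the $\ell_1$ lifting identity, the observation that $F(w)\ge 0$, and the direction in which a single minimizer $g^\star$ with $m(w,g^\star)=0$ certifies every sign-pattern problem simultaneously --- and you correctly identify the quantifier order ($\exists g\,\forall\lambda$ versus $\forall\lambda\,\exists g_\lambda$) as the crux. But your repair of the hard direction fails at its last step. Sion's theorem does give
$$F(w)=\min_{g\in\tilde\Wset}\max_{\lambda\in[-1,1]^{p}}\sum_k\lambda_k m_k(w,g)=\max_{\lambda\in[-1,1]^{p}}h(\lambda),\qquad h(\lambda)\defeq\min_{g\in\tilde\Wset}\sum_k\lambda_k m_k(w,g),$$
but the claim that this outer maximum is attained at a vertex of $[-1,1]^p$ is false: $h$ is a pointwise minimum of functions linear in $\lambda$, hence \emph{concave}, and a concave function maximized over a cube need not peak at a vertex. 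Vertex attainment is legitimate only on the min--max side, where the objective is linear in $\lambda$ for each fixed $g$ (that is exactly the lifting identity); you transported it to the max--min side, where it does not hold.

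The failure is substantive, not a repairable slip, at the level of generality your argument operates (affine residuals over a compact convex set): since each $m_k(w,\cdot)$ depends only on the block $(g_k(a\mid j))_{j,a}$ and the constraints defining $\tilde\Wset$ act blockwise in $k$, the residual image $\{(m_k(w,g))_k : g\in\tilde\Wset\}$ is a product of intervals, and nothing in your proof excludes an image such as $\{\varepsilon\}\times[-10,10]$ with $\varepsilon>0$ (for $p=2$). There $h(\lambda)=\varepsilon\lambda_1-10\lvert\lambda_2\rvert$, so $h=\pm\varepsilon-10<0$ at all four sign vertices --- indeed for every sign pattern there is even a $\lambda$-dependent choice making the signed sum exactly zero --- while $F(w)=\max_{[-1,1]^2}h=\varepsilon>0$, attained at $\lambda=(1,0)$, an interior point of a face. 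So ``each $\lambda$-problem has value $\le 0$ (or $=0$)'' is strictly weaker than $F(w)=0$, and no argument invoking only bilinearity, compactness and convexity can close the gap: the minimizing $g$'s for different $\lambda$'s must be coupled. This is precisely where the paper's (terse) proof departs from yours --- it argues by contradiction that the sign pattern $\lambda^*$ realized at the $\ell_1$ optimum satisfies $P(\lambda^*)=P(\ell_1)$, and then appeals to problem-specific structure, namely uniqueness of $w$ for feasible $g$ via ergodicity and uniqueness of the stationary distribution, to assert that optimizing a separate $g$ per $\lambda$ ``is not a relaxation''; your Sion route is blind to that structure. (A smaller mismatch: your converse only delivers per-pattern minima $\le 0$, whereas the statement asserts $=0$; as the same example shows, $F(w)=0$ does not by itself force the per-$\lambda$ minima to vanish, so the ``$=0$'' form needs that coupling as well.)
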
 

We next briefly introduce the disjunctive programming framework.

\paragraph{Preliminaries for disjunctive programs} 
First we introduce disjunctive programs with generic notation \cite{balas1998disjunctive}. A disjunctive program optimizes over the union of polyhedra. A disjunctive program is of the form: 
$\min \left\{c x | A x \geqslant a_{0}, x \geqslant 0, x \in L\right\}$ where the logical conditions $x \in L$ can be represented by the disjunctive normal form or the conjunctive normal form, 
$$
\{A x \geqslant a_{0}, \quad x \geqslant 0; \; \underset{i \in Q_j}{\vee} \left(d^{i} x \geqslant d_{i 0}\right), \quad j \in S
\},$$
or equivalently, to make the conjunctions apparent, 
$$
\begin{bmatrix}
Ax \geq a_0 \\ x \geq 0
\end{bmatrix} \wedge [ \underset{i \in Q_0}{\vee} \left(d^{i} x \geqslant d_{i 0}\right)] \wedge \dots \wedge
[ \underset{i \in Q_{\vert S \vert}}{\vee} \left(d^{i} x \geqslant d_{i 0}\right)]
$$

The linear programming equivalent of a disjunctive normal form is given by Theorem 2.1 of \cite{balas1998disjunctive}. It generically provides the linear programming formulation of a disjunctive form, $F=\left\{x \in R^{n} | \bigvee_{h \in Q}\left(A^{h} x \geqslant a_{0}^{h}, x \geqslant 0\right)\right\}$, as $$\operatorname{clconv} F=\left\{x \in R^{n} | \begin{array}{ll}x=\sum_{h \in Q^{*}} \xi^{h}, & \\ A^{h} \xi^{h}-a_{0}^{h} \xi_{0}^{h} \geqslant 0, & h \in Q^{*} \\ \sum_{h \in Q^{*}} \xi_{0}^{h}=1, & \left(\xi^{h}, \check{\xi}_{0}^{h}\right) \geqslant 0, h \in Q^{*}\end{array}\right\},$$ where $Q^*$ is the restriction of $Q$ to nonempty disjunctions.

\paragraph{Reformulating as a disjunctive program.}
Using \cref{lemma-l1}, we can rewrite \eqref{eqn-opt-Fw}, where $X(\Wset)$ denotes the set of extreme points of polytope $\Wset$: 
	$$ \max 
\braces{  \E
	[w(s) \staterewards(s) ]   \colon \;\; \forall \lambda \in \{-1,1\}^{\nS}, \;\; \exists \;\; g \in X(\tilde\Wset) \text{ s.t. } \sum_k \lambda_k m_k(w,g) = 0
	}$$
and therefore in the disjunctive syntax, 
	$$ \max 
\braces{  \E
	[w(s) \staterewards(s) ]   \colon \;\; \underset{\lambda \in\{-1,1\}^{\nS}}{\wedge} \;\; 
	\left(
	\underset{ 
 g \in X(\tilde\Wset) 
}{\vee} \text{ s.t. }  \lambda_k m_k(w,g) = 0
	\right)
}$$
Next, by applying the distributive property of conjunctions and disjunctions (\cite{balas1998disjunctive}), we can express the conjunctive form into the disjunctive form, and then apply Theorem 2.1 to obtain the corresponding linear programming representation. Note that this operation generates exponentially many unions of disjunctions (each also exponential in cardinality of state space), and therefore admits a overall superexponentially-sized program. 
	\end{proof}

	\begin{proof}[Proof of \Cref{lemma-l1}]
		For brevity, denote
\begin{align*} P(\lambda) &\defeq	\min
		\{
		\sum_j	 \lambda_j m_k(w, g)
		\colon  g \in \tilde \Wset
		\}
 \\
P(\ell_1) &\defeq
		\min\{ \sum_j \abs{m_k(w, g)} \colon g \in \tilde \Wset\}
\end{align*} 
so that the lemma can be stated as: 
$$ 	P(\lambda)	 = 0, \forall \lambda \in \{-1,1\}^p
\iff P(\ell_1) = 0  $$

	First we argue $\impliedby$:
	Suppose not,  where $\{  \min \sum_j \abs{m_k(w, g)} = 0, \forall j  \}   $ is true (and achieved by some $w^*$)
	but there is some $\lambda^*$
	such that $\min\{ \sum_j \lambda_j^*  	m_k(w, \lambda)
	\colon a \leq w \leq b \} > 0$. 
	But $g^*$ was feasible for $P(\lambda^*)$, and $P(\ell_1) \leq 0$ implies that $m_k(w, g^*) = 0, \forall j$, so we could further reduce the value of $P(\lambda^*)$ at feasible $g^*$, contradicting optimality of $\lambda^*$ at a strictly positive value.
	
	In the other direction, $\implies$, suppose by way of contradiction that $P(\lambda)\leq 0,\;\; \forall \lambda$ but $P(\ell_1) > 0$ (equivalently, that $g$ is infeasible for $\Wset$). Let $\lambda^*_{\ell_1}$ be the particular sign pattern which achieves strict nonnegativity at optimality for $P(\ell_1)$. Then, the optimal value of $P(\lambda)$ at this particular choice of $\lambda^*_{\ell_1}$ is also strictly positive, $ P(\lambda^*_{\ell_1})  = P(\ell_1) > 0$ since it induces the same optimization objective over the same feasible set of weights, $\Wset$.
	
	By uniqueness of $w$ for feasible values of the unknown weights $g_k$ (by properties of ergodicity and uniqueness of the stationary distribution), it is not a relaxation to optimize over different weights for each $\lambda$ value.
\end{proof}

\begin{proof}[Proof of \Cref{lem-matrix-inversion-body}]
We recall the notation which encodes the estimating equation as the matrix $A$: we introduce the instrument functions
$\phi_s,\phi_{s'} \in \mathbb{R}^{\nS \times 1 }$, random (row) vectors which are one-hot indicators for the state random variable $s, s'$ taking on each value, $ \phi_s=  \begin{bmatrix}
\mathbb{I} [s=0] & \dots &  \mathbb{I} [s=\nS]
\end{bmatrix}$. Let $A(g) = \E[\phi_{s'}( {\pi_e(a\mid s)g_{s'}(a\mid s)} \phi_s - \phi_{s'})^\top]$ and $b_s=p_b^{\infty}(s)$. 
The set of $\Wpimarg \in \tilde\Wset$ that admit a feasible solution to the estimating equation for some $\wstat \in \Theta$ is 
$\psi\defeq \textstyle\{  g \in \tilde\Wset \colon  
\;\exists \; w\geq 0
\text{ s.t. } 
A(g) w = 0, b^\top w= 1  \}$

	$A$ has rank $\nS -1$ \textit{if} $g$ is feasible since satisfying the conformability constraint \Cref{eqn-Q-conformability-transitions,eqn-sharp-Q-flow} implies linear dependence on the rows of $A$:
	$$\sum_k ( w_k - \sum_{j,a}\Pstat_b (a,j \mid k) \pi_e(a\mid j) g_k(a \mid j ) w_j) =  0 $$

Define $\tilde A(g)$ by replacing the last row of $A(g)$ by $b$ and let $v=(0,\dots,0,1)$. Then $w = \tilde A^{-1} v$ which results in the following program, with $v =\begin{bmatrix}
	0_{\nS-1}& 1
	\end{bmatrix}^\top $: 
	$$ \inf ~/ ~\sup \left\{   \varphi^\top \tilde A^{-1} v
~~\colon~~ g\in\psi 
	\right\} $$
	Partial derivatives of the matrix-valued function of $g_k(a\mid j)$ follow from the matrix chain rule, where $J_{j,k}$ is a one-hot matrix with a 1 in the $j,k$ entry and 0 everywhere else:
	\begin{align*}
	\frac{ \partial \varphi^\top  \tilde A^{-1} v}{\partial g_k(a\mid j)} &{=} \sum_{i,j} \frac{\partial \varphi^\top  \tilde A^{-1} v }{ \partial \tilde A_{i,j} } \frac{\partial \tilde A_{i,j} }{\partial g_{k}(a\mid j) } \\
	& = -\sum_{i,j} \tilde A^{-\top }_{i,j} \varphi v^\top  \tilde A^{-\top }_{i,j}( 	\mathbb{I}[j \neq \nS]  \pi^e_{a,j} \Pstatcompactb_{j,a,k}  J_{j,k}  )\\
	& = -( 	\mathbb{I}[j \neq \nS]  \pi^e_{a,j} \Pstatcompactb_{j,a,k}  )( \tilde A^{-\top } \varphi w^\top)_{i,j}
	\end{align*} 
\end{proof}

	\section{Proof of \Cref{thm-consistency} (statistical consistency)}\label{apx-consistency}
	Consistency follows from stability of the optimization problem in terms of the deviations of empirical probabilities from their population values. The former is a result from variational analysis/stability analysis of linear programs, which is non-standard because the perturbations occur in the constraint matrix \textit{coefficients}. The latter is simply the convergence of empirical probabilities.

We prove statistical consistency by considering a support function estimate that discretizes the space (restricting attention to feasible $w$). We apply a stability analysis argument to argue consistency for every fixed $w$ in the discretization. Since $w$ is in a compact set (the $\Ss$ simplex), a covering argument over the solution space provides a bound via a union bound over elements of the discretization. Lastly, we bound the approximation error arising from the discretization. While the discretization approach provides a statistical consistency result (in the limit as $n\to \infty$ we assume the discretization grows finer), it is a tool of the analysis but not the algorithmic proposal.

\paragraph{Preliminaries: Stability Analysis}
Consistency follows from stability of the optimization problem in terms of the deviations of empirical probabilities from their population values. The former is a result from variational analysis/stability analysis of linear programs, which is non-standard because the perturbations occur in the constraint matrix \textit{coefficients}. The latter is simply the convergence of empirical probabilities.

%

Stability analysis establishes convergence in Hausdorff distance between $\hat \Theta$, $\Theta$, the partial identification set obtained from optimizing the sample estimating equation vs. from optimizing the population estimating equation. 
The Hausdorff distance between two sets $A,B \subset \mathbb R^d$ is $ 
{d}_{H}(A, B)=\max \left\{\sup_{a \in A} {d}(a, B), \sup _{b \in B} {d}(b, A)\right\}
$
where ${d}(a, B) = \inf_{b \in  B} \norm{ a - b} $; e.g. it measures the furthest distance from an arbitrary point in one of the sets to its closest neighbor in the other set. 

%

The main stability analysis result we use is Theorem 1 of \cite{robinson1975stability}. To help keep the presentation of the theorem self-contained, we state some preliminary notation. The paper considers the general case of a system of linear inequalities, 
where $A$ is a continuous linear operator from $X$ into $Y$ which are real Banach spaces, and $K$ is a nonempty closed convex cone in $Y$. We study 
\begin{equation}\label{eqn-unperturbed-system}Ax \leq_K b, \forall x \in C
\end{equation} with $C \subseteq X$ a convenience set to represent \textit{unperturbed} constraints. We want to ascertain the
\textit{stability region} of the solution set $G$, which implies that for \textit{each} $x_0 \in G$, for some positive number $\beta$, and for any continuous linear operator $A' \colon X \mapsto Y$ and any $b' \in Y$, the distance from $x_0$ to the solution set of the perturbed system, 
\begin{equation}\label{eqn-perturbed-system}A'x \leq_K b',
\end{equation}  is bounded by $\beta \rho(x_0)$, with $\rho(x)$ being the residual vector,
$$\rho(x) \defeq d (b' - A'x, K) \defeq \inf \{ \norm{b' - A'x - k } \mid k \in K \}.$$

For a more concise statement of the main stability analysis result, we introduce the augmented operator with an auxiliary dimension to homogenize the system $Q\colon X \times \mathbb R \mapsto Y $: for finite-dimensional systems of linear equations, this is the usual homogenization.  
$$ Q (\begin{bmatrix}
x \\  \xi
\end{bmatrix})
= \begin{cases}
\begin{bmatrix} A & -b \end{bmatrix} 
\begin{bmatrix}
x \\ \xi
\end{bmatrix}
+ K 
&, \begin{bmatrix}
x & \xi 
\end{bmatrix}^\top  \in P,
\\ \infty & \begin{bmatrix}
x & \xi 
\end{bmatrix}^\top,  \not\in P
\end{cases} $$
Now, under this notation, $x \in C$ satisfies $Ax \leq_K b, \forall x \in C$ iff $ 0 \in Q(\begin{bmatrix}
x \\  \xi
\end{bmatrix})$.
The result will leverage properties of linear operators as special cases of convex processes (which are themselves multivalued functions between two linear spaces): If $T$ carries $X$ into $Y$, with $X,Y$ normed linear spaces, then the \textit{inverse} of $T$ is $T^{-1}$, which is defined for $y \in \mathcal Y$ by $$T^{-1} y \defeq \{ x \mid  y \in T x \}.$$ 
$T$ is closed if $\op{gph}(T) \defeq \{ (x,y)  \mid y \in Tx \} $ is closed on product space $X \times Y$. The norm of $T$ is operator norm. 

This approach then allows us to identify another linear operator which parametrizes the \textit{perturbation} $\Delta ( \begin{bmatrix}
x \\ \xi
\end{bmatrix})$ defined analogously to $Q$ but with $(A'-A)x - (b'-b)\xi + K$. The size of the perturbation is measured by the operator norm of this system, and a crude bound is $\norm{ \Delta} \leq \norm{A' - A} + \norm{b' - b}$; we also have that $\rho(x) \leq \norm{\Delta} \max\{ 1, \norm{x} \}$.

\begin{assumption}
	Regularity: $b \in \op{int} \{ A(C) + K\}$ and singular otherwise. 
\end{assumption}
The required regularity assumption is similar to strict consistency of \cite{rockafellar1970convex}, which states that $0 \in \op{int}(\op{dom})(G)$, e.g. there exists $u,v$ such that all inequality constraints $f < 0$ hold strictly. 

Finally, having introduced the homogenized system $Q$ and the perturbation $\Delta$, we state the required theorem: $Q' = Q + \Delta$ is the perturbed augmented system. 
\begin{theorem}[Linear system stability (Theorem 1, \cite{robinson1975stability})]\label{thm-lp-stability}
	Suppose that the system \cref{eqn-unperturbed-system} is regular. Then $Q$  { is surjective, } $ \left\|Q^{-1}\right\|<+\infty,$ and if $\left\|Q^{-1}\right\|\|\Delta\|<1$, then is also regular (hence solvable), with $\left\|Q^{\prime-1}\right\| \leqq\left\|Q^{-1}\right\| /\left(1-\left\|Q^{-1}\right\|\|\Delta\|\right)$ . Further, if $G^{\prime}$ denotes the solution set of \cref{eqn-perturbed-system}, then for any $ x \in C$ with $\left\|Q^{\prime-1}\right\| \rho(x)<1$ we have 
	\begin{equation}
	\qquad d\left(x, G^{\prime}\right) \leqq\left[\frac{\| Q^{\prime-1}\|\rho(x) }{1-\left\|Q^{\prime-1}\right\| \rho(x)}\right](1+\|x\|)
	\end{equation}
\end{theorem}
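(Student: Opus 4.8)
The plan is to reproduce Robinson's argument, whose backbone is the theory of closed convex processes together with an open-mapping (Baire-category) theorem for them, followed by two successive-approximation estimates. The homogenized map $Q\colon X\times\mathbb R\rightrightarrows Y$ is constructed precisely so that its graph $\op{gph}(Q)=\{((x,\xi),y)\colon y\in Q(x,\xi)\}$ is a closed convex cone in $(X\times\mathbb R)\times Y$; thus $Q$ is a \emph{closed convex process}, its inverse $Q^{-1}$ is again a convex process, and the scalars $\|Q^{-1}\|,\|\Delta\|$ in the statement are the convex-process operator norms $\|T\|=\sup_{\|z\|\le 1} d(0,Tz)$ evaluated at $T=Q^{-1}$ and $T=\Delta$. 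The three assertions — finiteness of $\|Q^{-1}\|$ with surjectivity of $Q$, the perturbed norm bound on $\|Q^{\prime-1}\|$, and the distance estimate for $G'$ — will be established in that order, with the perturbation $Q'=Q+\Delta$ treated throughout as another closed convex process.

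\textbf{Step 1 (regularity $\Rightarrow$ surjectivity and $\|Q^{-1}\|<\infty$).} First I would translate the regularity hypothesis $b\in\op{int}\{A(C)+K\}$ into the statement $0\in\op{int}(\op{range}\,Q)$: because the homogenizing coordinate $\xi$ ranges freely and $b$ lies in the \emph{interior} of the attainable set $A(C)+K$, the range of $Q$ is all of $Y$. The key analytic input is then the open-mapping theorem for closed convex processes (the Ursescu--Robinson theorem, itself a consequence of the Baire category theorem in the Banach space $Y$): a closed convex process from one Banach space into another whose range has nonempty interior is open at the origin, and \emph{quantitatively} there is a finite $\lambda$ so that every $y$ admits a preimage $z\in Q^{-1}y$ with $\|z\|\le\lambda\|y\|$. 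This $\lambda$ is exactly $\|Q^{-1}\|$, proving $\|Q^{-1}\|<+\infty$ and surjectivity. This is the only place where completeness of $Y$ is used essentially.

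\textbf{Steps 2 and 3 (the two contraction arguments).} Writing $\lambda:=\|Q^{-1}\|$ and $\mu:=\|\Delta\|$ with $\lambda\mu<1$, I would show $Q'=Q+\Delta$ is surjective with $\|Q^{\prime-1}\|\le\lambda/(1-\lambda\mu)$ by a Neumann-series argument that is the convex-process analogue of $(I-M)^{-1}=\sum_k M^k$: given $y$, set $z_0\in Q^{-1}y$ with $\|z_0\|\le\lambda\|y\|$ and choose each increment $z_{n+1}-z_n$ in $Q^{-1}$ of the residual produced by the $\Delta$-term, so $\|z_{n+1}-z_n\|\le\lambda\mu\,\|z_n-z_{n-1}\|$; the condition $\lambda\mu<1$ makes the tail geometric, the partial sums converge (by completeness of $X$) to $z$ with $y\in Q'(z)$ and $\|z\|\le\frac{\lambda}{1-\lambda\mu}\|y\|$, and taking $\sup_{\|y\|\le1}$ gives the bound on $\|Q^{\prime-1}\|$. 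For the distance estimate I would run the same scheme but starting from the infeasible point $x\in C$ instead of from the origin: with residual $\rho(x)=d(b'-A'x,K)$, pick a correction $\delta_0$ with $\|\delta_0\|\le\|Q^{\prime-1}\|\rho(x)$ that moves $b'-A'x$ into $K$ up to a residual shrunk by the factor $\|Q^{\prime-1}\|\rho(x)$, iterate, and sum the geometric series of correction norms; the telescoping sum yields $d(x,G')\le\sum_n\|\delta_n\|\le\frac{\|Q^{\prime-1}\|\rho(x)}{1-\|Q^{\prime-1}\|\rho(x)}$ up to homogenization, and the extra multiplier $(1+\|x\|)$ appears exactly because $Q$ acts on the augmented variable $(x,\xi)$, so transporting the estimate from $X\times\mathbb R$ back to $X$ costs the norm $\|(x,1)\|\le 1+\|x\|$.

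\textbf{Main obstacle.} The crux, and the only genuinely hard part, is Step~1: extracting a \emph{uniform, quantitative} bounded inverse for the closed convex process $Q$ out of the mere interiority hypothesis $b\in\op{int}\{A(C)+K\}$. This rests on the open-mapping theorem for convex processes, a Baire-category argument in the target Banach space; once the constant $\|Q^{-1}\|<\infty$ is in hand, Steps~2 and 3 are routine geometric-series contraction arguments whose convergence is governed respectively by $\|Q^{-1}\|\|\Delta\|<1$ and $\|Q^{\prime-1}\|\rho(x)<1$.
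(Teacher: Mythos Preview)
Your proposal is a faithful reconstruction of Robinson's original argument and is correct in outline, but you should note that the paper does not actually prove this theorem: it is quoted verbatim as Theorem~1 of \cite{robinson1975stability} and used as a black box in the proof of consistency (\Cref{thm-consistency}). There is therefore no ``paper's own proof'' to compare against. That said, your three-step plan---open-mapping theorem for closed convex processes to get $\|Q^{-1}\|<\infty$, then a Neumann-series contraction for the perturbed norm, then the same scheme started at the infeasible $x$ for the distance estimate---is precisely Robinson's own route, and your identification of the Ursescu--Robinson open-mapping step as the only genuinely deep ingredient is accurate.
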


 We now use these results to prove consistency. 
 \proof{Proof of \Cref{thm-consistency}}

We consider the proposed support function estimator which conducts a grid search over an $\epsilon$-covering, $\mathcal{E}_w$ covering 
${\mathbb R}^{\nS}_+$ such that $\mathcal{E}_w$
is the smallest set satisfying that $\min_{w' \colon }\norm{w - w'}_1 \leq \epsilon_{}\;\forall w \text{ s.t. } w^\top \vec{1}$. For simplicity, we consider the self-normalized version of the estimator that constrains $ w^\top \vec{1}  = 1$: convergence for the stochastic constraint $ \E[w] =1$ holds by additionally conditioning on the event $\norm{p_b^\infty (s) }_1 \leq \epsilon$ under fast geometric convergence of the stationary distribution.

The discretization-based support function estimator is: 
\begin{align*}\hat{\overline{R}}_e &= \max 
\braces{ \E_n[ w(s) \Phi(s) ] \;\; \colon  w \in \mathcal E_w , \; w\in \hat\Theta ,w^\top \vec{1}  =1 }\\
\text{ where }
\hat\Theta &\defeq \{ w \in \mathcal E_w \colon \sum_{j,a}
\hat{h}_{j,a,k}( w) \Wpimarg_k(a\mid j)  -w(k)  = 0,\; \forall k \in \Ss; \Wpimarg \in \tilde\Wset \}\\
\hat{h}_{j,a,k}( w) &\defeq \hat{p}^{(\infty)}_{b} ( j, a  \mid k  )\wstat(j) \pi_e (a \mid j).
\end{align*} 

We will bound the approximation error 
$$ \norm{\hat{\overline{R}}_e  - \overline{R}_e } \leq \norm{ \hat{\overline{R}}_e - \hat{\tilde{\overline{R}}}_e} +  \norm{  \hat{\tilde{\overline{R}}}_e - \tilde{\overline{R}}_e}
+ \norm{
	\tilde{\overline{R}}_e-
{\overline{R}}_e}
$$
with respect to the intermediary terms,
\begin{align*}
\hat{\tilde{\overline{R}}}_e &= \max 
\braces{ \E_n[ w(s) \Phi(s) ] \;\; \colon  w \in \mathcal E_w , \; w\in \Theta ,w^\top \vec{1}  =1 }\\
\tilde{\overline{R}}_e &= \max 
\braces{ \E[ w(s) \Phi(s) ] \;\; \colon \; w \in \mathcal E_w, w\in \Theta ,w^\top \vec{1}  =1 }\\
{\overline{R}}_e &= \max 
\braces{ \E[ w(s) \Phi(s) ] \;\; \colon \; w\in \Theta ,w^\top \vec{1}  =1 }\\
\end{align*} 
Bounding $\norm{ \hat{\overline{R}}_e - \hat{\tilde{R}}_e}$ follows by verifying stability of the feasibility problem $\{ \sum_{j,a}
\hat{h}_{j,a,k}( w) \Wpimarg_k(a\mid j)  -w(k)  = 0,\; \forall k \in \Ss\}$ for every such value of $w$, taking a union bound over the covering, and then applying stability of $w$ for a given feasible $g$ to bound the objective values.
%


$G(w), \hat G(w)$ correspondingly denote the solution sets (in the space of $g$) of the feasibility program, for a fixed $w$ vector. We first bound $\norm{ \hat{\overline{R}}_e - \hat{\tilde{\overline{R}}}_e}$ by bounding the distance between the solution set for $G(w), \hat G(w)$ for a fixed $w$. We then apply the bound for every $w$ on a covering of the $\nS$ simplex.

To map the problem quantities to the stability analysis notation, let the set of unperturbed constraints be $C \defeq \{ x \in \Wset \}$.
Observe that $\norm{Q'^{-1}} = \max_{\norm{y} \leq 1} \{ \norm{x} \colon  Q' x = y , x \in C \} \leq \nS^2 \nA \nu$ because the \textit{unperturbed} constraints include bounds constraints on $g$.

Bounding $\norm{\Delta}$ proceeds by observing that since the perturbation matrix to the coefficients comprises of terms $$
\sum_{j,a} w(j) \pi_e(a\mid j)  (\hat p^\infty_b (j, a \mid k' ) -  p^\infty_b (j, a \mid k' )),$$ 
applying
a crude bound that $w(j), \pi_e(a \mid j) \leq 1$,
it is sufficient to bound the operator norm of perturbations as $$\norm{\Delta}_1 \leq  \norm{ \hat{p}^\infty_b(s, a \mid s' )- \Pstat_b(s, a \mid s' )}_\infty.$$ 

Consistency of the empirical state-action probabilities for the population state-action probabilities yields the result, e.g. that $\hat p^\infty_b (s, a , s' )  \to_p  p^\infty_b (s,a,s' )$, and $\hat p^\infty_b (s)  \to_p  p^\infty_b (s)$. (See \cite{mannor2005empirical} for quantitative rates). 
Consider $n$ large enough so that the condition $\left\|Q^{\prime-1}\right\| \rho(x)<\frac 12$ holds. 
Then,
\begin{equation}\label{bnd-fixw} d(\hat x, G) \leq
\left[\frac{\| Q^{\prime-1}\|\rho(x) }{1-\left\|Q^{\prime-1}\right\| \rho(x)}\right](1+\|x\|) \leq 2
 (1+\nS^2 \nA \nu)^2
\norm{
	\hat{p}^\infty_b(s, a \mid s' )  - 
	\Pstat_b(s, a \mid s' )}_\infty
\end{equation} for any $\hat x \in \hat G$, with probability 1; and analogously for $d( x, \hat G)$. The above bound holds for a fixed $w$.

We next bound $\textstyle \norm{ \hat{\overline{R}}_e - \hat{\tilde{\overline{R}}}_e}$. 
Taking a union bound over finitely many $w \in \mathcal E_w$ so that the bound \eqref{bnd-fixw} holds uniformly over $\mathcal E_w$, there exists $n_1$ large enough such that 
\begin{equation}\label{lemma-gclose} \max_{w \in \mathcal E_w} d(\hat G(w), G(w)) \leq \delta.
\end{equation}
This in turn suggests finite-time identification of the feasible set, $\hat \Theta(\mathcal E_w) = \Theta(\mathcal E_w) $, so that $ \hat{\overline{R}}_e = \hat{\tilde{R}}_e $.

We next bound $ \norm{  \hat{\tilde{\overline{R}}}_e - \tilde{\overline{R}}_e}$. For every $\epsilon'$, by geometric convergence of the stationary distribution $\hat{p}^\infty_b(s)  - {p}^\infty_b(s)$ and taking a union bound over $w \in \mathcal{E}_w$ to establish uniform convergence, there exists $n_2$ large enough so that:
\begin{align*}
 \norm{  \hat{\tilde{\overline{R}}}_e - \tilde{\overline{R}}_e} \leq \max_{w \in \Theta} \{ \E_n[ w(s) \Phi(s) ] - \E[w(s) \Phi(s)] \} \leq \epsilon'.
\end{align*} 
Lastly, $\norm{
	\tilde{\overline{R}}_e-
	{\overline{R}}_e}$ is bounded by the uniform approximation error which is satisfied by the definition of the covering, and bounded state rewards $\Phi(s)$. Therefore, we have that for some $n \geq\max\{ n_1, n_2\}$,
$$ \norm{\hat{\overline{R}}_e  - R_e } \leq \epsilon' + \epsilon
$$
Therefore we obtain statistical consistency as we take the discretization width $\epsilon \to 0$ as $n\to \infty$.

\endproof

\subsection{Linear function approximation}\label{apx-linearfunctionapprox}

Note that since the tabular case is a special case of linear function approximation of the state space, our approach also handles the case where $w = \theta^\top s$ is a linear parameter of the state observation. 

We introduce some simplifications. 
$\Psi_k^{i: t,t+1} =  ( \phi(s_{t+1}^i ) \phi(s_t^i)^\top)_k$ is the $k$th row vector (resp. $\Psi_k^{t,t}$) and 
\begin{align*}
\bar{\phi}(s) &= \frac{1}{NT}\sum_{i=1}^N \sum_{t=1}^T \phi(s_{t}^i) = \E_N\E_T[\phi(s)]\\
\bar{\Psi}^{ t+1,t+1} &= \frac{1}{NT} 
\sum_{i=1}^N \sum_{t=1}^T   \phi(s_{t+1}^i ) \phi(s_t^i)^\top
\end{align*}
The corresponding finite-sample feasibility oracle for the case $w = \theta^\top s$, drawing on the least-squares representation of \cref{lem-matrix-inversion-body}, is: 
\begin{align*}
F(\theta) \defeq	\min_{g \in \Wset,z\geq 0}\;& 
\sum_k z_k  \\
\st &	z_k\geq \frac{1}{NT} 
\sum_{i=1}^N \sum_{t=1}^T 
\left(\pi^e_{i,t} 
\Psi_k^{i: t,t+1} \theta 
\cdot g_{i,t}
- \Psi_k^{i: t+1,t+1}\theta 
\right),
, \;&& \forall k \in [d]\\
&	z_k\geq -\frac{1}{NT} 
\sum_{i=1}^N \sum_{t=1}^T 
\left(\pi^e_{i,t}
\Psi_k^{i:t,t+1} \theta 
\cdot g_{i,t}
- \Psi_k^{i: t+1,t+1}\theta 
\right) 
, \;&&  \forall k \in [d]\\
&   l_{i,t} \leq g_{i,t}\leq m_{i,t} , \;\; \forall i \in [N], t \in [T] \\
&
\frac{1}{NT} \sum_{i=1}^N \sum_{t=1}^T \mathbb{I}[A_{it} = a']   g_{i,t} = 1, \forall a'\\ 
\end{align*}
To obtain bounds, using $\E_N \E_T$ as shorthand notation for empirical expectations over trajectories and timesteps within trajectories, one then solves: 
\begin{equation}\label{eqn-opt-Fw-linear}
\inf/\sup~\{ \E_N\E_T[\Phi(s)]^\top \theta  \colon F(w) \leq 0,~ \E_N\E_T[\phi(s)]^\top \theta  \}.
\end{equation}
since $\E_N\E_T[w(s)] = \E_N\E_T [ \phi(s)^\top\top \theta ] = \E_N\E_T[\phi(s)]^\top \theta $.

Finally we remark on additional difficulties in the linear function approximation setting that are not encountered in the tabular setting. The computational burden increases because the reparametrization argument of \cref{lemma-weight-reparametrization} crucially relied on the discrete distribution of $S$. In practice, introducing a number of bilinear variables which grows with $n$ performs quite poorly in comparison to the re-parametrization. Furthermore, the approach \textit{requires} well-specification of the linear function class for $w$, since we effectively require realizability of a linear parameter for some set of possible inverse propensity weights. Handling model misspecification via a feasibility relaxation for the approximation error of the function class may therefore tradeoff sharpness; we leave this for future work.  

\section{Additional Empirics and Details}\label{apx-empirics}

\paragraph{Confounded gridworld}
In \Cref{fig-gridworld} we include a depiction of the 3x3 gridworld as described in the main text describing the reward structure in greater detail. (If an action moves an agent into a wall, it simply remains in place). 

\paragraph{SDP relaxation}
One possibility is to consider the standard SDP relaxation for nonconvex quadratic programs \cite{optimization2004s}. Let $x = \begin{bmatrix}
g \\ w
\end{bmatrix}$, and $P_k$ be the matrix that generates the quadratic form $ \sum_{a,j} w(j) \Pstat_b(s,a,s) g_k(a\mid j) \pi_e ( a \mid j)$, e.g. we have that $P^k_{(k,a,j), \nS^2   \nA+j } = \pi_e(a \mid j) \Pstat_b(s,a,s), \forall a,j$. Then a standard relaxation gives that the solution to the following semidefinite program is a lower bound: 
\begin{align*}
\max\braces{ \E[ w(s) \Phi(s)] \colon \op{Tr}(X P_k ) - w(k)\Pstat_b(k) = 0, \forall k; \;\; g \in \tilde{\Wset}, \E_b[w(s)]=1, \begin{bmatrix}
	X & x \\x^\top & 0 
	\end{bmatrix}  \succeq 0}
\end{align*}
However, the lifting results in a $(\nS^2 \nA)$ square matrix; given that under mild assumptions, typical SDPs can be solved in $O(n^2 m^{5/2} \log(\epsilon^{-1}))$ where $n$ is dimension of the vector variable and $m$ is dimension of the matrix variable, we incur an intractable $O(\nS^9 \nA^{9/2})$ scaling overall.

\paragraph{Details for confounded random walk }
Note that the stationary distribution under $\pi_b$ is, for $s_1, s_2$: 
$$
\left( \frac{-1+(1-2\pi_{s_2u_1})(p_{u_1} + p_{u_2}) + \pi_{s_2u_1} }{-1 + (\pi_{s_2u_2} - \pi_{s_1u_1})(1-2p_{u_1} -2p_{u_2} )} , 
\frac{p_{u_1}+p_{u_2} + \pi_{s_1u_1} (1 - 2p_{u_1} - 2p_{u_2}) }{1 + (\pi_{s_1u_1} - \pi_{s_2 u_1}  )(1 - 2 p_{u_1} - 2p_{u_2})}\right)
$$

\paragraph{\cref{alg:pgd} vs. global optimization}
We compare the results from using \cref{alg:pgd} (nonconvex projected gradient descent) vs. solving to full optimality by Gurobi. We find that empirically, storing previous solutions and imposing monotonicity (e.g. for any $\Gamma$, taking the max of all previous returned values and the computed values) helps stabilize the optimization. 

\begin{figure}[t!]
	\begin{minipage}{0.3\textwidth}
	\includegraphics[width=0.8\textwidth]{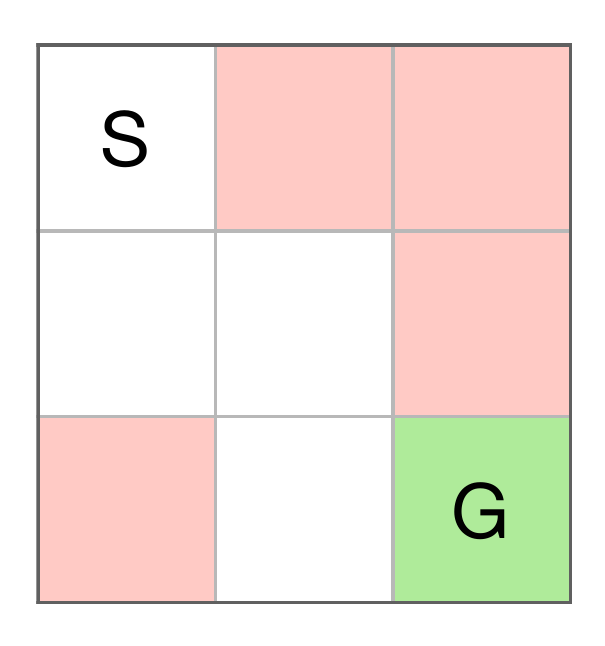}
	\caption{3x3 gridworld. \\$\Phi(s)$ of green is 1, red -0.3.}
	\label{fig-gridworld} 
\end{minipage}
\begin{minipage}{0.7\textwidth}
	\centering
	\includegraphics[width=0.8\textwidth]{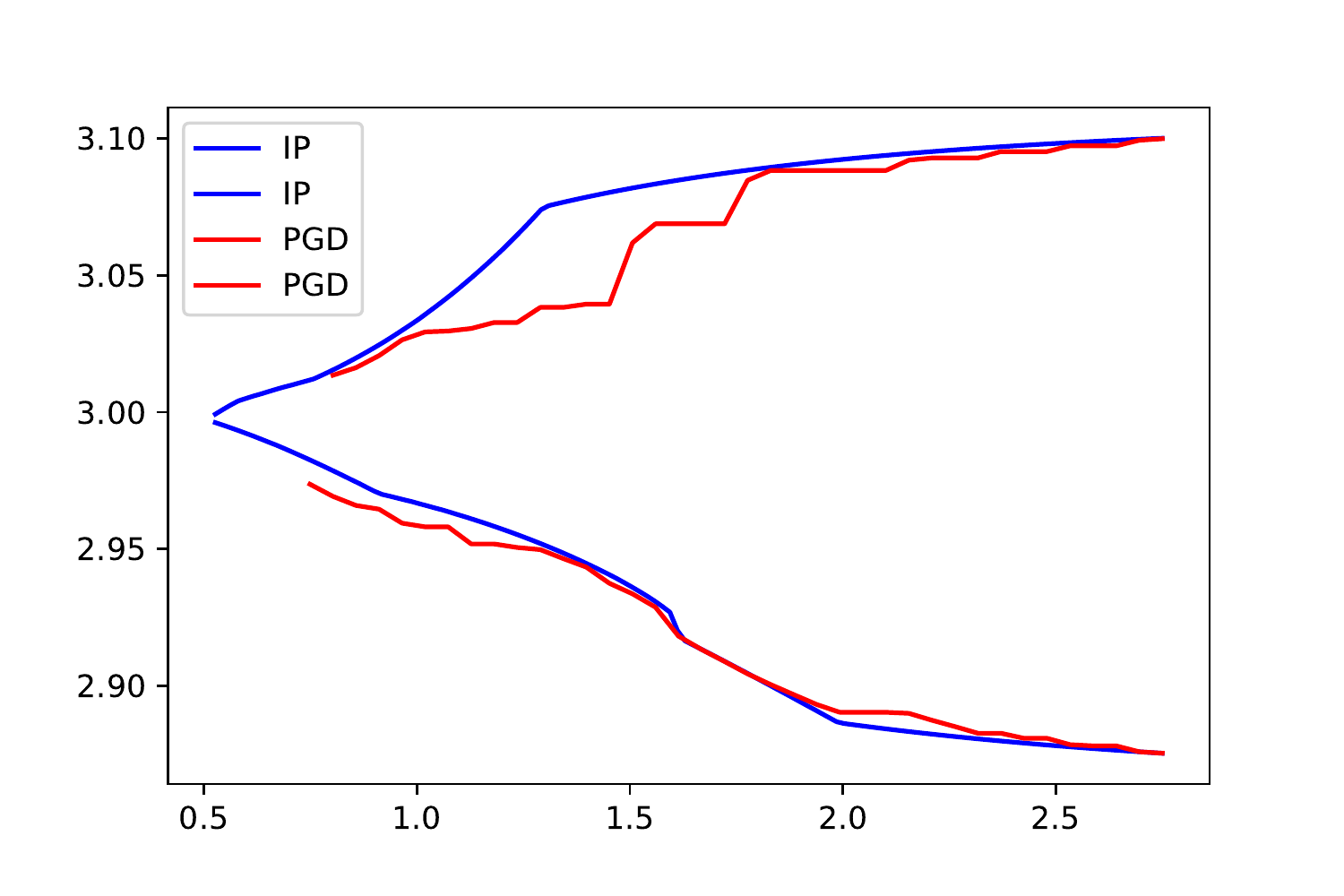}
	\caption{Comparison of global optimization and \cref{alg:pgd}.}
\end{minipage}
\end{figure}


\paragraph{Computational details}
A complete description of the data collection process, including sample size.
\begin{itemize}
\item Data: bounds computed based on a trajectory with 40000 steps, and a grid of 25 linearly spaced $\Gamma$ values from $\log(\Gamma) \in [0.1,1.7]$ (equivalently, $\Gamma \in [1.10, 5.47]$). 
\item Experiments were run on a Macbook Pro with 16gb RAM.
\item Packages: Python (numpy/scipy/pandas), Gurobi Version 9
\end{itemize}

\end{document}